%% file: arxiv.tex
\documentclass{article}

\usepackage[a4paper,margin=1in]{geometry}

\usepackage{amsmath, amssymb, amsthm}
\usepackage{mathtools}
\usepackage{algorithm}
\usepackage{algpseudocode}
\usepackage{tikz}

\usepackage{mathtools}
\usepackage{graphicx}
\usepackage{caption}
\usepackage{booktabs}
\usepackage{wrapfig}
\usepackage{subcaption}
\usepackage{tabularx}
\usepackage{siunitx} 
\usepackage{multirow}
\usepackage{makecell}
\usepackage{url}
\usepackage{xcolor}

\newcolumntype{Y}{>{\centering\arraybackslash}X}
\usepackage{natbib}
 \bibpunct[, ]{(}{)}{,}{a}{}{,}%
\newtheorem{theorem}{Theorem}
\newtheorem{proposition}{Proposition}
\newtheorem{lemma}{Lemma}

\theoremstyle{definition}

\theoremstyle{remark}
\newtheorem{remark}{Remark}
\newtheorem{assumption}{Assumption}

\title{End-to-End Fairness Optimization with Fair Decision-Focused Learning}
\author{
  Yu Wang \\
  School of Business \\
  Stevens Institute of Technology \\
 \texttt{ywang88@stevens.edu} \\
    \and
  Violet (Xinying) Chen \\
  School of Business \\
  Stevens Institute of Technology \\
  \texttt{vchen3@stevens.edu} \\
}
\date{}

\begin{document}

\maketitle

\begin{abstract}
Many real-world systems rely on predictive models to inform decisions, and fairness concerns arise in both the prediction and decision stages. 
We introduce end-to-end fairness optimization (E2EFO) as a unifying framework that integrates fairness across the prediction-to-decision pipeline.
We focus on resource allocation with group-based fairness: the prediction task estimates allocation impacts while limiting accuracy disparity across groups, and the decision task distributes those impacts equitably by optimizing a group-based alpha-fairness measure. 
Within this framework, we propose fair decision-focused learning (FDFL), a training paradigm that jointly accounts for prediction accuracy, prediction fairness, and decision regret -- the loss in decision fairness due to imperfect predictions. 
FDFL trains the predictor by gradient descent, combining the objective gradients through multi-task learning techniques.
The core computational challenge is the decision Jacobian with respect to the predictor parameters: we derive exact closed-form formulas for a tractable class of fair allocation and apply a differentiable optimization layer in the general case. We further establish a finite-sample generalization bound for the scalarized FDFL objective.
Numerical experiments on a healthcare-based single resource allocation and a synthetic multiple resource allocation 
illustrate the value of jointly accounting for prediction fairness and decision fairness in prediction-informed decision-making.
\end{abstract}

\section{Introduction}\label{sec:Intro}
Data-driven predictions increasingly inform real-world decisions in critical domains. In healthcare, forecasts of medical needs guide the allocation of scarce care resources \citep{doi:10.1126/science.aax2342}; in social services, predicted risk informs screening for intervention \citep{chouldechova2018case}; in finance, loan processing decisions account for the expected creditworthiness of applicants. These processes typically consist of two stages: a \emph{prediction} task estimates unknown parameters, and a subsequent \emph{decision} task uses these estimates as input.

Many such applications require fairness, and it is worth distinguishing between fairness in how predictions are obtained and fairness in how decisions are made \citep{paulus2020predictably,scantamburlo2024prediction}. \emph{Prediction fairness} seeks to eliminate undesirable disparities in predictions, reflecting concerns about statistical properties of the predictor. \emph{Decision fairness} aims at equitable decision outcomes, reflecting moral concerns about the justice of decisions. \cite{kuppler2022fair} argue that both aspects should be jointly addressed.

A stylized example illustrates the potential insufficiency of pursuing either fairness alone. A decision maker allocates a fixed amount of resources to recipients in an advantaged group $A$ and a disadvantaged group $D$; each recipient derives a benefit from the resource, and a predictive model estimates these benefits from historical data carrying systemic bias against $D$. A standard accuracy-maximizing predictor tends to be more accurate
on $A$ and underestimate benefits in $D$. Suppose the decision maker responds with a fair decision policy that equalizes total benefits across the two groups. Because the policy takes the imperfect predictions as input, it may compensate $D$ by allocating disproportionately more resources than their true benefits justify, creating unfairness against $A$. Alternatively, suppose the decision maker responds with a fair predictor that 
reduces the underestimation at the cost of noisier estimates, and allocates to maximize the total predicted benefit. The allocation now favors $A$, which contains more high-benefit recipients, leaving $D$ under-served. Neither intervention suffices for fairness throughout the prediction-to-decision pipeline. 


We study this joint problem, which we call \emph{End-to-End Fairness Optimization} (E2EFO). The prediction task pursues accuracy for reliability while accounting for prediction fairness to mitigate unfair bias; the decision task, formalized as an optimization model that maximizes a decision fairness objective subject to feasibility constraints, seeks equitable outcomes.

We focus on resource allocation as the decision task. An \emph{allocation instance} is a set of stakeholders, partitioned into groups, with observable features and unknown allocation impacts. A predictor estimates the impacts, and the decision model distributes limited resources among the stakeholders. We specify each stakeholder's utility to be linear in the resources received, the feasible allocation set to be convex and compact, and the decision fairness objective to be a group-based $\alpha$-fairness measure of the induced utilities. With these specifications, the decision Jacobian with respect to the input prediction is computable (Section \ref{sec:grad-compute}).

E2EFO can be solved under two data-driven paradigms. Predict-then-optimize (PTO) methods address the two tasks separately \citep{bertsimas2020predictive,kannan2024residuals}: predictions are made first to achieve desirable accuracy and fairness, then input into a fairness-embedded optimization model. Decision-focused learning (DFL) methods instead train the predictor to directly optimize downstream decision quality \citep{donti2017task,wilder2019melding,spo2020}. 
In both paradigms, decision quality is measured by \emph{regret}, the loss in the decision objective from acting on predictions rather than the ground-truth parameters (formalized in Section~\ref{sec:prob}).
DFL improves regret over PTO when the two tasks are misaligned, that is, when good predictions do not guarantee good decisions \citep{spo2020,mandi2024decision}.
We mainly explore the DFL perspective because adopting different fairness perspectives in prediction and decision can expose E2EFO to such misalignment; our experiments identify the conditions under which it does.

E2EFO evaluates a predictor along three dimensions: prediction error, prediction disparity measuring the gap in prediction error across groups, and decision regret, all to be minimized. Existing DFL methods target decision regret alone to pursue decision quality. Optimizing all three jointly is a multi-objective problem, for which multi-task learning (MTL) \citep{sener2018multi,chen2024three} offers a principled paradigm. MTL trains one machine learning model against several objectives by combining their gradients, via static scalarization \citep{kendall2018multi} or dynamic conflict-avoidant combination \citep{desideri2012multiple,yu2020gradient}. We propose \emph{Fair Decision-Focused Learning} (FDFL), a family of training algorithms for E2EFO that draws on DFL methods to make training decision-aware and on MTL methods to balance the three objectives.

We summarize our main contributions as follows. 
\begin{enumerate}
    \item We propose E2EFO as a unifying framework for incorporating fairness into prediction-informed decision-making. Within this framework, we focus on group-based fairness: prediction fairness measures the disparity in prediction accuracy using the mean absolute deviation (MAD) of group-specific prediction errors, and decision fairness measures the equity of the utility distribution induced by allocations with a two-level group-based $\alpha$-fairness function. 
    \item We develop FDFL algorithms to train predictors with the three E2EFO objectives.
    Under gradient-based training, we use MTL techniques, including static scalarization and dynamic conflict-avoidant combinations, to combine the objective gradients. The decision-regret gradient requires differentiating through the decision model: we derive an exact closed-form decision Jacobian for the single-budget $\alpha$-fair allocation, and apply differentiable convex optimization layers from \citet{agrawal2019differentiable} for general convex feasible sets where no closed form is available.
    \item We establish a finite-sample generalization guarantee for scalarized FDFL training: with high probability, the composite excess risk of the empirical minimizer is $O\big(E_\Theta\sqrt{q/N}\big)$, under boundedness and Lipschitz regularity conditions. We verify the required conditions for our $\alpha$-fairness optimizing allocation tasks.
    \item We evaluate FDFL methods and fairness-embedded PTO baselines on a single-resource allocation built from a healthcare context, and on a synthetic multiple resource allocation with group imbalance varied by construction. Experimental results demonstrate the value of multi-objective training and identify the regimes in which each fairness objective delivers its gains. 
\end{enumerate}
The remainder of the paper is organized as follows. Section~\ref{sec:literature} reviews related work. Section~\ref{sec:prob} formulates the E2EFO problem and specifies the fairness measures. Section~\ref{sec:FDFL-alg} presents the FDFL training algorithms, the decision Jacobian computation, and a generalization guarantee. Section~\ref{sec:exp} reports experimental results. Section~\ref{sec:conclusion} concludes.

\section{Related Work} \label{sec:literature}
We review three lines of literature: (1) prediction-informed decisions under the PTO or DFL paradigms; (2) multi-task learning for handling multiple training objectives; and (3) fairness in prediction and in optimization. The first two lines provide the methodological foundation for our framework, while the third motivates it.

\subsection{Prediction-Informed Decisions}
Prediction-informed decisions belong to the broad field of data-driven prescriptive decision-making, 
also known as data-driven optimization or contextual optimization. We adopt the term prediction-informed decisions to emphasize the combination of prediction techniques from machine learning with the optimization methodology. The survey by \citet{sadana2025survey} organizes this literature under three paradigms: decision rule optimization, sequential learning and optimization, and integrated learning and optimization. The latter two share a structure in which a prediction model is learned prior to optimizing decisions, which aligns with our setting, so we review these two paradigms. 

Sequential learning and optimization, also referred to as predict-then-optimize (PTO), first trains a prediction model to estimate uncertain parameters and then plugs the estimates into a decision optimization model. The predictor may produce a point estimate, yielding a deterministic view of the downstream optimization, or a conditional distribution, yielding a stochastic view that accounts for prediction uncertainty. In this sequential paradigm, the stochastic view is commonly adopted \citep{bertsimas2020predictive,kannan2024residuals}. As \citet{sadana2025survey} note, the two views coincide when the decision objective is linear in the predictions, since estimating the conditional distribution reduces to
estimating its expectation. We adopt the deterministic view in this work.

Integrated learning and optimization, also referred to as decision-focused learning (DFL), trains the predictor to directly optimize decision performance, enabling the prediction component to anticipate its impact on the decisions. DFL has been studied for a variety of decision tasks, including linear programming \citep{spo2020}, quadratic programming \citep{amos2017optnet, agrawal2019differentiable}, and general nonlinear optimization \citep{shah2022decision}. Recent work develops scalable, general-purpose techniques, such as directional gradients \citep{huang2024decision} and landscape surrogates \citep{zharmagambetov2023landscape}. These methods differ primarily in the training loss and in how gradients are computed for backpropagation; we refer to \citet{mandi2024decision} for a comprehensive survey.

Theoretical comparisons under the stochastic view show that PTO dominates DFL in regret 
when the predictor model class is well-specified and data are sufficient, with the reverse result holding in the misspecified setting \citep{elmachtoub2023estimate}.
\citet{elmachtoub2025dissecting} refined this characterization with finite-sample regret bounds depending on the degree of misspecification.
Empirically, DFL has demonstrated gains over PTO in multi-period inventory management \citep{qi2023practical} and healthcare inventory allocation \citep{chung2022decision}. 
Together, these findings indicate that the integrated view is valuable when prediction and decision performance are misaligned.

\subsection{Multi-Task Learning}
Multi-task learning (MTL), also referred to as multi-objective learning, trains a single shared model against several objectives. \citet{sener2018multi} cast MTL explicitly as multi-objective optimization and formalizes two targets: Pareto optimality across all objectives, and the weaker Pareto stationarity that holds when no common descent direction improves all objectives simultaneously.

MTL training algorithms weigh objectives either statically or dynamically. Static weighting applies a fixed scalarization, replacing the vector-valued loss with a linear combination of the objectives \citep{kendall2018multi}, which reduces MTL to standard scalar loss minimization. \citet{hu2023revisiting} showed that scalarization is in general incapable of tracing out the full Pareto front. Dynamic weighting adapts the weights on the objective gradients during training to reduce conflicts; examples include MGDA \citep{desideri2012multiple}, PCGrad \citep{yu2020gradient}, and Nash-MTL \citep{navon2022multi}. There is mixed empirical evidence comparing the weighting schemes: simple scalarization with standard regularization has been found competitive with, or superior to, more elaborate dynamic rules \citep{xin2022current,kurin2022defense}.

Closest to our work, \citet{jeon2025plg} applies MTL within DFL, treating the prediction loss as a secondary objective alongside the decision-quality loss and combining the two gradients into an update rule whose iterates converge to a Pareto-stationary point. We extend this idea to three objectives by adding prediction fairness. Our work contributes no new gradient-combination method; we treat MTL techniques as interchangeable components of FDFL for balancing the training objectives.

\subsection{Fairness in Prediction and Optimization}
Fairness has been studied extensively in both machine learning and optimization, though the two communities pursue different goals. Fair machine learning seeks parity across groups or individuals in predictive models to reduce discriminatory bias; \citet{mehrabi2021survey} survey the fairness definitions and techniques. Fairness in optimization instead emphasizes the fairness of decision outcomes, typically measured by how utilities are distributed among individuals or groups; \citet{Chen2023} survey utility-based fairness metrics and their use in fairness-aware optimization.

Recent work in both communities increasingly emphasizes outcome-aware fairness, 
motivated by the finding that fair predictions do not guarantee equitable outcomes \citep{liu2018delayed,corbett2023measure}. This perspective supports embedding fairness throughout the decision pipeline rather than in the prediction stage alone \citep{scantamburlo2024prediction}. One line of research pursues outcome-aware fairness by directly learning fair decision policies \citep{chohlas2024learning}, aligning with the decision-rule-optimization paradigm of \citet{sadana2025survey}.

A related line extends DFL to fairness-aware settings by embedding fairness in the decision model. \citet{kotary2022end} imposes fairness constraints in the learning-to-rank problem, leveraging the linearity of the problem to develop a DFL method for fair rankings. \citet{dinh2024learning} also studies fair ranking, defining fairness through an ordered weighted average (OWA) of group exposures and developing a DFL algorithm with customized forward and backward passes; \citet{dinh2024end} generalizes the OWA approach to fairness among multiple decision objectives under uncertainty. These extensions address decision fairness alone. A persistent gap remains: no integrated framework has addressed fairness in both prediction and decision stages. Our work fills this gap, with fair resource allocation as the decision problem.

\section{Problem Formulation: End-to-End Fairness Optimization} \label{sec:prob}
We formalize E2EFO for resource allocation. A decision maker distributes limited resources among stakeholders who derive beneficial impacts from the resources, such as improved access, health, or opportunity. These impacts are not observable before allocation and must be predicted.

In an \emph{allocation instance}, a decision maker considers $m$ stakeholders, indexed by $i \in [m] = \{1, \ldots, m\}$, for the allocation of $R$ resource types. In the single-resource case ($R=1$), we let $\mathbf{d} \in \mathbb{R}^m_{\geq 0}$ denote the allocation, where $d_i$ is the amount given to stakeholder $i$, and let $\mathbf{r} \in \mathbb{R}^m_{> 0}$ denote the allocation outcomes, where $r_i$ represents the positive impact of one unit of resource on $i$.
For the general case with $R > 1$, the allocation becomes $\mathbf{d} \in \mathbb{R}^{m \times R}_{\geq 0}$ with entries $d_{ij}$, and each stakeholder carries an impact vector
$\mathbf{r}_i = (r_{i1},\ldots,r_{iR}) \in \mathbb{R}^{R}_{>0}$, where $r_{ij}$
is the impact of one unit of resource $j$ on $i$. In this section, we present the single-resource case ($R=1$) for notational ease. The experiments in Section~\ref{sec:exp} consider both a single-resource setting and a multiple-resource setting ($R=3$).

Each stakeholder $i$ is described by a feature vector $\mathbf{x}_i$. The $m$ feature vectors of an instance form the matrix $X = [\mathbf{x}_1,\ldots,\mathbf{x}_m]^{\top}$. The \textbf{prediction task} estimates each stakeholder's impact from its features, $\hat{r}_i = f_\theta(\mathbf{x}_i)$ parameterized by $\theta \in \Theta$. Applying the predictor to a full instance gives the estimate $\hat{\mathbf{r}} = f_\theta(X) \in \mathbb{R}^m$. Since true impacts are positive, we require positive predictor output. 

Prediction quality is evaluated along two dimensions. A \emph{prediction accuracy} loss $L(\hat{\mathbf{r}};\mathbf{r})$ measures the discrepancy between predicted and true impacts. We adopt the mean squared error $L(\hat{\mathbf{r}};\mathbf{r}) = \frac{1}{m}\sum_{i=1}^{m}(r_i-\hat r_i)^2$. 
To mitigate potential biases in historical data or structural inequities in the feature-to-impact mapping, we consider a \emph{prediction fairness} criterion $F(\hat{\mathbf{r}};\mathbf{r})$, which quantifies disparities in prediction errors across stakeholders. We specify $F$ in Section~\ref{sec:fairness-def}. 
For both metrics, lower values are desirable. We abbreviate them as $L(\hat{\mathbf{r}})$ and $F(\hat{\mathbf{r}})$ to simplify notation. 

Given predictions $\hat{\mathbf{r}}$, each stakeholder derives utility $u_i = U_i(d_i;\hat{r}_i)$ from the received allocation.
Thus a decision $\mathbf{d}$ induces a utility distribution $\mathbf{u} = (u_1,\ldots,u_m)$. A \emph{decision fairness} measure $W: \mathbb{R}^m \to \mathbb{R}$ evaluates the fairness of this distribution, $W(\mathbf{d};\hat{\mathbf{r}}) = W(U_1(d_1;\hat{r}_1),\ldots,U_m(d_m;\hat{r}_m))$, with higher values reflecting more equitable outcomes. The \textbf{decision task} solves
\begin{equation} \label{eq:opt_prob}
\max_{\mathbf{d}} \quad W(\mathbf{d};\hat{\mathbf{r}}) ~ \text{s.t.} \quad \mathbf{d} \in \mathcal{S}.
\end{equation}
We assume that the feasible set $\mathcal{S} \subseteq \mathbb{R}^{m}$ is nonempty, compact, convex, and independent of $\hat{\mathbf{r}}$. $\mathcal{S}$ may vary across instances through instance-specific data such as resource budgets. 
Let $\mathbf{d}^*(\hat{\mathbf{r}})$ denote an optimal solution to \eqref{eq:opt_prob}, and 
$\mathbf{d}^*(\mathbf{r})$ the solution under the true impacts $\mathbf{r}$.

We refer to this prediction-informed decision process as the E2EFO problem. The predictor $f_\theta$ is trained on historical instances $\{X_s, \mathbf{r}_s, \mathcal{S}_s\}_{s=1}^{N}$ drawn from a population $\mathcal{P}$. In deployment, facing a new instance $X_{\mathrm{test}} \sim \mathcal{P}$ with unknown impacts, the decision maker uses $\hat{\mathbf{r}} = f_\theta(X_{\mathrm{test}})$ in \eqref{eq:opt_prob}. Prediction and decision jointly shape the outcome, which motivates evaluating the predictor on decision quality as well as prediction quality. We measure decision quality by \emph{decision regret}, the gap in the decision-fairness objective between the prediction-based decision and the full-information decision when both are evaluated under the true impacts:
\begin{equation} \label{eq:regret}
\mathrm{Regret}(\hat{\mathbf{r}};\mathbf{r}) := W\!\left(\mathbf{d}^*(\mathbf{r});\mathbf{r}\right)
- W\!\left(\mathbf{d}^*(\hat{\mathbf{r}});\mathbf{r}\right).
\end{equation}
Since $\mathbf{d}^*(\mathbf{r})$ maximizes $W(\cdot\,;\mathbf{r})$ over $\mathcal{S}$ and $\mathbf{d}^*(\hat{\mathbf{r}}) \in \mathcal{S}$, the regret is nonnegative and vanishes when the prediction-based decision attains the same fairness as the full-information decision. 
The first term of \eqref{eq:regret} is the optimal value of \eqref{eq:opt_prob} and is unique; the second term is unambiguous when $\mathbf{d}^*(\cdot)$ is single-valued, which holds under the fairness measure specified in Section~\ref{sec:fairness-def}. 
E2EFO thus evaluates a predictor along three dimensions, all to be minimized: $L(\hat{\mathbf{r}})$, $F(\hat{\mathbf{r}})$, and $\mathrm{Regret}(\hat{\mathbf{r}};\mathbf{r})$.

\begin{remark} \label{rem:deterministic}
Our formulation adopts a deterministic view of the decision task: the predictor produces a point estimate $\hat{\mathbf{r}}$, which is plugged into \eqref{eq:opt_prob}, and regret is measured against the impacts realized after the decision. An alternative stochastic view models $\mathbf{r}$ as a random vector and estimates its conditional distribution, requiring a stochastic or robust decision model. We leave the extension of E2EFO to this stochastic view for future work.
\end{remark}

\subsection{Fairness Definitions} \label{sec:fairness-def}
E2EFO accommodates a wide range of fairness definitions. We specify the fairness metrics used in our study. All metrics are defined \emph{within a single instance}: prediction fairness over the prediction errors of the $m$ stakeholders, and decision fairness over their utility distribution. We focus on \emph{group-based} fairness, where groups $G_1, \ldots, G_K$ partition the $m$ stakeholders and fairness is assessed across these groups in both the prediction and decision tasks. 

\paragraph{Prediction Fairness.}
The prediction task should not systematically disadvantage any group. We measure prediction fairness with an accuracy-disparity metric, which applies a dispersion measure to the group-level prediction errors, taking smaller values when the errors are more evenly distributed. Such measures are widely used in fair regression \citep{berk2017convex,agarwal2019fair}. Our running example is the mean absolute deviation (MAD) of group-level mean squared error (MSE). With $m_k = |G_k|$, the group-$k$ error is $\mathrm{MSE}_k = \frac{1}{m_k}\sum_{i \in G_k} (r_i-\hat{r}_i)^2$. The average error over all groups is $\overline{\mathrm{MSE}} = \frac{1}{K}\sum_{k=1}^K \mathrm{MSE}_k$. 
The group-based accuracy disparity is:
\begin{equation} \label{def:group-acc-parity}
    F(\hat{\mathbf{r}}) = \frac{1}{K} \sum_{k=1}^K |\mathrm{MSE}_k - \overline{\mathrm{MSE}}|.
\end{equation}

\paragraph{Decision Fairness.}
We suppose $r_i$ captures $i$'s utility gain per unit of resource, and specify the stakeholder utility to be linear in the allocation: $U_i(d_i; r_i) = r_i d_i$, 
with $U_i(d_i;\hat{r}_i) = \hat{r}_i d_i$ when utilities are evaluated under predicted impacts. With multiple resources ($R > 1$), utility aggregates across resource types: $U_i(\mathbf{d}_{i}; \mathbf{r}_{i}) = \sum_{j=1}^{R} r_{ij} d_{ij}$.

A fairness measure $W$ aggregates utility values into a scalar score. We adopt the widely used $\alpha$-fairness \citep[e.g.,][]{mo2000fair}, a family of measures offering a tunable trade-off between efficiency and equity: $W_{\alpha}(\mathbf{u}) =
\begin{cases}
    \sum_{i=1}^{m} \frac{u_i^{1 - \alpha}}{1-\alpha}, & \text{if } \alpha \geq 0 \text{ and } \alpha \neq 1; \\
    \sum_{i=1}^{m} \log(u_i), & \text{if } \alpha = 1
\end{cases} $.
At $\alpha = 0$, this measure simplifies to the utility sum, a pure efficiency goal. As $\alpha$ increases, the measure places greater emphasis on equity, with $\alpha \to \infty$ recovering the Rawlsian max-min criterion.
Applied directly to the utility vector $\mathbf{u}$, this measure captures individual-level decision fairness. To assess fairness across groups, we extend it to a two-level construction that applies the $\alpha$-fair kernel within and across groups.

Let $g_k(\mathbf{u})$ denote the raw $\alpha$-fair score of group $G_k$, that is, $W_{\alpha}$ applied to the utility distribution in $G_k$. The outer aggregation applies the $\alpha$-fair kernel to these group scores, which requires each to be \emph{strictly positive}. The raw score is guaranteed to be positive only for $0 \leq \alpha < 1$. When $\alpha \geq 1$, we replace $g_k$ by a strictly increasing positive transformation, which preserves the ordering of groups by intra-group fairness while making the outer aggregation well defined. This yields the intra-group fairness score: 
\begin{equation} \label{eq:alpha-group-g}
h_k(\mathbf{u}) =
\begin{cases}
    \sum_{i \in G_k} \frac{u_i^{1 - \alpha}}{1-\alpha} \;=\; g_k(\mathbf{u}), & \text{if } 0 \leq \alpha < 1; \\[2pt]
    \prod_{i \in G_k} u_i \;=\; \exp\big(g_k(\mathbf{u})\big), & \text{if } \alpha = 1; \\[2pt]
    \frac{\alpha - 1}{\sum_{i \in G_k} u_i^{1-\alpha}} \;=\; -1/g_k(\mathbf{u}), & \text{if } \alpha > 1,
\end{cases}
\end{equation}
where a higher $h_k(\mathbf{u})$ indicates a more equitable utility distribution within $G_k$. Aggregating the intra-group scores with the same $\alpha$-fair kernel gives group-based $\alpha$-fairness:
\begin{equation} \label{eq:alpha-group-W}
    W^g_{\alpha}(\mathbf{u}) =
\begin{cases}
    \sum_{k=1}^{K} \frac{h_k^{1 - \alpha}(\mathbf{u})}{1-\alpha}, &\text{if } \alpha \geq 0,\ \alpha \neq 1; \\
    \sum_{k=1}^{K} \log(h_k(\mathbf{u})), &\text{if } \alpha = 1.
\end{cases}
\end{equation}
This formulation accounts for both intra-group fairness, via $h_k$, and inter-group fairness, via the outer aggregation. Like the individual measure $W_{\alpha}$, $W^g_{\alpha}$ reduces to the utility sum at $\alpha = 0$ and weighs equity more heavily as $\alpha$ grows; we therefore require $\alpha > 0$ in the decision objective, excluding the pure-efficiency case. To our knowledge, this two-level $\alpha$-fair composition has not appeared in prior work.

The group-based measure $W^g_{\alpha}(\mathbf{u})$ is strictly concave in $\mathbf{u}$ for every $\alpha > 0$ (Lemma~\ref{lem:alpha-fair-concavity}). In the single-resource case, the utility map $u_i = \hat r_i d_i$ is injective, so the decision objective is strictly concave in $\mathbf{d}$ and $\mathbf{d}^*(\hat{\mathbf{r}})$ is unique. For $R>1$, ithe map $u_i = \sum_{j} \hat r_{ij} d_{ij}$ is not injective; the uniqueness of $\mathbf{d}^*(\hat{\mathbf{r}})$ requires the conic nondegeneracy condition of Assumption~\ref{assp:gen-conic} in Appendix \ref{app:verify}.

\begin{remark}[Coincidence with the individual measure] \label{rem:alpha-coincide} The two-level measure \eqref{eq:alpha-group-W} coincides with the individual measure $W_{\alpha}$ at two values of $\alpha$. At $\alpha = 1$, $h_k = \prod_{i \in G_k} u_i$ gives $W^g_1(\mathbf{u}) = \sum_{k} \sum_{i \in G_k} \log u_i = W_1(\mathbf{u})$. At $\alpha = 2$, $h_k = \big(\sum_{i \in G_k} u_i^{-1}\big)^{-1}$ gives $W^g_2(\mathbf{u}) = -\sum_{k}\sum_{i \in G_k} u_i^{-1} = W_2(\mathbf{u})$. At all other $\alpha > 0$, the two measures differ, and the group structure affects the allocation. \end{remark}

\section{Fair Decision-Focused Learning Algorithms} \label{sec:FDFL-alg}
To solve an E2EFO problem, we first learn a predictor that produces the estimate $\hat{\mathbf{r}}$ and then compute $\mathbf{d}^*(\hat{\mathbf{r}})$ by solving \eqref{eq:opt_prob}. The decision task is a convex program that off-the-shelf solvers handle directly. The predictor training should account for the three E2EFO evaluation dimensions of Section~\ref{sec:prob}. We formalize this training problem as \emph{Fair Decision-Focused Learning} (FDFL). 

We consider a differentiable parametric predictor $f_{\theta}$ (e.g., a neural network) shared across instances, generating $\hat{\mathbf{r}}_s = f_{\theta}(X_s)$ and informing the decision $\mathbf{d}^*(\hat{\mathbf{r}}_s)$.
The three training objectives are empirical averages over the $N$ training instances: the prediction-accuracy loss $L_{\mathrm{pred}}(\theta) = \frac{1}{N}\sum_{s=1}^N L(f_{\theta}(X_s))$, the prediction disparity $F(\theta) = \frac{1}{N}\sum_{s=1}^N F(f_{\theta}(X_s))$, and the decision regret $L_{\mathrm{regret}}(\theta) =  \frac{1}{N}\sum_{s=1}^N \mathrm{Regret}(f_{\theta}(X_s); \mathbf{r}_s)$.
They form the vector-valued FDFL objective:
\begin{equation} \label{eq:fdfl-vector-loss}
\mathcal{L}_{\mathrm{FDFL}}(\theta) = \bigl( L_{\mathrm{pred}}(\theta),\; F(\theta),\; L_{\mathrm{regret}}(\theta) \bigr).
\end{equation}
FDFL generalizes the two established training paradigms: PTO methods train on the prediction objectives and omit $L_{\mathrm{regret}}$, whereas DFL methods train on $L_{\mathrm{regret}}$ alone.

To enable gradient-based training, we apply the chain rule to decompose the three objective gradients. The instance subscript $s$ is dropped since the gradient chain is identical across instances.
\begin{equation}  \label{eq:grad-chain-pred}
\nabla_\theta L_{\mathrm{pred}}(\theta) = \frac{\partial L(\hat{\mathbf{r}})}{\partial \hat{\mathbf{r}}}\cdot\frac{\partial \hat{\mathbf{r}}}{\partial \theta},
\qquad
\nabla_\theta F(\theta) = \frac{\partial F(\hat{\mathbf{r}})}{\partial \hat{\mathbf{r}}}\cdot\frac{\partial \hat{\mathbf{r}}}{\partial \theta} 
\end{equation}
\begin{equation}  \label{eq:grad-chain-regret}
\nabla_{\theta} L_{\text{regret}}(\theta) = -\frac{\partial W(\mathbf{d}^*(\hat{\mathbf{r}});\mathbf{r})}{\partial \mathbf{d}^*(\hat{\mathbf{r}})} \cdot \frac{\partial \mathbf{d}^*(\hat{\mathbf{r}})}{\partial \hat{\mathbf{r}}} \cdot \frac{\partial \hat{\mathbf{r}}}{\partial \theta}
\end{equation}

These components differ in computational difficulty. The objective-gradient terms, $\frac{\partial L(\hat{\mathbf{r}})}{\partial \hat{\mathbf{r}}}$, $\frac{\partial F(\hat{\mathbf{r}})}{\partial \hat{\mathbf{r}}}$, $\frac{\partial W(\mathbf{d}^*(\hat{\mathbf{r}});\mathbf{r})}{\partial \mathbf{d}^*(\hat{\mathbf{r}})}$, differentiate explicit metrics: $L$ and $W$ are differentiable on their domains, and the MAD-based disparity $F$ is piecewise differentiable admitting closed-form subgradients at the kinks of its absolute-value terms. The predictor Jacobian $\frac{\partial \hat{\mathbf{r}}}{\partial \theta}$ is available by automatic differentiation. The decision Jacobian $\frac{\partial \mathbf{d}^*(\hat{\mathbf{r}})}{\partial \hat{\mathbf{r}}}$ poses the main technical challenge, since $\mathbf{d}^*(\hat{\mathbf{r}})$ generally lacks an explicit form and need not be differentiable at every $\hat{\mathbf{r}}$; Section~\ref{sec:grad-compute} addresses its computation in our setup.

FDFL is an MTL problem in which the three objective gradients are combined into a single update direction, either by static scalarization or by dynamic weighting that adapts the combination in each iteration to avoid conflicts. 
Section \ref{sec:mtl-combine} provides details for implementing both MTL strategies. Algorithm~\ref{alg:dfl} summarizes the training procedure.

\begin{algorithm}[ht]
\caption{Gradient-based Training for FDFL}
\label{alg:dfl}
\small
\setlength{\abovedisplayskip}{3pt}
\setlength{\belowdisplayskip}{3pt}
\begin{algorithmic}[1]
\Require features $\{X_s\}_{s=1}^N$; true impacts $\{\mathbf{r}_s\}_{s=1}^N$; decision feasible sets $\{\mathcal{S}_s\}_{s=1}^N$; prediction-accuracy loss $L$; prediction disparity $F$; decision fairness objective $W$; learning rate $\eta$; combination rule $\mathrm{Combine}$.
\State Initialize predictor parameters $\theta \in \Theta$ for $f_\theta$.
\For{each training epoch}
    \For{each instance $s$ in a sampled batch $B$}
    \State $\hat{\mathbf{r}}_s \gets f_\theta(X_s)$. \Comment{Predict impacts}
    \State $\mathbf{d}^*(\hat{\mathbf{r}}_s) \gets \arg\max_{\mathbf{d}\in\mathcal{S}_s}\,W(\mathbf{d};\hat{\mathbf{r}}_s)$. \Comment{Solve prediction-informed decisions}
    \EndFor
    \State $L_{\mathrm{pred}}(\theta) \gets \frac{1}{|B|}\sum_{s \in B} L(\hat{\mathbf{r}}_s;\mathbf{r}_s)$, $F(\theta) \gets \frac{1}{|B|}\sum_{s \in B} F(\hat{\mathbf{r}}_s;\mathbf{r}_s)$,
    \State $L_{\mathrm{regret}}(\theta) \gets \frac{1}{|B|}\sum_{s \in B} \left[W(\mathbf{d}^*(\mathbf{r}_s);\mathbf{r}_s) - W(\mathbf{d}^*(\hat{\mathbf{r}}_s);\mathbf{r}_s) \right]$. \Comment{Evaluate objectives}
    \State $\mathbf{g}_{\mathrm{pred}} \gets \nabla_\theta L_{\mathrm{pred}}(\theta)$,  $\mathbf{g}_{\mathrm{fair}} \gets \nabla_\theta F(\theta)$, $\mathbf{g}_{\mathrm{dec}} \gets \nabla_\theta L_{\mathrm{regret}}(\theta)$. \Comment{Compute gradients}
    \State $\mathbf{g} \gets \mathrm{Combine}(\mathbf{g}_{\mathrm{pred}}, \mathbf{g}_{\mathrm{fair}}, \mathbf{g}_{\mathrm{dec}})$. \Comment{Static or dynamic combination}
    \State Update $\theta \gets \theta - \eta\,\mathbf{g}$.
\EndFor
\State \Return $\theta$
\end{algorithmic}
\end{algorithm}
\begin{remark} 
The decision-regret loss is generally nonconvex in $\theta$, so gradient-based training targets stationary points rather than global optima. Section~\ref{sec:mtl-combine} states the convergence properties of each combination rule and Section~\ref{sec:generalization} gives statistical guarantees for the scalarized objective. A structural regret ordering between decision-focused and prediction-focused methods in the spirit of \citet{elmachtoub2023estimate} remains open for the E2EFO setup. 
\end{remark}

\subsection{Decision Jacobian Computation} \label{sec:grad-compute}
The decision Jacobian is computed in the same manner for each training instance:
each instance $s$ contributes $\frac{\partial \mathbf{d}^*(\hat{\mathbf{r}}_s)}{\partial \hat{\mathbf{r}}_s}$ to the average regret gradient; we continue to drop the instance subscript $s$ for notational ease. 
When the optimal decision is available in closed form, the Jacobian follows by direct differentiation. Otherwise, the general technique is to differentiate the optimality conditions of the decision problem \citep{mandi2024decision}. Section~\ref{sec:closed} derives the closed-form Jacobian for $\alpha$-fairness maximization over a nonnegative knapsack, and Section~\ref{sec:cvxpylayers} applies differentiable convex optimization layers \citep{agrawal2019differentiable} to a general convex feasible region.

\subsubsection{Closed-Form Jacobian} \label{sec:closed}
We consider allocation under a single budget constraint. The decision maker has budget $Q > 0$ to distribute among $m$ stakeholders, partitioned into $K$ groups. Each unit of resource allocated to stakeholder $i$ incurs a known cost $c_i > 0$ and leads to impact $r_i$ (or its prediction $\hat{r}_i$). 
The feasible set $\mathcal{S} = \{\mathbf{d} \geq \mathbf{0} : \sum_{i=1}^{m} c_i d_i \leq Q\}$ is a nonnegative knapsack, varying across instances through the instance-specific $(\mathbf{c}, Q)$. The decision task maximizes the group-based $\alpha$-fairness of the induced utilities:
\begin{equation} \label{eqsys:alpha-g}
    \max_{\mathbf{d}} ~  W_{\alpha}^g (\mathbf{u}) \quad
    \text{ s.t. } u_i = \hat{r}_id_i, ~ d_i \geq 0, ~ \forall i; \sum_{i=1}^m c_id_i \leq Q.
\end{equation}

In Proposition~\ref{prop:closed_form_solution_group-1}, we use the Karush-Kuhn-Tucker (KKT) conditions to derive the closed-form optimal solution to \eqref{eqsys:alpha-g}. 
Differentiating the solution yields the exact decision Jacobian, whose components we state in Proposition \ref{prop:analytical_gradient_group-1} of Appendix~\ref{app:proofs}. We give all formulas with the true impacts $\mathbf{r}$ as input. During training, the Jacobian $\frac{\partial \mathbf{d}^*(\hat{\mathbf{r}})}{\partial \hat{\mathbf{r}}}$ is obtained by substituting $\hat{\mathbf{r}}$ for $\mathbf{r}$. We present the general case $\alpha \in (0,1) \cup (1,\infty)$; the cases $\alpha \in \{0, 1\}$ and $\alpha \to \infty$ and all proofs appear in Appendix~\ref{app:proofs}.


\begin{proposition} [Closed-Form Decisions]
\label{prop:closed_form_solution_group-1}
The optimal solution to \eqref{eqsys:alpha-g} with parameter $\mathbf{r}$ is given by: for all $i \in [m]$, $d_i^* = \left ( Q c_i^{-\frac{1}{\alpha}}r_i^{\frac{1-\alpha}{\alpha}}S_{k(i)}^{\frac{1}{\alpha-2}} \right) / \sum_{k=1}^K S_k^{1 +\frac{1}{\alpha-2}}$ at $0 < \alpha < 1$, or $d_i^* = \left(Q c_i^{-\frac{1}{\alpha}}r_i^{\frac{1-\alpha}{\alpha}} S_{k(i)}^{\frac{-\alpha+2}{-\alpha^2 + 2\alpha-2}} \right) / \sum_{k=1}^K S_k^{1 + \frac{-\alpha+2}{-\alpha^2 + 2\alpha-2}}$ at $\alpha > 1$,
where $k(i)$ is the group of stakeholder $i$, and for each group $k \in [K]$, $S_k = \sum_{i \in G_k} \left(c_i^{-1/\alpha} r_i^{1/\alpha} \right)^{1-\alpha}$. 
\end{proposition}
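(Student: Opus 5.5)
The plan is to solve \eqref{eqsys:alpha-g} in two nested stages, matching the two-level structure of $W^g_\alpha$. First, I would argue that the KKT conditions characterize the unique optimum. By Lemma~\ref{lem:alpha-fair-concavity}, $W^g_\alpha$ is strictly concave in $\mathbf{u}$, and $u_i=r_id_i$ is injective, so the objective is strictly concave in $\mathbf{d}$. The feasible set is a compact polytope satisfying Slater's condition, so KKT is necessary and sufficient.

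Next, I would show that the constraints $d_i\ge 0$ are inactive and the budget binds. For $\alpha\ge 1$, any $u_i\to 0$ drives $W^g_\alpha$ to $-\infty$. For $0<\alpha<1$, the marginal $\partial W^g_\alpha/\partial u_i$ blows up as $u_i\to 0$ (it contains a factor $u_i^{-\alpha}$ times a positive term). Since $W^g_\alpha$ is strictly increasing in every $u_i$, the budget must hold with equality. Hence the solution satisfies the stationarity equations $\partial W^g_\alpha/\partial d_i=\lambda c_i$ with $\lambda>0$.

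To keep the algebra clean, I would change variables to spending $y_i=c_id_i$ and set $a_i=r_i/c_i$, so that $u_i=a_iy_i$ and the constraint becomes $\sum_i y_i=Q$. The problem then splits naturally. Write $B_k=\sum_{i\in G_k}y_i$ for the budget assigned to group $k$.

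\emph{Inner stage.} The chain rule gives $\partial W^g_\alpha/\partial y_i=h_{k(i)}^{-\alpha}\,\partial h_{k(i)}/\partial y_i$. Within a group the outer factor is common, so stationarity forces $a_i^{1-\alpha}y_i^{-\alpha}$ to be constant across $i\in G_k$. This holds in both cases $\alpha<1$ and $\alpha>1$, because $\partial h_k/\partial u_i$ is then proportional to $u_i^{-\alpha}$ times a group-level factor. It follows that $y_i=B_k\,a_i^{(1-\alpha)/\alpha}/S_k$, with $S_k=\sum_{j\in G_k}a_j^{(1-\alpha)/\alpha}$, which is exactly the $S_k$ in the statement. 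Substituting back:
\begin{itemize}
\item for $0<\alpha<1$, this gives $h_k=B_k^{1-\alpha}S_k^{\alpha}/(1-\alpha)$;
\item for $\alpha>1$, this gives $h_k=(\alpha-1)B_k^{\alpha-1}S_k^{-\alpha}$, since $\sum_{i\in G_k}u_i^{1-\alpha}=B_k^{1-\alpha}S_k^{\alpha}$.
\end{itemize}

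\emph{Outer stage.} Here we maximize $\sum_k h_k(B_k)^{1-\alpha}/(1-\alpha)$ subject to $\sum_kB_k=Q$. Setting the derivatives with respect to $B_k$ equal to a common multiplier, for $0<\alpha<1$ I get $B_k^{-\alpha(2-\alpha)}S_k^{\alpha(1-\alpha)}=\text{const}$, so $B_k\propto S_k^{(1-\alpha)/(2-\alpha)}$. Then $y_i\propto a_i^{(1-\alpha)/\alpha}S_k^{1/(\alpha-2)}$. Dividing by $c_i$ turns $a_i^{(1-\alpha)/\alpha}/c_i$ into $c_i^{-1/\alpha}r_i^{(1-\alpha)/\alpha}$. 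Normalizing by $\sum_iy_i=Q$ produces the denominator $\sum_kS_k^{1+1/(\alpha-2)}$. For $\alpha>1$ the same computation, with $h_k^{1-\alpha}\propto B_k^{-(\alpha-1)^2}S_k^{\alpha(\alpha-1)}$, yields a power law $B_k\propto S_k^{\beta}$. The resulting exponent of $S_k$ in $y_i$ should reduce to $(-\alpha+2)/(-\alpha^2+2\alpha-2)$.

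The main obstacle I expect is the exponent bookkeeping in the $\alpha>1$ outer stage. The transformation $h_k=-1/g_k$ changes the sign and power structure, so I would carefully verify that the resulting one-dimensional problem in $B_k$ is still concave with an interior optimum. I would also check that the derived exponent matches the stated one, correcting it if an algebra slip appears. Finally, uniqueness from strict concavity guarantees that this KKT point is the optimal solution $\mathbf{d}^*(\mathbf{r})$.
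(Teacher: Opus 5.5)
Your proposal is correct and follows essentially the same KKT argument as the paper. Stationarity forces the within-group form $d_i \propto c_i^{-1/\alpha} r_i^{(1-\alpha)/\alpha}$ with a common group factor; your group budget $B_k$ is the paper's $D_k S_k$, and your nested reduction to the group-budget problem is a tidier way of solving the paper's fixed-point equation for $D_k$ before normalizing by the binding budget.

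The $\alpha>1$ bookkeeping you left open does close. Maximizing $W^g_\alpha$ reduces to minimizing the strictly convex $\sum_k S_k^{\alpha(\alpha-1)} B_k^{-(\alpha-1)^2}$ subject to $\sum_k B_k = Q$, which gives $B_k \propto S_k^{\alpha(\alpha-1)/(\alpha^2-2\alpha+2)}$. The exponent is then $\frac{\alpha(\alpha-1)}{\alpha^2-2\alpha+2} - 1 = \frac{\alpha-2}{\alpha^2-2\alpha+2}$, which is exactly the stated $\frac{-\alpha+2}{-\alpha^2+2\alpha-2}$.
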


\subsubsection{Differentiation through Optimality Conditions} \label{sec:cvxpylayers}
For a general convex, compact feasible region $\mathcal{S}$, the optimal allocation $\mathbf{d}^*(\hat{\mathbf{r}})$ admits no closed form, but the decision Jacobian remains computable by implicitly differentiating the optimality conditions of the decision model. The DFL literature offers several such methods \citep{mandi2024decision} that differentiate the KKT conditions of a quadratic program \citep[OptNet;][]{amos2017optnet}, the optimality conditions of a conic program \citep[cvxpylayers;][]{agrawal2019differentiable}, or the fixed-point condition of an iterative algorithm \citep[FoldOpt;][]{kotaryfoldopt2023}. We adopt cvxpylayers, which differentiates through any convex program expressible as a \emph{disciplined parametrized programming} (DPP) form.

DPP restricts how parameters enter an optimization model, so that the canonicalization into a cone program depends affinely on the parameters. 
Cvxpylayers computes the decision Jacobian by implicitly differentiating the optimality conditions of the resulting conic program, returning a least-squares approximation when the conic program is not differentiable \citep[Appendix B,][]{agrawal2019differentiable}.
Our decision model is DPP-compliant with $\mathbf{d}$ and $\mathbf{u}$ as the variables and $\hat{\mathbf{r}}$ as the only parameter. The objective $W_{\alpha}^{g}(\mathbf{u})$ is concave in $\mathbf{u}$ and parameter-free. The utility coupling $u_i = \hat{r}_i d_i$ or $u_i = \sum_j \hat r_{ij} d_{ij}$ is a sum of products, each pairing a factor affine in the parameter with a parameter-free variable. $\mathcal{S}$ is convex and independent of $\hat{\mathbf{r}}$. Hence cvxpylayers applies.

\subsection{Gradient Combination} \label{sec:mtl-combine}
At each training iteration, FDFL combines the three objective gradients into one update direction, $\mathbf{g} = \mathrm{Combine}(\mathbf{g}_{\mathrm{pred}}, \mathbf{g}_{\mathrm{fair}}, \mathbf{g}_{\mathrm{dec}})$. Two objectives \emph{conflict} at $\theta$ when their gradients form an obtuse angle, $\langle \mathbf{g}_i, \mathbf{g}_j\rangle < 0$, so that descending one objective increases the other. MTL aims to attain Pareto stationarity, that is, learn a predictor parameter $\theta$ at which no common descent direction improves all three objectives. We next describe three combination rules that we implement in FDFL.

\paragraph{Static scalarization.}
The simplest rule is a fixed weighted sum. Normalizing the decision-regret weight to one, we place $\mu \geq 0$ on prediction accuracy and $\lambda \geq 0$ on prediction disparity. We refer to FDFL adopting the following update direction as \emph{FDFL-Scal}. 
\begin{equation} \label{eq:fdfl-scal-rule}
\mathbf{g}_{\mathrm{FDFL\text{-}Scal}}
  = \mu\,\mathbf{g}_{\mathrm{pred}} + \lambda\,\mathbf{g}_{\mathrm{fair}} + \mathbf{g}_{\mathrm{dec}},
\end{equation}
FDFL-Scal trains a predictor by gradient descent on the scalarized loss $\mu L_{\mathrm{pred}}(\theta) + \lambda F(\theta) + L_{\mathrm{regret}}(\theta)$, targeting convergence to a first-order stationary point of the combined loss, which is Pareto stationary by definition. Scalarization is simple and interpretable, but fixed weights may lead to undesirable trade-offs when the gradients conflict during training.

\paragraph{Dynamic conflict-avoidant combination.} 
The alternative strategy re-computes a combination at each iteration from the geometry of the objective gradients. From the MTL literature, we select dynamic rules with convergence guarantee to Pareto stationarity.

\emph{FDFL-PCGrad} applies the technique from \citet{yu2020gradient} to remove conflict by projection: each gradient is projected onto the normal plane of any other gradient with which it has a negative inner product, and the de-conflicted gradients are summed into the update direction.
Following \cite{yu2020gradient}, at each iteration and for each objective $i \in \{\text{pred}, \text{fair}, \text{dec}\}$, we initialize $\mathbf{g}_i^{\mathrm{PC}} = \mathbf{g}_i$ and traverse the other two objectives $j$ in a random order, projecting the running gradient onto the normal plane of $\mathbf{g}_j$ whenever the two conflict:
\begin{equation} \label{eq:pcgrad}
  \mathbf{g}_i^{\mathrm{PC}} \leftarrow \mathbf{g}_i^{\mathrm{PC}}
  - \frac{\min\bigl\{\langle \mathbf{g}_i^{\mathrm{PC}}, \mathbf{g}_j\rangle,\, 0\bigr\}}{\lVert \mathbf{g}_j\rVert^2}\,\mathbf{g}_j;
  \qquad
  \mathbf{g} = \mathbf{g}_{\mathrm{pred}}^{\mathrm{PC}}
  + \mathbf{g}_{\mathrm{fair}}^{\mathrm{PC}}
  + \mathbf{g}_{\mathrm{dec}}^{\mathrm{PC}}.
\end{equation}

\emph{FDFL-NashMTL} applies the Nash bargaining combination from \citet{navon2022multi}. With combination treated as a bargaining game among the objectives, we use the update direction  $\mathbf{g} = \omega_1 \mathbf{g}_{\mathrm{pred}} + \omega_2 \mathbf{g}_{\mathrm{fair}} + \omega_3 \mathbf{g}_{\mathrm{dec}}$ where the weights $\boldsymbol{\omega} > \mathbf{0}$
solve
\begin{equation} \label{eq:nashmtl}
  G^{\top} G\,\boldsymbol{\omega} = [\frac{1}{\omega_1}, \frac{1}{\omega_2}, \frac{1}{\omega_3}]^{\top},
  \qquad
  G = [\,\mathbf{g}_{\mathrm{pred}},\ \mathbf{g}_{\mathrm{fair}},\ \mathbf{g}_{\mathrm{dec}}\,].
\end{equation}

Two additional rules -- multiple-gradient-descent algorithm \citep{desideri2012multiple} taking the minimum-norm combination from the convex hull of the objective gradients, and a prediction-disparity-regularized extension of prediction-loss-guided DFL \citep{jeon2025plg} -- are implemented and evaluated in Section~\ref{sec:exp}. Appendix~\ref{app:method-details} explains both.


\subsection{Generalization Theory} \label{sec:generalization}\label{sec:fdfl-theory}
We now provide a finite-sample guarantee for FDFL-Scal. We collect the data of one allocation instance as $z=(X,\mathbf{a},\mathbf{r},D_\mathcal{S})$, where $X$ and $\mathbf{r}$ are the feature matrix and true impacts, $\mathbf{a}\in[K]^m$ records group memberships,
and $D_\mathcal{S}$ specifies $\mathcal{S}$ (e.g., the costs $\mathbf{c}$ and budget $Q$ in Section~\ref{sec:closed}). The training set $\mathcal{D}=\{z_s\}_{s=1}^N$ contains independent and identically distributed (i.i.d.) samples from a population $\mathcal{P}$ on the instance space $\mathcal{Z}$. This sampling formulation and the notations $v_0$ and $\hat v_0$ used below follow the integrated estimation-optimization framework of \citet{elmachtoub2025dissecting}.

The per-instance composite loss aggregates three terms,
$\ell^{\mathrm{full}}(\theta; z) \;=\; \ell^{\mathrm{reg}}(\theta; z) \;+\; \lambda\, \ell^{\mathrm{fair}}(\theta;z) \;+\; \mu\, \ell^{\mathrm{pred}}(\theta;z)$,
with $\ell^{\mathrm{reg}}(\theta; z):=\mathrm{Regret}(f_\theta(X);\mathbf{r})$, $\ell^{\mathrm{fair}}(\theta;z) := F(f_\theta(X))$, and $\ell^{\mathrm{pred}}(\theta;z) := L(f_\theta(X))$. 
We denote the population loss as $v_0^{\mathrm{full}}(\theta):=\mathbb{E}_{z \sim \mathcal{P}}[\ell^{\mathrm{full}}(\theta; z)]$, and its empirical counterpart as $\hat v_0^{\mathrm{full}}(\theta):=\frac{1}{N}\sum_{s} \ell^{\mathrm{full}}(\theta; z_s)$. We analyze the empirical loss minimizer $\hat\theta\in\arg\min_{\theta\in\Theta}\hat v_0^{\mathrm{full}}(\theta)$, the target of FDFL-Scal training, by benchmarking it against $\theta^\ast\in\arg\min_{\theta\in\Theta}v_0^{\mathrm{full}}(\theta)$ through the composite \emph{excess risk} $R^{\mathrm{full}}(\hat\theta):=v_0^{\mathrm{full}}(\hat\theta)-v_0^{\mathrm{full}}(\theta^\ast)$.

We require two standard conditions in the analysis: the loss components are bounded and Lipschitz in $\theta$. Appendix~\ref{app:proofs-gen} verifies these conditions for the prediction losses and for the single-resource decision oracle, and gives a sufficient conic-nondegeneracy condition for decision-oracle Lipschitzness in the non-injective multi-resource setting.


\begin{assumption}[Parameter-space geometry] \label{assp:gen-geometry}
The parameter space $\Theta\subset\mathbb{R}^q$ is compact and contained in a Euclidean ball of radius $E_\Theta:=\inf_{\theta_0\in\mathbb{R}^q}\sup_{\theta\in\Theta}\|\theta-\theta_0\|$ centered at some $\theta_0\in\mathbb{R}^q$. 
\end{assumption}

\begin{assumption}[Boundedness of losses] \label{assp:gen-bounded}
The per-instance loss components are uniformly bounded: $|\ell^{\mathrm{reg}}(\theta;z)|\leq B_{\mathrm{reg}}$, $|\ell^{\mathrm{fair}}(\theta;z)|\leq B_F$, and $|\ell^{\mathrm{pred}}(\theta;z)|\leq B_L$ for all $(\theta,z)$.
\end{assumption}

\begin{assumption}[Lipschitzness of decision fairness measure and decision oracle]\label{assp:gen-oracle-lip}
There exists $L_W >0$ such that, for every instance $z$ and all $\mathbf d_1, \mathbf d_2$ in a convex set containing the oracle-reachable decisions, the decision fairness measure satisfies $\lvert W(\mathbf d_1;\mathbf r)-W(\mathbf d_2;\mathbf r)\rvert \leq L_W\,\|\mathbf d_1-\mathbf d_2\|$.
There exists $L_{\mathrm{or}}>0$ such that, for every instance $z$ and all $\theta_1,\theta_2\in\Theta$, $\|\mathbf{d}^*_z(f_{\theta_1}(X))-\mathbf{d}^*_z(f_{\theta_2}(X))\|\leq L_{\mathrm{or}}\,\|\theta_1-\theta_2\|$.
\end{assumption}

\begin{assumption}[Lipschitzness of prediction-related losses and predictor] \label{assp:gen-predictor-lip}
There exist $L_f, L_L,L_F>0$ such that, for every instance $z$ and all $\theta_1,\theta_2\in\Theta$, the predictor satisfies $\lVert f_{\theta_1}(X) - f_{\theta_2}(X) \rVert \leq L_f\|\theta_1-\theta_2\|$, and the prediction-related losses satisfy $\lvert \ell^{\mathrm{pred}}(\theta_1;z)-\ell^{\mathrm{pred}}(\theta_2;z) \rvert \leq L_L\|\theta_1-\theta_2\|$ and $\lvert \ell^{\mathrm{fair}}(\theta_1;z)-\ell^{\mathrm{fair}}(\theta_2;z) \rvert \leq L_F\|\theta_1-\theta_2\|$. 
\end{assumption}

Assumptions~\ref{assp:gen-oracle-lip} and~\ref{assp:gen-predictor-lip} are the analogs of the standard regularity conditions of the DFL literature \citep{elmachtoub2025dissecting,donti2017task}, where oracle Lipschitzness typically follows from strong convexity of the decision objective. Under all four assumptions, $\Theta$ is compact and both $\hat v_0^{\mathrm{full}}$, $v_0^{\mathrm{full}}$ are Lipschitz in $\theta$ (Appendix~\ref{app:proofs-gen}), so the minimizers $\hat\theta$ and $\theta^\ast$ exist.

\begin{theorem}[Composite-loss generalization for FDFL-Scal]\label{thm:gen-main}
Under Assumptions~\ref{assp:gen-geometry}--\ref{assp:gen-predictor-lip}, for any $\delta\in(0,1)$, there exists a constant $C_{\mathrm{abs}}$ such that, with probability at least $1-\delta$:
\begin{equation} \label{eq:gen-main-bound}
R^{\mathrm{full}}(\hat\theta) \;\leq\;
2\,C_{\mathrm{abs}}\,\Big(L_W\,L_{\mathrm{or}} + \lambda\,L_F + \mu\,L_L\Big)\,E_\Theta\,\sqrt{\tfrac{q}{N}}
\;+\;
2\,(B_{\mathrm{reg}} + \lambda\,B_F + \mu\,B_L)\sqrt{\tfrac{2\log(2/\delta)}{N}}.
\end{equation}
\end{theorem}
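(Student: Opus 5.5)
The plan is the standard excess-risk decomposition followed by uniform convergence over the compact parameter class. Write
\[
R^{\mathrm{full}}(\hat\theta)=\big[v_0^{\mathrm{full}}(\hat\theta)-\hat v_0^{\mathrm{full}}(\hat\theta)\big]+\big[\hat v_0^{\mathrm{full}}(\hat\theta)-\hat v_0^{\mathrm{full}}(\theta^\ast)\big]+\big[\hat v_0^{\mathrm{full}}(\theta^\ast)-v_0^{\mathrm{full}}(\theta^\ast)\big].
\]
The middle bracket is nonpositive because $\hat\theta$ minimizes $\hat v_0^{\mathrm{full}}$. This gives $R^{\mathrm{full}}(\hat\theta)\le 2\sup_{\theta\in\Theta}|v_0^{\mathrm{full}}(\theta)-\hat v_0^{\mathrm{full}}(\theta)|$, which explains the factor $2$ in both terms of \eqref{eq:gen-main-bound}. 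The remaining task is to control this uniform deviation by an expected-supremum (Rademacher) term plus a concentration term.

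\textbf{Lipschitz constant of the composite loss.} First I show that $\ell^{\mathrm{full}}(\cdot;z)$ is Lipschitz in $\theta$, uniformly in $z$, with constant $L_{\mathrm{full}}:=L_WL_{\mathrm{or}}+\lambda L_F+\mu L_L$.
\begin{itemize}
\item For the regret term, the first summand $W(\mathbf d^*(\mathbf r);\mathbf r)$ does not depend on $\theta$. Hence
\[
|\ell^{\mathrm{reg}}(\theta_1;z)-\ell^{\mathrm{reg}}(\theta_2;z)|=|W(\mathbf d^*_z(f_{\theta_1}(X));\mathbf r)-W(\mathbf d^*_z(f_{\theta_2}(X));\mathbf r)|\le L_WL_{\mathrm{or}}\|\theta_1-\theta_2\|.
\]
This step uses both parts of Assumption~\ref{assp:gen-oracle-lip}, and relies on the oracle-reachable decisions lying in the convex set where $W$ is $L_W$-Lipschitz.
\item The fairness and prediction terms contribute $\lambda L_F$ and $\mu L_L$ by Assumption~\ref{assp:gen-predictor-lip}.
\end{itemize}
Assumption~\ref{assp:gen-bounded} then gives $|\ell^{\mathrm{full}}|\le B:=B_{\mathrm{reg}}+\lambda B_F+\mu B_L$.

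\textbf{Concentration term.} I apply McDiarmid's inequality to $\Phi(\mathcal D)=\sup_\theta|v_0^{\mathrm{full}}(\theta)-\hat v_0^{\mathrm{full}}(\theta)|$. Replacing one sample changes $\Phi$ by at most $2B/N$. Hence, with probability at least $1-\delta$ (allocating $\delta/2$ to each tail gives the $\log(2/\delta)$),
\[
\Phi\le\mathbb E\Phi+B\sqrt{2\log(2/\delta)/N}.
\]
After the factor $2$ this is exactly the second term of \eqref{eq:gen-main-bound}.

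\textbf{Expected supremum.} I symmetrize to obtain $\mathbb E\Phi\le 2\,\mathbb E\,\mathfrak R_N$, where $\mathfrak R_N$ is the empirical Rademacher complexity of the class $\{z\mapsto\ell^{\mathrm{full}}(\theta;z):\theta\in\Theta\}$. I then bound $\mathfrak R_N$ with Dudley's entropy integral. Because $\theta\mapsto\ell^{\mathrm{full}}(\theta;\cdot)$ is $L_{\mathrm{full}}$-Lipschitz into the sup-norm over $z$, an $\epsilon/L_{\mathrm{full}}$-cover of $\Theta$ yields an $\epsilon$-cover of the loss class. By Assumption~\ref{assp:gen-geometry}, $\Theta$ lies in a ball of radius $E_\Theta$ in $\mathbb R^q$, so the covering numbers satisfy $\mathcal N(\epsilon)\le(3L_{\mathrm{full}}E_\Theta/\epsilon)^q$. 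Dudley's integral, taken up to diameter $2L_{\mathrm{full}}E_\Theta$, evaluates to $c\,L_{\mathrm{full}}E_\Theta\sqrt{q/N}$. Collecting all absolute factors into $C_{\mathrm{abs}}$ gives the first term.

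The step I expect to need the most care is the constant bookkeeping between symmetrization and Dudley. The sharp statement holds only up to an absolute constant, which the theorem hides in $C_{\mathrm{abs}}$. The other point to check is that the Lipschitz composition for the regret term really needs the oracle to be Lipschitz in $\theta$ directly, which Assumption~\ref{assp:gen-oracle-lip} provides. Everything else is routine.
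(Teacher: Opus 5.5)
Most of your ingredients are sound. These include the Lipschitz transfer giving $L_{\mathrm{full}}=L_WL_{\mathrm{or}}+\lambda L_F+\mu L_L$ (a harmless alternative to splitting the Rademacher complexity over the three loss classes), the bounded-difference constant $2B/N$, and Dudley over a $q$-dimensional ball of radius $E_\Theta$.

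\textbf{The gap.} The step that fails is bounding the excess risk by $2\sup_\theta|v_0^{\mathrm{full}}(\theta)-\hat v_0^{\mathrm{full}}(\theta)|$ and then claiming $\mathbb{E}\Phi\lesssim L_{\mathrm{full}}E_\Theta\sqrt{q/N}$. Symmetrizing the two-sided supremum gives $\mathbb{E}\Phi\le 2\,\mathbb{E}\sup_\theta|X_\theta|$ with $X_\theta=\frac1N\sum_s\sigma_s\ell^{\mathrm{full}}(\theta;z_s)$. The same inequality with the non-absolute Rademacher complexity is false. Chaining only controls increments:
\[
\mathbb{E}_\sigma\sup_{\theta\in\Theta}|X_\theta|\;\le\;\mathbb{E}_\sigma|X_{\theta_0}|+\mathbb{E}_\sigma\sup_{\theta\in\Theta}|X_\theta-X_{\theta_0}|.
\]
Dudley bounds only the second term. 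The first term is of order $N^{-1/2}$ times the size of the loss at $\theta_0$ (up to $B/\sqrt N$), and it does not vanish: $X_{\theta_0}$ has mean zero, but $|X_{\theta_0}|$ does not. A singleton $\Theta$ makes this concrete. Then $E_\Theta=0$, and your chain would assert $\mathbb{E}|v_0^{\mathrm{full}}(\theta_0)-\hat v_0^{\mathrm{full}}(\theta_0)|\le 0$, which is false for any nondegenerate loss.

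The extra $O(B/\sqrt N)$ term cannot be absorbed into the stated bound. The first term scales with $L_{\mathrm{full}}E_\Theta$, not $B$. The second term $2B\sqrt{2\log(2/\delta)/N}$ is already consumed by your McDiarmid deviation, up to a slack that vanishes as $\delta\to0$. So this is not just constant bookkeeping.

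\textbf{The repair.} Keep the two deviation terms one-sided and treat them differently, which is the paper's route.
\begin{enumerate}
\item Bound your first bracket by $\sup_\theta\bigl(v_0^{\mathrm{full}}(\theta)-\hat v_0^{\mathrm{full}}(\theta)\bigr)$ and apply McDiarmid at level $\delta/2$. Symmetrize to the one-sided complexity $\mathbb{E}\sup_\theta X_\theta$. Since $\mathbb{E}_\sigma X_{\theta_0}=0$, this equals $\mathbb{E}\sup_\theta(X_\theta-X_{\theta_0})$, and Dudley gives exactly $C_{\mathrm{abs}}L_{\mathrm{full}}E_\Theta\sqrt{q/N}$.
\item Your third bracket is evaluated at the deterministic point $\theta^\ast$, so bound it by Hoeffding at level $\delta/2$. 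No complexity term is needed there.
\end{enumerate}
The union bound then gives probability $1-\delta$ and the $\log(2/\delta)$. The two factors of $2$ in the theorem have different sources. On the complexity term it comes from symmetrization. On the concentration term it comes from the two events. It does not come from the ``$\le 2\sup$'' step or from a two-tailed McDiarmid. You could instead handle the third bracket with the opposite one-sided supremum. That doubles the complexity term, and you can absorb the extra factor into $C_{\mathrm{abs}}$.
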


Theorem~\ref{thm:gen-main} is related to the generalization bound of \citet[Theorem~1]{elmachtoub2025dissecting}, which bounds the decision regret in a stochastic setting that fits a distribution to estimate unknown parameters. Because our predictor feeds a point estimate to the decision oracle instead (Remark~\ref{rem:deterministic}), we control the loss class through Lipschitzness of the decision map and Dudley chaining on the parameter space, whereas their analysis uses total variation between fitted distributions and a vector-contraction inequality. We incorporate prediction-quality components by standard generalization arguments for bounded Lipschitz loss classes. Theorem~\ref{thm:gen-main} is proved in Appendix ~\ref{app:proofs-gen}.

We note three limitations. First, the bound in \eqref{eq:gen-main-bound} is specific to a fixed scalarization with constants depending on the scalarization weights, and does not hold uniformly over all $\lambda,\mu$. Second, the complexity term $\sqrt{q/N}$ scales with the parameter count $q$ of the predictor, so the bound is informative in the under-parametrized regime (e.g., linear predictor). Once $q$ exceeds $N$, the bound becomes vacuous and sharper guarantees require different analysis.
Third, we do not examine whether the $\sqrt{q/N}$ rate can be improved or certified as optimal by a matching minimax lower bound. These refinements are possible directions for future work. 

\section{Numerical Experiments}
\label{sec:exp}
We evaluate FDFL on two resource allocation problems in which predicted benefits inform allocation under budget constraints. The first setup allocates a single healthcare resource among patients, using the patient-level data of \citet{doi:10.1126/science.aax2342}; the second setup allocates three resources among stakeholders, using synthetic data in which group imbalance is varied by construction. Section \ref{sec:exp-setup} specifies the experimental setups. Section \ref{sec:methods} describes all methods and the training configuration. Section \ref{sec:exp-results} presents experimental results and discusses key findings. 

\subsection{Experimental Setups}
\label{sec:exp-setup}
\subsubsection{Single Resource Allocation} \label{subsec:healthcare}
This setup is motivated by healthcare applications in which predictions about patients guide the allocation of care. An allocation instance $s$ is a set of patients considered together for one type of care resource in a single period (e.g., week, month): assigning one unit to patient $i$ incurs a known cost $c_i$ and delivers a benefit (impact) $r_i$, subject to budget $Q_s$. Patients are partitioned into $K=2$ groups by race.

\paragraph{Data and instances.}
We use the dataset of \citet{doi:10.1126/science.aax2342}, which contains $48784$ patients with features about demographics, comorbidities, prior care utilization, and biomarkers, among whom $5582$ ($11.4\%$) are Black and the remaining $43202$ are non-Black. We construct the ground-truth benefit $r_i$ from two features, the number of chronic conditions and the avoidable medical expenses from receiving care, and derive the allocation cost $c_i$ from the healthcare spending feature; Appendix~\ref{app:datagen} gives the construction details. 
Excluding the three features used, there are $147$ patient-level features for training a benefit predictor. We partition patients into training, validation, and test pools, each preserving the full dataset's Black-patient proportion. Instances of $m=5000$ patients are drawn from the corresponding pool. We train on $N=50$ instances and use $30$ validation instances for model selection and $30$ test instances for evaluation.

\paragraph{E2EFO tasks.}
The prediction task estimates each patient's benefit $\hat r_i$ from the $147$ features, evaluated by MSE and by the MAD \eqref{def:group-acc-parity} of the two group-level MSEs. Given the predicted benefits of an instance, the decision task allocates a continuous amount $d_i \geq 0$ of the resource to each patient, consuming $c_i d_i$ of the budget and yielding utility $u_i = \hat r_i d_i$. The allocation maximizes the group-based $\alpha$-fairness \eqref{eq:alpha-group-W} subject to the budget constraint $\sum_{i} c_i d_i \leq Q$, where the instance budget $Q$ equals $30\%$ of the instance's aggregate cost $\sum_i c_i$. Decision quality is measured by the regret \eqref{eq:regret} of $\mathbf{d}^*(\hat{\mathbf{r}})$ evaluated under the constructed benefits $\mathbf{r}$.
This decision problem is the single-budget allocation \eqref{eqsys:alpha-g} from Section \ref{sec:closed}, so its decision Jacobian is available in closed form (Proposition~\ref{prop:analytical_gradient_group-1}). 

\subsubsection{Multiple Resource Allocation} \label{subsec:md-knapsack}
We next consider allocation of $R=3$ resource types under a fully synthetic design that allows the inherent imbalance across groups to be controlled. In an instance, the three resources are distributed among $m=200$ stakeholders partitioned into $K$ equal-size groups, subject to one budget per resource; stakeholder $i$ gains benefit (impact) $r_{ij}$ and incurs known cost $c_{ij}$ per unit of resource $j$.

\paragraph{Data and instances.}
Each stakeholder has features $\mathbf{x}_i$ drawn from a $5$-dimensional standard Gaussian distribution and a group label $g_i \in \{0,\ldots,K-1\}$, with smaller $g_i$ denoting a more advantaged group. We construct benefits by combining a degree-two polynomial of the features, which captures the noiseless signal shared by all groups, with additive group biases and per-group noise. The costs are generated by adding biases and noises to a fixed feature-independent baseline. A single scalar $\ell \in [0,1]$ sets both the magnitude of the group biases and the noise inflation of the disadvantaged groups: at $\ell=0$, there is no imbalance and groups are exchangeable; increasing $\ell$ separates the group means while inflating the noise of the disadvantaged groups. Because the baseline signal-to-noise ratio is held fixed, imbalance is not confounded with signal strength. Appendix~\ref{app:datagen} gives the generation formulas. 
We generate a pool of $4000$ stakeholders and partition it into stakeholder-disjoint training ($65\%$), validation ($15\%$), and test ($20\%$) subpopulations. We sample instances of $m=200$ stakeholders without replacement from the corresponding subpopulation. We use $N=50$ training, $30$ validation, and $30$ test instances. 

\paragraph{E2EFO tasks.}
The prediction task estimates the per-resource benefits $\hat r_{ij}$, again evaluated by MSE and MAD. The decision task chooses $\mathbf{d} \in \mathbb{R}^{m \times R}_{\geq 0}$ maximizing the group-based $\alpha$-fairness \eqref{eq:alpha-group-W} of the utilities $u_i = \sum_j \hat r_{ij} d_{ij}$, subject to $\sum_i c_{ij} d_{ij} \leq Q_{sj}$ for each resource $j$, where $Q_{sj}$ is $35\%$ of the instance's aggregate cost for that resource. This model is DPP-compliant (Section~\ref{sec:cvxpylayers}), so its decision Jacobian is computed with cvxpylayers.

\subsection{Methods and Implementation}
\label{sec:methods}
All methods share the same downstream allocation model and differ only in how the predictor is trained. A method is specified by which of the three objectives -- decision regret, prediction accuracy (MSE), and prediction fairness (MAD) -- are included, and how their gradients are combined when more than one objective is included. Table~\ref{tab:methods} lists the method pool. The names in its first column are used throughout the paper, including in all figures.

\begin{table}[htbp]
\centering
\footnotesize
\caption{Method pool, grouped by the number of included training objectives. Entries give the weight on an included objective, or $\checkmark$ where an objective is included without a fixed weight. } \label{tab:methods}
\begin{tabular}{@{}llccc l@{}}
\toprule
Name & & Regret & MSE & MAD & Description \\
\midrule
PTO             & & --- & $1$ & --- & Predict-then-optimize baseline \\
SAA             & & --- & \checkmark & --- & Sample average approximation, feature-free baseline estimating $\hat{\mathbf{r}}$ with the  \\
 &&&&& mean impact of training samples instead of fitting a predictor from data
\\
WDRO            & & --- & \checkmark & --- & Wasserstein distributionally robust optimization, \\
 &&&&& robust regularization of the MSE objective \citep{gao2024wasserstein} \\
DFL             & & $1$ & --- & --- & Decision-focused learning baseline \\
\midrule
FPTO            & & --- & $1$ & $1$ & Prediction-fairness-embedded PTO \\
Regret-and-MAD  & & $1$ & --- & $1$ & Decision fairness with prediction fairness \\
Regret-and-MSE  & & $1$ & $\{0.1, 0.5, 1\}$ & --- & Decision fairness with prediction accuracy \\
\midrule
FDFL-Scal       & & $1$ & $\{0.1, 0.5, 1\}$ & $1$ & All three objectives; static scalarization \eqref{eq:fdfl-scal-rule} \\
FDFL-PCGrad     & & \checkmark & \checkmark & \checkmark & All three objectives; gradient projection \eqref{eq:pcgrad} \\
FDFL-NashMTL    & & \checkmark & \checkmark & \checkmark & All three objectives; Nash bargaining weights \eqref{eq:nashmtl} \\
FDFL-MGDA       & & \checkmark & \checkmark & \checkmark & All three objectives; minimum-norm combination \\
FDFL-FPLG        & & \checkmark & \checkmark & $\{0, 1\}$ & All three objectives; hybrid static-dynamic combination \\
\bottomrule
\end{tabular}
\end{table}

\paragraph{Training configuration.} 
We use three predictor architectures of increasing \emph{capacity}, which refers to their power to represent the feature-to-impact mapping:
a linear model, a neural network model with two 16-unit hidden layers (MLP-16), and a neural network model with two 64-unit hidden layers (MLP-64), all with a Softplus output layer enforcing nonnegative predictions. Predictors are trained with Adam, with the learning rate and stopping step selected on the validation split, by validation MSE for methods without the regret objective and by validation regret for all other methods that include the regret objective. Each configuration is run with five seeds, and we report means and standard deviations across seeds. 
In both setups, we vary the decision fairness parameter $\alpha \in \{0.5, 1.5, 2\}$ and the number of training instances $N \in \{10, 20, 50\}$. In the synthetic multi-resource setup, we additionally vary the imbalance level $\ell \in \{0, 0.2, 0.4, 0.6, 0.8\}$ and the number of groups $K \in \{2,4\}$.

\paragraph{Evaluation.} We report three performance metrics on test instances: prediction MSE, the MAD of group prediction errors, and normalized decision regret $\mathrm{Regret}(\hat{\mathbf r};\mathbf r) / \lvert W(\mathbf{d}^*(\mathbf r);\mathbf r)\rvert$. 

\paragraph{Implementation environment.} All methods are implemented in PyTorch, with automatic differentiation computing every component of the gradient chain except the decision Jacobian. In the single-resource setup, the decision model is both solved and differentiated using the closed-form formulas; in the multi-resource setup, the decision model is solved by MOSEK (with CLARABEL/SCS fallbacks) and differentiated with cvxpylayers. All experiments run on a Google Colab standard CPU / high-RAM runtime. 
All data, code and results are available in \url{https://github.com/DennisWang2488/fair-dfl}.

\subsection{Results and Findings} \label{sec:exp-results}
We report the test performance of selected methods
in Table~\ref{tab:sec53-hc-main} and 
Table \ref{tab:sec53-md-main}. The full results, covering all methods and the complete grid over predictor capacity, $\alpha$, $\ell$, $N$, $K$, $\mu$ (the weight on the MSE objective) appear in Appendix~\ref{app:extended}.

\input{Tables/tab_sec53_hc_main}
\input{Tables/tab_sec53_md_main}

Our results verify the expected pattern that including an objective in training improves the metric it targets: the decision-regret objective lowers regret relative to prediction-focused training and each prediction-side objective improves the prediction metric it captures, demonstrating the value of multi-objective training (Section~\ref{exp-subsec:value-fdfl}). We further observe complementary roles of the two fairness objectives: 
the prediction fairness objective reduces prediction disparity more effectively under larger group imbalances (Section~\ref{exp-subsec:imbalance}), while the decision regret objective leads to greater regret advantages under weaker predictors and larger $\alpha$ (Section~\ref{exp-subsec:predictor-alpha}). Robustness of these findings is discussed in Appendix ~\ref{app:robustness}.



\subsubsection{The Value of Multi-Objective Training in FDFL} \label{exp-subsec:value-fdfl}
In the single resource setup at $\alpha=2$ with an MLP-64 predictor, Table~\ref{tab:sec53-hc-main} shows that training on decision regret alone (DFL) attains the lowest regret in the method pool, 
but the highest prediction disparity (MAD $71.0$) and the lowest prediction accuracy (MSE $270.3$). 
Adding the prediction-fairness objective reduces the disparity at no decision cost. Regret-and-MAD lowers MAD to $27.2$ while holding regret at $0.129$. 
In this setup, the prediction accuracy objective has the same effect: Regret-and-MSE reaches a MAD of $27.3$ without targeting disparity. Moreover, the FDFL methods, which integrate all three objectives, closely track the performance of Regret-and-MSE. These observations show that the contributions of the two prediction objectives cannot be separated in this single-resource allocation problem. 

In the multiple resource setup, we observe a clear separation between prediction accuracy and prediction fairness.
We examine the results at imbalance $\ell=0.6$ with $\alpha=2$ and an MLP-64 predictor (Table~\ref{tab:sec53-md-main}). Adding the prediction fairness objective to PTO reduces MAD from $0.228$ to $0.142$ in FPTO. Compared to DFL with MAD $0.176$, adding prediction fairness lowers MAD to $0.102$ in Regret-and-MAD, while adding prediction accuracy only lowers MAD to $0.150$ in Regret-and-MSE. Therefore, prediction fairness is not redundant with prediction accuracy. Section~\ref{exp-subsec:imbalance} shows that the prediction fairness advantage is more salient at higher imbalance between groups. 
We also note that PTO and DFL attain similarly low regrets ($0.215$ and $0.216$) in this setting;
Section~\ref{exp-subsec:predictor-alpha} identifies when a decision-focused regret advantage emerges.

\subsubsection{Group Imbalance} \label{exp-subsec:imbalance}
Table~\ref{tab:sec53-md-main} reports performance at varying imbalance levels $\ell$ in the multiple resource setting with $\alpha=2$, an MLP-64 predictor and $K=2$ groups. 
Recall from Section~\ref{subsec:md-knapsack} that as $\ell$ rises, the group means separate more and the disadvantaged group (represented by $g=1$) experiences larger noise inflation, which leads to greater disparities intrinsic to the allocation. The PTO performance illustrates this pattern: as $\ell$ increases from $0$ to $0.8$, MAD rises steeply from $0.042$ to $0.311$, while regret rises modestly from $0.135$ to $0.216$.

The prediction-fairness gain widens monotonically with imbalance in both training paradigms. Among prediction-focused methods, the MAD reduction from PTO to FPTO grows from $0.006$ (the gap between $0.078$ and $0.072$) at $\ell=0.2$ to $0.137$ at $\ell=0.8$; among decision-focused methods, the MAD reduction from DFL to Regret-and-MAD grows from $0.010$ at $\ell=0$ to $0.261$ at $\ell=0.8$. The right panel of Figure \ref{fig:sec53-md-imbalance} visualizes this trend. 

\begin{figure}[htbp]
\centering
\includegraphics[width=0.9\textwidth]{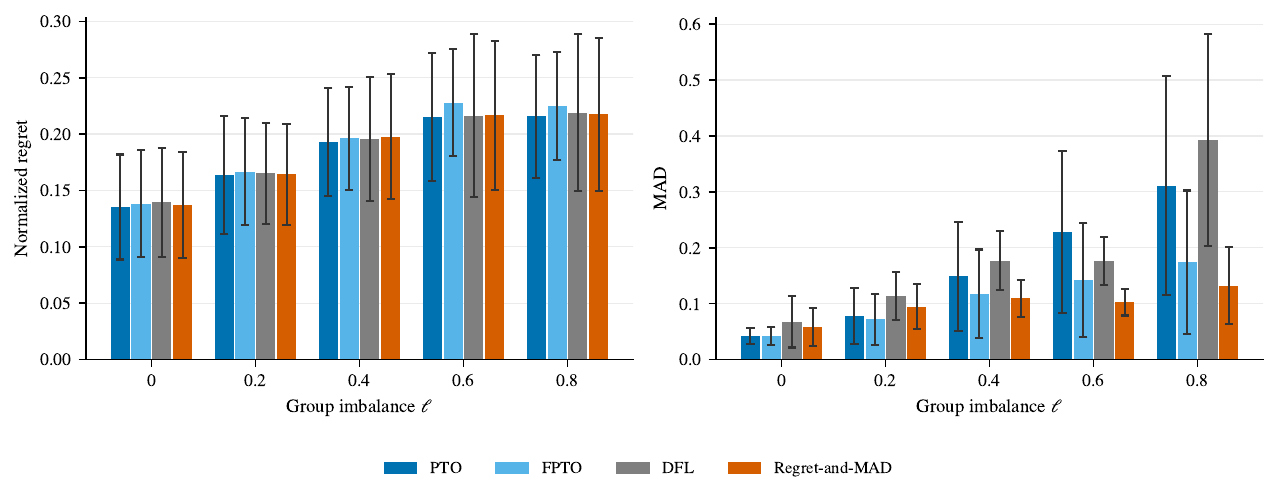}
\caption{Normalized regret (left) and MAD (right) across imbalance levels $\ell$ and $\alpha=2$, MLP-64, $K=2$ in multiple resource allocation setup. 
Bars are means over five seeds with 95\% confidence intervals. 
}
\label{fig:sec53-md-imbalance}
\end{figure}

The decision-regret objective has stable impacts across imbalance levels: with the MLP-64 predictor configuration, Table \ref{tab:sec53-md-main} shows similar regrets from all feature-based methods (only SAA is excluded) at every $\ell$. With our chosen seeds, decision-focused methods can improve over the PTO baseline regret slightly, specifically, FDFL-NashMTL attains the lowest mean regret at four of the five imbalance levels and FDFL-PCGrad at $\ell=0.4$, but every improvement margin falls within the across-seed standard deviation. 

Therefore, in our setting, imbalance shifts the prediction fairness advantage without affecting the regret comparison among methods. With the MLP-64 predictor, prediction accuracy and decision quality are aligned, leaving little regret gap between accuracy-only training and the decision-focused methods. In the next subsection, we demonstrate that the predictor capacity and the decision fairness parameter $\alpha$ shape the decision regret advantage. 

\subsubsection{Predictor Capacity and Decision Fairness Parameter} \label{exp-subsec:predictor-alpha}
Whether prediction accuracy and decision quality align depends on how well the predictor represents the feature-to-impact mapping and how sharply the allocation reacts to prediction errors. The predictor capacity affects the former, while the $\alpha$ choice affects the latter: 
at larger $\alpha$, the $\alpha$-fairness objective places more emphasis on prioritizing low-utility recipients, and small mispredictions can change who is prioritized. 

We compare three predictor types under $\alpha = 2$ in the single resource setting in Table~\ref{tab:sec53-hc-main}. PTO's MSE falls from $136.5$ with a linear predictor to $118.9$ with an MLP-64 predictor, consistent with the capacity ordering \footnote{The across-seed mean MSE of PTO method with an MLP-16 predictor is inflated by high across-seed variance.}. The regret gap between PTO and the best decision-focused method contracts as capacity grows: regret reduction of $32.6\%$ with a linear predictor ($0.193$ to $0.130$), $16.8\%$ with an MLP-16 predictor ($0.155$ to $0.129$), and $7.2\%$ with an MLP-64 predictor ($0.139$ to $0.129$). We observe similar regret-gap contracting patterns in multiple resource allocation results from Table~\ref{tab:app-md-capacity} in Appendix~\ref{app:extended}. 
With limited predictor capacity, MSE-only training leaves prediction errors that are large enough to carry over to affect decision quality, creating the need for a decision regret objective to mitigate such error propagation. Notably, the regret objective alone may be insufficient at low capacity: with the linear predictor, DFL attains a worse regret than PTO (Table ~\ref{tab:sec53-hc-main}). 

We next examine the role of $\alpha$ under a fixed MLP-64 predictor in both allocation settings using the results in Tables~\ref{tab:app-hc-alpha} and~\ref{tab:app-md-alpha} in Appendix~\ref{app:extended}. 
At $\alpha=2$, the decision-focused methods reduce regret by $7.2\%$ (single resource setting, Table \ref{tab:sec53-hc-main}) and $2.8\%$ (multiple resource setting at $\ell=0.6$, Table \ref{tab:sec53-md-main}) relative to PTO. When $\alpha$ is $0.5$ or $1.5$, the regret advantage disappears, and some decision-focused methods deliver worse regrets than PTO: the allocation responds weakly to prediction errors, so an accurate predictor already induces low-regret decisions. At larger $\alpha$, the allocation reacts more sharply to prediction errors, creating large decision regrets without an explicit decision objective.

Our findings show that the two fairness objectives are useful in complementary regimes. The decision-regret advantage emerges under a capacity-limited predictor and a sufficiently large $\alpha$, which cause prediction errors to propagate into decision regret, while the prediction-fairness advantage is driven by group imbalance. Training with the full FDFL objective is therefore desirable to cover both regimes. 
Appendix~\ref{app:robustness} reports supporting analyses: the effect of the prediction accuracy weight $\mu$, a comparison of gradient-combination rules, and confirmation that these findings persist at $\alpha=0.5$, across $N \in \{10,20,50\}$, and at $K=4$.

\section{Conclusion and Discussion}\label{sec:conclusion}
This work studies E2EFO as a unifying framework for fair prediction-informed decision-making, focusing on resource allocation among groups of stakeholders.
We measure prediction disparity by the MAD of prediction errors across groups and decision fairness by the group-based $\alpha$-fairness of allocation utilities. To train a predictor, we propose FDFL, a family of gradient-based algorithms that treat the three objectives---prediction error, prediction disparity, and decision regret---as a multi-task learning problem.
The challenging gradient component, the decision Jacobian, is computed exactly from the closed-form solution we derive for the single-budget $\alpha$-fair allocation and numerically through cvxpylayers for general convex feasible sets. We also establish a finite-sample generalization bound for the scalarized FDFL objective, with constants verified for the allocation tasks.

We evaluate a pool of FDFL algorithms on 
a single-resource healthcare allocation 
and a synthetic multiple-resource allocation where group imbalance is varied by construction. 
Experimental results show that training on decision regret alone attains low regret but can leave large disparity in prediction errors; adding a prediction-side objective removes most of this disparity at no observed regret cost. 
The two fairness objectives serve complementary roles: the prediction-fairness objective reduces prediction disparity more effectively as group imbalance grows, while the decision-regret objective yields greater regret advantages under weaker predictors and larger $\alpha$. Because either regime can arise in practice, these results support training on the full FDFL objective and monitoring all three metrics in deployment for end-to-end fairness.

Several directions remain for future work. We restrict the scope of this work to specific fairness definitions and a deterministic resource allocation as the decision task. Future research could explore E2EFO and FDFL with alternative fairness criteria, in other decision contexts, or under the stochastic view of the decision task (Remark~\ref{rem:deterministic}). On the empirical side, our experiments rely on constructed outcomes: the single resource setup builds benefits from patient records, and the multi-resource setup is fully synthetic. Formulating E2EFO problems with observed real-world outcomes and examining FDFL performance in such settings are natural next steps. On the theoretical side, the generalization bound holds for a fixed scalarization, but it does not explain when decision-focused training outperforms prediction-focused training under fairness considerations or characterize the Pareto frontier the dynamic combinations trace.

\bibliographystyle{plainnat}
\bibliography{sample-base} 

\appendix
\section{Closed-Form Decisions and Decision Jacobian} \label{app:proofs}
\begin{proposition} [Decision Jacobian]
\label{prop:analytical_gradient_group-1}
Let $\mathbf{d}^*$ denote the optimal solution to \eqref{eqsys:alpha-g} with parameter $\mathbf{r}$, and define $\beta = \begin{cases} \frac{1}{\alpha-2}, \text{ if } 0 < \alpha < 1; \\
\frac{-\alpha+2}{-\alpha^2 + 2\alpha-2}, \text{ if } \alpha > 1
\end{cases}$. For $k \in [K]$, $S_k = \sum_{i \in G_k} \left(c_i^{-1/\alpha}r_i^{1/\alpha}\right)^{1-\alpha}$. For $i \in [m]$, $k(i)$ denote the group it belongs to. 

The Jacobian $\frac{\partial \mathbf{d^*(\mathbf{r})}}{\partial \mathbf{r}}$ consists of the following components: for all $i \in [m]$,
\begin{itemize}
    \item  Derivative of decision $d_i$ with respect to $i$'s own impact:
\[ \frac{\partial d_i^*}{\partial r_i} = \frac{1-\alpha}{\alpha}Qc_i^{-\frac{1}{\alpha}}r_i^{\frac{1}{\alpha}-2} \frac{S_{k(i)}^{\beta}}{\sum_{k=1}^K S_k^{1+\beta}} + Qc_i^{-\frac{1}{\alpha}}r_i^{\frac{1}{\alpha}-1} \frac{\partial S_{k(i)}}{\partial r_i} \left( \frac{\beta S_{k(i)}^{\beta-1}}{\sum_{k=1}^K S_k^{1+\beta}} - \frac{(1+\beta)S_{k(i)}^{2\beta}}{(\sum_{k=1}^K S_k^{1+\beta})^2}\right).  \]
     \item  Derivative of decision $d_i$ with respect to the impact of another member $\ell$ of the same group:
\[
\frac{\partial d_i^*}{\partial r_\ell}
= Qc_i^{-\frac{1}{\alpha}}r_i^{\frac{1}{\alpha}-1}\frac{\partial S_{k(i)}}{\partial r_\ell}
\left( \frac{\beta\, S_{k(i)}^{\,\beta - 1}}{\sum_{k=1}^K S_k^{1+\beta}} - \frac{(1+\beta)\, S_{k(i)}^{\,2\beta}}{(\sum_{k=1}^K S_k^{1+\beta})^2} \right), ~\forall \ell \neq i, \ell \in G_{k(i)}.
\]
     \item Derivative of decision $d_i$ with respect to the impact of a member $j$ of a different group: 
     \[
     \frac{\partial d_i^*}{\partial r_j} = -Qc_i^{-\frac{1}{\alpha}}r_i^{\frac{1}{\alpha}-1}S_{k(i)}^{\beta}\frac{\partial S_{k(j)}}{\partial r_j} \frac{(\beta+1)S_{k(j)}^{\beta}}{(\sum_{k=1}^K S_k^{1+\beta})^2}, ~\forall j \notin G_{k(i)}. 
     \]
\end{itemize}
In these formulas, the gradient of $S_k$ is given by
\[
\frac{\partial S_k}{\partial r_i} = \frac{1-\alpha}{\alpha}c_i^{1-\frac{1}{\alpha}}r_i^{\frac{1}{\alpha}-2}, ~\forall i \in G_k;
~ \frac{\partial S_k}{\partial r_i} = 0, ~\forall i \notin G_k.
\]
\end{proposition}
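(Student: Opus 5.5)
The plan is to obtain the Jacobian by direct differentiation of the closed-form solution in Proposition~\ref{prop:closed_form_solution_group-1}. Both cases $0<\alpha<1$ and $\alpha>1$ share one algebraic form once the exponent $\beta$ is introduced as in the statement. Write
\[
d_i^* \;=\; Q\,c_i^{-\frac{1}{\alpha}}\,r_i^{\frac{1}{\alpha}-1}\,\frac{S_{k(i)}^{\beta}}{T},\qquad T:=\sum_{k=1}^K S_k^{1+\beta},
\]
using $r_i^{\frac{1-\alpha}{\alpha}}=r_i^{\frac1\alpha-1}$. This representation shows that $d_i^*$ depends on $\mathbf r$ through only three channels. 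The first is the explicit factor $r_i^{\frac1\alpha-1}$. The second is the own-group aggregate $S_{k(i)}$ in the numerator. The third is the normalizer $T$, which depends on every $S_k$.

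First I check smoothness. Since $c_i>0$, $r_i>0$ and $Q>0$, every $S_k$ is a finite sum of positive terms, so $S_k>0$ and $T>0$. Real powers of positive quantities are $C^\infty$, so $d_i^*$ is a smooth function of $\mathbf r$ on $\mathbb R^m_{>0}$. Because Proposition~\ref{prop:closed_form_solution_group-1} holds for every positive $\mathbf r$ (the solution is strictly positive, so the KKT active set does not change), the formula is valid on an open neighborhood and may be differentiated pointwise. Next I compute the building block $\partial S_k/\partial r_i$. The summand of $S_k$ indexed by $i$ is $c_i^{-\frac{1-\alpha}{\alpha}}r_i^{\frac{1-\alpha}{\alpha}}$. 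Differentiating in $r_i$ gives $\frac{1-\alpha}{\alpha}c_i^{1-\frac1\alpha}r_i^{\frac1\alpha-2}$ when $i\in G_k$, and $0$ otherwise, which matches the stated formula.

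Then I apply the product and quotient rules and organize the result by where $r_j$ enters. The derivative of $S_{k(i)}^\beta/T$ with respect to $S_{k(i)}$ is $\beta S_{k(i)}^{\beta-1}/T-(1+\beta)S_{k(i)}^{2\beta}/T^2$, since $\partial T/\partial S_{k(i)}=(1+\beta)S_{k(i)}^{\beta}$. Its derivative with respect to $S_{k'}$ for $k'\neq k(i)$ is $-S_{k(i)}^\beta(1+\beta)S_{k'}^\beta/T^2$. The three cases then follow by the chain rule.

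\emph{Own impact ($j=i$).} All three channels are active. The explicit factor contributes $\frac{1-\alpha}{\alpha}Qc_i^{-\frac1\alpha}r_i^{\frac1\alpha-2}S_{k(i)}^\beta/T$. The $S_{k(i)}$ channel contributes $Qc_i^{-\frac1\alpha}r_i^{\frac1\alpha-1}\,\partial S_{k(i)}/\partial r_i$ times the bracket above.

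\emph{Same-group member ($\ell\neq i$, $\ell\in G_{k(i)}$).} Only the $S_{k(i)}$ channel is active, which gives the same bracket term with $\partial S_{k(i)}/\partial r_\ell$.

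\emph{Different group ($j\notin G_{k(i)}$).} Only $S_{k(j)}$ inside $T$ moves, which gives $-Qc_i^{-\frac1\alpha}r_i^{\frac1\alpha-1}S_{k(i)}^\beta(1+\beta)S_{k(j)}^\beta\,\partial S_{k(j)}/\partial r_j/T^2$.

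The computation itself is routine. The one point needing care, and the step I expect to be the main obstacle, is making sure the normalizer channel is not double-counted or dropped. For $j$ in $i$'s own group, the effect of $T$ acts through $S_{k(i)}$ and is therefore already absorbed into the second term of the bracket. Only for out-of-group $j$ does it appear as a separate term. A second check is that one exponent convention covers both $\alpha$-ranges, which holds because the two cases differ only in the value of $\beta$.
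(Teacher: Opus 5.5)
Your proposal is correct and takes essentially the same approach as the paper: its proof directly differentiates the closed form of Proposition~\ref{prop:closed_form_solution_group-1}, written in both $\alpha$-regimes as $Q c_i^{-1/\alpha} r_i^{1/\alpha-1} S_{k(i)}^{\beta}/\sum_{k} S_k^{1+\beta}$ with the case-dependent $\beta$. You additionally spell out the chain-rule bookkeeping that the paper leaves as routine: the explicit factor, the own-group aggregate $S_{k(i)}$ (which absorbs the normalizer's dependence for same-group indices), and the normalizer acting through $S_{k(j)}$ only for out-of-group $j$.
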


\begin{proposition} [Closed-Form Decisions and Decision Jacobian, Special Cases of $\alpha$]
\label{prop:closed_form_solution_group_special}\label{prop:analytical_gradient_special_group}
For special cases of $\alpha = 0, 1, \infty$, the optimal solution $\mathbf{d}^*$ to \eqref{eqsys:alpha-g} with parameter $\mathbf{r}$ and its Jacobian $\frac{\partial \mathbf{d^*(\mathbf{r})}}{\partial \mathbf{r}}$ are, respectively, given by:
\begin{itemize}
    \item $\alpha = 0$: $d_i^* = \frac{Q}{c_i}$ if $i = \arg\max_j \frac{r_j}{c_j}$ and $d_i^* = 0$ otherwise, allocating the entire budget to the stakeholder with the highest reward-to-cost ratio. $\mathbf{d^*(\mathbf{r})}$ is not differentiable at the maximizer $i^* = \arg\max_j \frac{r_j}{c_j}$, and has $\frac{\partial d_i^*}{\partial r_j} = 0$ for all $i \neq i^*$ and all $j$.
    \item $\alpha = 1$: 
    $d_i^* = \frac{Q}{mc_i}, ~ \forall i \in [m]$; $\frac{\partial d_i^*}{\partial r_k} = 0$ for all $i,k \in [m]$ as $\mathbf{d}^*(\mathbf{r})$ does not depend on $\mathbf{r}$.
    \item $\alpha \to \infty$: $d_i^* = \frac{Q}{r_i \sum_{j=1}^m \frac{c_j}{r_j}}, ~ \forall i \in [m]$, equalizing the utilities of all stakeholders, with $\frac{\partial d_i^*}{\partial r_k} = \begin{cases} -\frac{Q}{r_k^2(\sum_{j=1}^m c_j/r_j)} + \frac{Qc_k}{r_k^3 (\sum_{j=1}^m c_j/r_j)^2}, & \text{ if } i=k; \\ \frac{Qc_k}{r_i r_k^2 (\sum_{j=1}^m c_j/r_j)^2}, & \text{ if } i \neq k\end{cases}$.
\end{itemize}
\end{proposition}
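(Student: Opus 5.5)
The plan is to treat the three values of $\alpha$ separately. For $\alpha=0$ and $\alpha=1$ we solve \eqref{eqsys:alpha-g} directly. For $\alpha\to\infty$ we take a limit of the closed form in Proposition~\ref{prop:closed_form_solution_group-1}. In every case the Jacobian is then obtained by differentiating the explicit formula.

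\textbf{Case $\alpha=0$.} By \eqref{eq:alpha-group-g}--\eqref{eq:alpha-group-W}, $W^g_0(\mathbf{u})=\sum_k h_k=\sum_i r_i d_i$. Problem \eqref{eqsys:alpha-g} is then a linear program over the simplex-like knapsack $\{\mathbf{d}\ge \mathbf{0}:\sum_i c_i d_i\le Q\}$. Its vertices are $\mathbf{0}$ and $(Q/c_i)\mathbf{e}_i$, with objective values $0$ and $Q\,r_i/c_i$. Since $r_i>0$, the optimum is the vertex maximizing $r_i/c_i$; it is unique whenever that maximizer is unique.

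On the open set of $\mathbf{r}$ where the maximizer $i^*$ is unique, $\mathbf{d}^*$ is locally constant. Hence $\partial d_i^*/\partial r_j=0$ for every $i\neq i^*$, which is the stated claim. At ties the argmax switches and $d^*_{i^*}$ jumps between $Q/c_{i^*}$ and $0$, so $\mathbf{d}^*$ is discontinuous there. We read the statement's non-differentiability at the maximizer in this sense: the map is only piecewise constant, so nothing beyond the zero off-maximizer entries is claimed.

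\textbf{Case $\alpha=1$.} By Remark~\ref{rem:alpha-coincide}, $W^g_1=\sum_i\log(r_id_i)=\sum_i\log r_i+\sum_i\log d_i$. The first term is constant in $\mathbf{d}$, so we maximize $\sum_i\log d_i$ subject to the budget. The objective is strictly concave and tends to $-\infty$ at the boundary $d_i=0$, so the solution is interior and the budget binds. The KKT conditions read $1/d_i=\nu c_i$, and substituting into $\sum_i c_id_i=Q$ gives $\nu=m/Q$, hence $d_i^*=Q/(mc_i)$. This formula is free of $\mathbf{r}$, so the Jacobian vanishes.

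\textbf{Case $\alpha\to\infty$.} We pass to the limit in the $\alpha>1$ formula of Proposition~\ref{prop:closed_form_solution_group-1}, using four elementary limits:
\begin{itemize}
\item $c_i^{-1/\alpha}\to1$;
\item $r_i^{(1-\alpha)/\alpha}\to r_i^{-1}$;
\item $S_k=\sum_{i\in G_k}c_i^{(\alpha-1)/\alpha}r_i^{(1-\alpha)/\alpha}\to\sum_{i\in G_k}c_i/r_i$;
\item $\beta=\frac{\alpha-2}{\alpha^2-2\alpha+2}\to0$.
\end{itemize}
Every $S_k$ has a positive finite limit, so $S_{k(i)}^\beta\to1$ and $\sum_kS_k^{1+\beta}\to\sum_j c_j/r_j$. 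Therefore
\[
d_i^*\to \frac{Q}{r_i\,T},\qquad T=\sum_{j=1}^m c_j/r_j .
\]
As a sanity check, this allocation equalizes the utilities at $r_id_i^*=Q/T$ and exhausts the budget. It is therefore the unique max-min (Rawlsian) allocation, which the paper identifies as the $\alpha\to\infty$ limit.

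For the Jacobian, differentiate $d_i^*=Q\,r_i^{-1}T^{-1}$ using $\partial T/\partial r_k=-c_k/r_k^2$.
\begin{itemize}
\item For $k\ne i$, only $T$ depends on $r_k$, giving $\partial d_i^*/\partial r_k = Qc_k/(r_ir_k^2T^2)$.
\item For $k=i$, the product rule adds the term $-Q/(r_k^2T)$ from differentiating $r_i^{-1}$, giving $-Q/(r_k^2T)+Qc_k/(r_k^3T^2)$.
\end{itemize}
Both expressions match the statement.

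\textbf{Main obstacle.} The delicate point is justifying $\alpha\to\infty$ as a limit of optimizers rather than as a formal substitution. The plan is to rely on the paper's convention that the $\alpha\to\infty$ case means the limit of $\mathbf{d}^*_\alpha$. The pointwise limits above are uniform on compact sets of $(\mathbf{r},\mathbf{c})$ with positive entries, which makes the limit well defined. The max-min characterization then serves as independent confirmation of the limiting formula.
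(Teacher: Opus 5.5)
Your proposal is correct. For $\alpha=0$ and $\alpha=1$ it matches the paper's proof: the greedy/vertex solution of the linear program, and the KKT conditions for $\sum_i\log(r_id_i)$ with a binding budget. Your reading of the $\alpha=0$ ``non-differentiability'' is also the paper's: the map is piecewise constant and jumps only where the maximizer changes.

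The $\alpha\to\infty$ case is where you take a different route.

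\emph{The paper's route.} It does not use Proposition~\ref{prop:closed_form_solution_group-1} here. It replaces the objective by the limiting max-min criterion $\max\min_k g_k(\mathbf d)$. It then argues heuristically that each harmonic-mean-like group score is dominated by its smallest utility, so the optimum equalizes all utilities $r_id_i=C$, and it fixes $C$ from the budget.

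\emph{Your route.} You pass to the limit in the $\alpha>1$ closed form, using $c_i^{-1/\alpha}\to1$, $S_k\to\sum_{i\in G_k}c_i/r_i$ and $\beta\to0$. This buys two things:
\begin{itemize}
\item It rigorously shows the formula is $\lim_{\alpha\to\infty}\mathbf d^*_\alpha$, which is the natural meaning of the $\alpha\to\infty$ case.
\item It avoids having to identify what the two-level group objective converges to, a step the paper only gestures at.
\end{itemize}
The price is a dependence on Proposition~\ref{prop:closed_form_solution_group-1}, which does hold: redoing the KKT conditions for $\alpha>1$ gives $D_k\propto S_k^{(\alpha-2)/(\alpha^2-2\alpha+2)}$.

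\emph{Your sanity check.} This supplies the rigor the paper's route lacks. Spell out the one-line bound: $\min_i r_id_i\ge t$ forces $d_i\ge t/r_i$, hence $tT\le Q$, with equality only at $d_i=t/r_i$ because every $c_i>0$. This proves that the equalizing allocation is the unique max-min allocation, which the paper only asserts. With it, your write-up covers both readings of the $\alpha\to\infty$ case.

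In both routes the Jacobian is direct differentiation of $Q/(r_iT)$, and your entries agree with the statement.
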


\textbf{Proof for closed-form decisions and decision Jacobian:}

\begin{itemize}
    \item \textbf{Special case $\alpha = 0$: } The objective function simplifies to the linear utility sum, $W_{\alpha}^g(\mathbf{u}) = \sum_{k=1}^K \sum_{i \in G_k} r_i d_i$.
    Maximizing a linear function under a knapsack constraint yields a greedy solution: allocate the full budget to the stakeholder with the highest reward-to-cost ratio, $\frac{r_i}{c_i}$. Therefore, the optimal solution assigns $d^*_i = \frac{Q}{c_i}$ if $\frac{r_i}{c_i} = \max_{j \in [m]} \{\frac{r_j}{c_j}\}$, and $d^*_i = 0$ for all other stakeholders. 

    This function $\mathbf{d}^*(\mathbf{r})$ is piecewise constant, so its derivative is zero everywhere except at points where the identity of $i^*$ changes, at those non-differentiable points the zero matrix is a valid subgradient selection for optimization algorithms.

    \item \textbf{General case $\alpha > 0$ and $\alpha \neq 1$:}  When $\alpha > 0$, the objective function is concave in $\mathbf{d}$, and we need a different strategy to conclude the optimal solution. For notation ease, we denote $W_{\alpha}^g(\mathbf{u})$ with $W(g(\mathbf{d}))$, where $g(\mathbf{d}) = (g_1(\mathbf{d}),\ldots,g_K(\mathbf{d}))$ computes all $g_k(\mathbf{u})$ as defined in \eqref{eq:alpha-group-g}. The Lagrangian function of \eqref{eqsys:alpha-g} is
$\mathcal{L}(\mathbf{d},\nu,\boldsymbol{\xi}) = -W(g(\mathbf{d})) + \nu(\sum_{i=1}^m c_i d_i - Q) - \sum_{i=1}^m \xi_i d_i$,
where $\nu \in \mathbb{R}_{\geq 0}$ and $\xi \in \mathbb{R}_{\geq 0}^m$ are Lagrangian multipliers. The Karush-Kuhn-Tucker (KKT) stationarity condition requires $\frac{\partial \mathcal{L}}{\partial d_i} = -\frac{\partial W}{\partial g_k} \cdot \frac{\partial g_k}{\partial d_i} + \nu c_i - \xi_i = 0$ for each $i \in G_k$ (by the chain rule), with $\frac{\partial W}{\partial g_k} = g_k(\mathbf{d})^{-\alpha}$. 

We compute $\frac{\partial g_k}{\partial d_i} = A_k\, r_i^{1-\alpha}d_i^{-\alpha}$ 
with $A_k = 1$ for $0 < \alpha \leq 1$ and $A_k = \bigl(\frac{\alpha-1}{\sum_{i \in G_k} (r_id_i)^{1-\alpha}}\bigr)^2$ for $\alpha > 1$. Complementary slackness ($\xi_i d_i = 0$) requires $\xi_i = 0, d_i > 0$ or $\xi_i > 0, d_i = 0$, and the latter conflicts with stationarity; hence $\xi_i = 0$ for all $i$, and solving the stationarity condition $g_k^{-\alpha} A_k\, r_i^{1-\alpha}d_i^{-\alpha} = \nu c_i$ for $d_i$ gives, with $k(i)$ the group index of $i$,
\begin{equation} \label{pf-group:d-eq-2}
    d_i = D_{k(i)}\, c_i^{-\frac{1}{\alpha}}r_i^{\frac{1-\alpha}{\alpha}},
\qquad
D_k := \bigl(\nu g_k^{\alpha}/A_k\bigr)^{-1/\alpha},
\end{equation}
where $D_k$ collects the factors common to all members of group $k$.

Substituting \eqref{pf-group:d-eq-2} into the definition of $g_k(\mathbf{d})$ gives $g_k = \frac{1}{1-\alpha}D_k^{1-\alpha}S_k$ for $0<\alpha<1$ and $g_k = (\alpha-1)(D_k^{1-\alpha}S_k)^{-1}$ for $\alpha>1$, where $S_k := \sum_{i \in G_k} (c_i^{-1/\alpha} r_i^{1/\alpha})^{1-\alpha}$. Substituting these, together with the regime-specific $A_k$, back into the definition of $D_k$ in \eqref{pf-group:d-eq-2} and rearranging (the resulting exponents $\alpha^2-2\alpha$ and $-\alpha^2+2\alpha-2$ are nonzero on the respective ranges of $\alpha$) yields, in both regimes, the common form $D_k = \nu^{e} P_k$, where the exponent $e$ and the group prefactor $P_k$ depend only on $\alpha$ and $S_k$:
\[
(e,\,P_k) =
\begin{cases}
\Bigl(\dfrac{1}{\alpha^2-2\alpha},\ \bigl(\tfrac{S_k}{1-\alpha}\bigr)^{\alpha e}\Bigr), & \text{if } 0<\alpha<1;\\[6pt]
\Bigl(\dfrac{1}{-\alpha^2+2\alpha-2},\ \bigl(\tfrac{S_k}{\alpha-1}\bigr)^{(2-\alpha)e}\Bigr), & \text{if } \alpha>1.
\end{cases}
\]

Lastly, we plug the $d_i$ formula \eqref{pf-group:d-eq-2} into the budget constraint $\sum_{i=1}^m c_i d_i = Q$. For each group $k$, recall $S_k = \sum_{i \in G_k} c_i^{\frac{\alpha-1}{\alpha}} r_i^{\frac{1-\alpha}{\alpha}}$, so $\sum_{i \in G_k} c_id_i = D_k S_k$, and the constraint eliminates $\nu$ in one step, identically in both regimes:
\[
\sum_{k=1}^K D_k S_k = \nu^{e}\sum_{k=1}^K S_k P_k = Q
\;\Rightarrow\; D_k = \frac{Q\, P_k}{\sum_{m=1}^K S_m P_m}
\;\Rightarrow\; d_i = \frac{Q\, P_{k(i)}\, c_i^{-\frac{1}{\alpha}}r_i^{\frac{1-\alpha}{\alpha}}}{\sum_{m=1}^K S_m P_m},
\]
which, after substituting the regime-specific $P_k$, is the stated closed form.

The decision Jacobian formula in Proposition~\ref{prop:analytical_gradient_group-1} follow from direct differentiation of the decision formulas. 

    \item \textbf{Special case $\alpha=1$:} With $h_k = \prod_{i \in G_k} u_i$ from \eqref{eq:alpha-group-g}, the outer aggregation gives $W_1^g(\mathbf{u}) = \sum_{k \in [K]} \log h_k = \sum_{i=1}^m \log u_i = W_1(\mathbf{u})$, so the objective is $\max \sum_{i=1}^m \log(r_i d_i)$, whose logarithmic nature ensures an interior solution $d_i^* > 0$. The KKT stationarity condition $\frac{1}{d_i} - \nu c_i = 0$ implies $d_i = 1/(\nu c_i)$, and the budget constraint gives $\sum_i c_i/(\nu c_i) = m/\nu = Q$, so $d_i^* = Q/(mc_i)$, which is independent of $\mathbf{r}$ with zero gradient.
    \item \textbf{Special case $\alpha \to \infty$:}  The objective is $\max \min_k g_k(\mathbf{d})$, and for large $\alpha$ each harmonic-mean-like $g_k(\mathbf{d})$ is dominated by the smallest utility $r_id_i$ within the group, so the optimum equalizes all individual utilities across groups: $r_i d_i = C$, i.e., $d_i = C/r_i$. The budget constraint gives $\sum_i c_i (C/r_i) = C \sum_i (c_i/r_i) = Q$, so $C = Q / \sum_j (c_j/r_j)$ and $d_i = \frac{Q}{r_i \sum_j(c_j/r_j)}$. The decision Jacobian formula follows from direct differentiation of the decisions.  
\end{itemize}

\subsection{Individual-Based Fairness}
The individual-based counterpart of \eqref{eqsys:alpha-g} replaces the group welfare $W_{\alpha}^g$ with the individual measure $W_{\alpha}$ of \eqref{eq:alphafairness}:
\begin{equation} \label{eqsys:alpha-ind}
    \max_{\mathbf{d}} ~  W_{\alpha} (\mathbf{u}) \quad
    \text{ s.t. } u_i = \hat{r}_id_i, ~ d_i \geq 0, ~ \forall i; \sum_{i=1}^m c_id_i \leq Q.
\end{equation}
\begin{proposition} [Individual-based Closed-Form Decisions and Gradients for Special Cases of $\alpha$]
\label{prop:closed_form_solution_special}\label{prop:analytical_gradient_special}
For the individual-based problem \eqref{eqsys:alpha-ind} with parameter $\mathbf{r}$, the optimal solutions and Jacobian for $\alpha = 0$ and $\alpha \to \infty$ coincide with the group-based formulas of Proposition~\ref{prop:closed_form_solution_group_special}. For $\alpha = 1$, the optimal solution is $d_i^* = \frac{Q}{mc_i}$ for all $i \in [m]$, with $\frac{\partial d_i^*}{\partial r_k} = 0$ for all $i,k \in [m]$ as $\mathbf{d}^*(\mathbf{r})$ does not depend on $\mathbf{r}$.
\end{proposition}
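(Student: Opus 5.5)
The plan is to reduce each special case either to a case already settled in Proposition~\ref{prop:closed_form_solution_group_special} or to a one-line KKT computation for \eqref{eqsys:alpha-ind}. The Jacobian claims then follow by differentiating the resulting explicit formulas.

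\emph{Case $\alpha=0$.} Both $W_0$ and $W^g_0$ equal the utility sum $\sum_i r_i d_i$, so \eqref{eqsys:alpha-ind} and \eqref{eqsys:alpha-g} are literally the same linear program over the knapsack. The greedy solution and the (piecewise-zero) Jacobian are therefore inherited verbatim from Proposition~\ref{prop:closed_form_solution_group_special}.

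\emph{Case $\alpha=1$.} By Remark~\ref{rem:alpha-coincide}, $W^g_1=W_1$, so again the problems coincide and the group-based result applies. For self-containment I would also run the KKT argument directly on $\max\sum_i\log(r_id_i)$.
\begin{itemize}
\item The logarithm forces $d_i>0$, so all multipliers for $d_i\ge 0$ vanish.
\item Stationarity gives $1/d_i=\nu c_i$.
\item The budget binds by monotonicity, so $m/\nu=Q$ and hence $d_i^*=Q/(mc_i)$.
\end{itemize}
This is independent of $\mathbf r$, so the Jacobian is zero.

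\emph{Case $\alpha\to\infty$.} This is the only case needing care, because ``the solution at $\alpha=\infty$'' must be interpreted as a limit. I would first solve \eqref{eqsys:alpha-ind} for general $\alpha\in(0,1)\cup(1,\infty)$ by KKT.
\begin{itemize}
\item The objective is strictly concave and $u_i^{1-\alpha}/(1-\alpha)$ has infinite marginal value at $0$, so the solution is interior.
\item Stationarity $r_i^{1-\alpha}d_i^{-\alpha}=\nu c_i$ gives $d_i=\nu^{-1/\alpha}c_i^{-1/\alpha}r_i^{(1-\alpha)/\alpha}$.
\item The binding budget then yields
\[
d_i^*(\alpha)=\frac{Q\,c_i^{-1/\alpha}r_i^{1/\alpha-1}}{\sum_j c_j^{1-1/\alpha}r_j^{1/\alpha-1}}.
\]
\end{itemize}
Letting $\alpha\to\infty$ termwise, with $c_i,r_i>0$ fixed, gives $d_i^*=Q/\bigl(r_i\sum_j c_j/r_j\bigr)$. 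This matches the group-based formula of Proposition~\ref{prop:closed_form_solution_group_special}.

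As a consistency check, I would verify directly that this allocation solves the Rawlsian problem $\max\min_i r_id_i$ over the knapsack. If some utility exceeded the minimum, moving a small amount of budget from that stakeholder to the minimizers would raise the minimum. Hence the optimum equalizes $r_id_i=C$, and the budget pins down $C=Q/\sum_j(c_j/r_j)$.

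The Jacobian entries then follow by differentiating $Q\,r_i^{-1}\bigl(\sum_j c_j/r_j\bigr)^{-1}$, using $\partial(\sum_j c_j/r_j)/\partial r_k=-c_k/r_k^2$. This reproduces exactly the diagonal and off-diagonal expressions stated for the group case.

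The main obstacle is justifying the $\alpha\to\infty$ case as a genuine limit of optimal decisions rather than a heuristic. The explicit formula for $d_i^*(\alpha)$ resolves this, since it converges pointwise and its limit is the unique max-min allocation. I would note that uniqueness of the limit holds because the equalizing allocation is the only max-min optimum when all $r_i,c_i>0$.
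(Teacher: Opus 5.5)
Your proposal is correct. For $\alpha=0$ and $\alpha=1$ it follows the paper's proof. The paper also inherits $\alpha=0$ because both objectives collapse to the utility sum, and it handles $\alpha=1$ by exactly your KKT computation; your extra appeal to Remark~\ref{rem:alpha-coincide} is a valid shortcut.

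The routes differ for $\alpha\to\infty$. The paper argues at the level of objectives. It asserts that both $W_\alpha$ and $W^g_\alpha$ reduce to $\max_{\mathbf d}\min_i r_i d_i$, and reuses the group-based equalization argument verbatim. The same proof does derive the general-$\alpha$ individual closed form $d_i=Q c_i^{-1/\alpha}r_i^{1/\alpha-1}/\sum_j c_j^{1-1/\alpha}r_j^{1/\alpha-1}$, but never takes its limit.

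You instead define the $\alpha\to\infty$ allocation as the pointwise limit of that closed form. You then prove separately, by an exchange argument, that this limit is the unique max-min optimum.

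Your route buys rigor where the paper is loose. Going from convergence of objectives to convergence of maximizers needs an argmax-continuity step: uniform convergence of the normalized objective to $\min_i u_i$ on the compact feasible set, plus uniqueness of the limiting maximizer. The paper leaves this implicit; your explicit formula sidesteps it.

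The paper's route is shorter and makes the Rawlsian interpretation immediate. It also asserts coincidence at the level of the optimization problems, whereas you establish it for the resulting formulas. For the same rigor on the group side, your limit argument applied to Proposition~\ref{prop:closed_form_solution_group-1} works, since $\beta\to 0$ and $S_k\to\sum_{i\in G_k}c_i/r_i$ as $\alpha\to\infty$.
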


\textbf{Proof for closed-form decisions:}
For $\alpha = 0$ and $\alpha \to \infty$, the individual-based objective coincides with the group-based one---both reduce to $\max_{\mathbf{d}} \sum_i r_i d_i$ and to $\max_{\mathbf{d}} \min_i r_i d_i$, respectively---so the group-based derivations and gradients apply verbatim, and it remains to treat the following two cases.
\begin{itemize}
    \item \textbf{General case $\alpha >0 $ and $\alpha \neq 1$:} The Lagrangian of problem \eqref{eqsys:alpha-ind} is
$\mathcal{L}(\mathbf{d}, \nu, \boldsymbol{\xi}) = \sum_{i=1}^m \frac{(r_id_i)^{1-\alpha}}{1-\alpha} - \nu (\sum_{i=1}^m c_i d_i - Q) + \sum_{i=1}^m \xi_i d_i$.
For any $\alpha > 0$, the objective penalizes zero allocations with either an infinite marginal utility ($0 < \alpha < 1$) or an infinite penalty ($\alpha \geq 1$), ensuring an interior solution $d_i^* > 0$ given $Q, c_i, r_i > 0$; by complementary slackness, $\xi_i=0$ for all $i$. The KKT stationarity condition $\frac{\partial \mathcal{L}}{\partial d_i}=0$ reads
$(r_i)^{1-\alpha} d_i^{-\alpha} - \nu c_i = 0$, i.e., $d_i = (\nu c_i)^{-1/\alpha} r_i^{\frac{1-\alpha}{\alpha}}$.
At the optimum, the budget constraint holds with equality. Substituting $d_i$ into the constraint gives
\[
\sum_{j=1}^m c_j \left( (\nu c_j)^{-1/\alpha} r_j^{\frac{1-\alpha}{\alpha}} \right) = \nu^{-1/\alpha} \sum_{j=1}^m c_j^{1-\frac{1}{\alpha}} r_j^{\frac{1-\alpha}{\alpha}} = Q,
\]
so $\nu^{-1/\alpha} = Q\big/\sum_{j=1}^m c_j^{1-\frac{1}{\alpha}} r_j^{\frac{1-\alpha}{\alpha}}$, and substituting this back into the expression for $d_i$ gives the closed form solution formula
\[
d_i = \frac{c_i^{-\frac{1}{\alpha}} \cdot r_i^{\frac{1}{\alpha} - 1} \cdot Q}{\sum_{j=1}^m c_j^{1-\frac{1}{\alpha}} \cdot r_j^{\frac{1}{\alpha} - 1}}, \quad \forall i \in [m].
\]
The corresponding Jacobian follows from direct differentiation of this closed form.

\item \textbf{Special case $\alpha=1$:} The objective becomes $\max \sum_{i=1}^m \log(r_i d_i)$, whose logarithmic nature ensures an interior solution $d_i^* > 0$. The KKT stationarity condition $\frac{1}{d_i} - \nu c_i = 0$ implies $d_i = 1/(\nu c_i)$, and the budget constraint gives $\sum_i c_i/(\nu c_i) = m/\nu = Q$, so $d_i^* = Q/(mc_i)$, which is independent of $\mathbf{r}$ with zero gradient.
\end{itemize}

\section{Additional Experiment Details} \label{app:details}

\subsection{Data Generation Details} \label{app:datagen}

\subsubsection{Single Resource Allocation.}
We construct a benefit score $r_i$ for each patient $i$ by combining two components: (1) potential for health improvement, and (2) cost savings.
We treat a patient's number of active chronic illnesses as a proxy for the health improvement potential, denoted $h_i$, reflecting that the care resource helps patients manage chronic conditions. We apply min-max normalization to scale the chronic illness count from dataset to $[0,1]$.
Cost savings, denoted $s_i$, are estimated from avoidable healthcare costs from program enrollment. The original avoidable costs range from \$0 to \$642{,}700. To reduce skewness, we apply the log transformation $\log(1+s_i)$ adopted in \cite{doi:10.1126/science.aax2342} followed by min-max normalization to scale the values to $[0,1]$.
The unscaled benefit score is computed as a weighted average, $b_i = 0.5 h_i + 0.5 s_i$. To ensure numerical stability, we rescale these values to lie in the interval $[2, 101]$ as the ground-truth benefit score $r_i = \max\{100\, b_i, 1 \}+1$. The allocation cost $c_i$ is computed from $c_i = \max \{10\,\tilde c_i, 1 \}$, where $\tilde c_i$ is the min-max normalized healthcare spending for patient $i$ from the dataset.

\subsubsection{Multiple Resource Allocation.}
For each stakeholder $i$ with features $\mathbf{x}_i \sim \mathcal{N}(\mathbf{0}_d, I_d)$ and group label $g_i \in \{0,\ldots,K-1\}$, we generate ground-truth benefits by combining a noiseless feature-to-benefit signal, group bias, and per-group noise. The noiseless benefit signal is a degree-$P$ polynomial of the features, $f(x) = \sum_{p=1}^{P} \bigl(\mathbf{x}^{\odot p}\bigr)\,W_p$ with $W_p \in \mathbb{R}^{d \times R}$, $(W_p)_{kj} \sim \mathcal{N}(0, 1/p^2)$ and $P=2$, where $\mathbf{x}^{\odot p}$ denotes the element-wise $p$-th power. This noiseless prediction target is nonlinear in $\mathbf{x}$. For resource type $j$, we fix its standard deviation in the associated benefit signal $\sigma_j$ and set the baseline benefit-noise scale as $\eta_j = \frac{\sigma_j}{\sqrt{\mathrm{SNR}}}$ with $\mathrm{SNR}{=}5$. The benefit is constructed as:
\[
  r_{ij} = \operatorname{softplus}\bigl( f_j(\mathbf{x}_i) + \rho(g_i)\,\beta_b + \varepsilon_{ij} \bigr) + 0.05,
  \quad \varepsilon_{ij} \sim \mathcal{N}\bigl(0, (\eta_j\, s_b(g_i))^2\bigr),
\]
where $ \rho(g_i)\,\beta_b$ is the additive benefit bias for group $i$, which applies a scaling function $\rho(g_i)$ to the benefit bias factor $\beta_b$, and $\varepsilon_{ij}$ is the noise accounting for group-specific scaling factor $s_b(g_i)$. We define $\rho(g) = 1 - 2g/(K-1)$ and $s_b(g) = 1 + (\nu_b-1)\,g/(K-1)$. The parameters $\beta_b$ and $\nu_b$ are explained below with the imbalance parameter $\ell$.

We generate costs in a similar structure by combining a noiseless signal with group-specific bias and noise. The noiseless cost signal is feature-independent and fixed as $\mu_c$. We generate the cost $c_{ij}$ as:
\[
  c_{ij} = \max\!\bigl( \mu_c + \rho(g_i)\,\beta_c + \zeta_{ij}\, s_c(g_i),\; 10^{-3} \bigr),
  \quad \zeta_{ij} \sim \mathcal{N}(0, \sigma_c^2),
\]
with $\mu_c{=}1.0$, $\sigma_c{=}0.2$, cost group-bias $\beta_c$, and per-group noise scale $s_c(g) = 1 + (\nu_c-1)\,g/(K-1)$. The parameters $\beta_c$ and $\nu_c$ depend on the imbalance parameter $\ell$.

We use one parameter $\ell \in [0,1]$ to adjust the per-group scaling on benefit and cost bias and noise. The scaling is controlled via the four imbalance factors $\beta_b = \beta_c = 0.9\,\ell$ and $\nu_b = \nu_c = 1+\ell$. At $\ell=0$, the groups are exchangeable; increasing $\ell$ shifts the group means apart and inflates the disadvantaged group's noise on both benefit and cost. Because the signal-to-noise ratio is fixed independently of $\ell$, imbalance is decoupled from overall signal strength.

\subsection{Additional Methods in the Method Pool} \label{app:method-details}
This appendix describes the methods not specified in the main text. The training objective gradients are $\mathbf g_{\mathrm{dec}}$, $\mathbf g_{\mathrm{pred}}$, $\mathbf g_{\mathrm{fair}}$.

\paragraph{FDFL-FPLG.} This method adapts the prediction-loss guided (PLG) DFL of \citet{jeon2025plg} to include a prediction fairness objective. We combine $\mathbf{g}_{\mathrm{pred}}$ and $\mathbf{g}_{\mathrm{dec}}$ dynamically using the rule from \citet{jeon2025plg} and includes $\mathbf{g}_{\mathrm{fair}}$ with a fixed weight as in FDFL-Scal. Let $\hat{\mathbf u}_\bullet:=\mathbf g_\bullet/\lVert\mathbf g_\bullet\rVert$, the update direction is:
$\mathbf g_{\mathrm{FPLG}} \;=\; \sqrt{\lVert\mathbf g_{\mathrm{dec}}\rVert\,\lVert\mathbf g_{\mathrm{pred}}\rVert}\;\frac{\hat{\mathbf u}_{\mathrm{dec}}+\gamma_t\,\hat{\mathbf u}_{\mathrm{pred}}}{\lVert\hat{\mathbf u}_{\mathrm{dec}}+\gamma_t\,\hat{\mathbf u}_{\mathrm{pred}}\rVert} \;+\; \lambda\,\mathbf g_{\mathrm{fair}}$,
where the guidance weight $\gamma_t=\kappa_0/(1+\kappa\,t)$ decays over training, so the prediction loss stabilizes the early iterates and the decision objective dominates later ones. 

\paragraph{FDFL-MGDA.} This method applies the multiple gradient descent algorithm (MGDA) from \citet{desideri2012multiple}, where each iteration chooses the minimum-norm combination from the convex hull of individual objective gradients. The update direction is solved from: 
$\mathbf{g}_{MGDA} = \arg\min ~ \left\{\lVert \omega_1 \mathbf{g}_{\mathrm{pred}} + \omega_2 \mathbf{g}_{\mathrm{fair}} + \omega_3\mathbf{g}_{\mathrm{dec}} \rVert^2: {\sum_{i=1}^3 \omega_i = 1, \omega_i \geq 0~\forall i}\right\}$.
The solution $\mathbf{g}$ is either a nonzero common descent direction or equal to $\mathbf{0}$, which occurs exactly at Pareto-stationary points.

\paragraph{WDRO.} The variation-regularized Wasserstein-1 objective is
\begin{equation} \label{eq:wdro}
\min_\theta\ \tfrac{1}{n}\textstyle\sum_i \ell_\theta(\mathbf x_i, r_i) \;+\; \varepsilon\,\tfrac{1}{n}\textstyle\sum_i \big\lVert \nabla_{\mathbf x}\,\ell_\theta(\mathbf x_i, r_i) \big\rVert,
\end{equation}
the MSE loss augmented with an input-gradient-norm penalty. We adopt this regularized form rather than the exact worst-case formulation because the latter is tractable only for restricted model classes: with a nonconvex neural network predictor, the inner supremum over the Wasserstein ball admits no exact finite reformulation, whereas for smooth losses the worst-case objective agrees with the empirical loss plus the gradient-norm variation penalty of \eqref{eq:wdro} to first order in the ball radius $\varepsilon$ \citep{gao2024wasserstein}; \citet{sinha2018certifying} give certified smoothed relaxations of the same form. WDRO thus serves as a robustness-regularized prediction baseline rather than a full distributionally robust program.

\section{Additional Results and Findings} \label{app:extended}
This appendix reports the full experiment results:
all methods (including WDRO, FDFL-FPLG, FDFL-MGDA, described in Appendix~\ref{app:method-details}) and the complete grid over predictor capacity, the decision fairness parameter $\alpha$, the imbalance level $\ell$, the number of training instances $N$, the number of groups $K$, and the prediction-accuracy weight $\mu$ (carried by the Regret-and-MSE and FDFL-Scal rows). The aggregation is identical to the main-text tables: 5 seeds, mean $\pm$ standard deviation, best column mean highlighted in bold.

\input{Tables/tab_app_hc_capacity}
\input{Tables/tab_app_hc_alpha}

\input{Tables/tab_app_md_capacity}
\input{Tables/tab_app_md_alpha}
\input{Tables/tab_app_md_imbalance}

\subsection{Robustness and Additional Analyses} \label{app:robustness}
We report complementary analyses that support and extend the main findings. First, we discuss the effect of the weight $\mu$ on the prediction accuracy objective. In single resource allocation, Table \ref{tab:app-hc-capacity} shows that with MLP-64 predictor, relative to DFL, increasing $\mu$ in Regret-and-MSE steadily lowers prediction MSE while maintaining stable regret at $0.129$, which fits the expected gain from targeting the prediction accuracy metric. Because the prediction disparity metric (MAD) compares prediction errors, improving accuracy is aligned with reducing disparity: over the same range of $\mu$, Regret-and-MSE attains lower MAD with a larger $\mu$ even though the training does not include the prediction fairness objective ($\lambda = 0$). In multiple resource allocation results, Table~\ref{tab:app-md-capacity} shows consistent patterns that support targeting the prediction accuracy objective to enhance prediction quality. 

Next, we compare the gradient-combination rules in training with the full FDFL objective. In both allocation setups, the combination handlers, including scalarization, PCGrad, and NashMTL, attain comparable performances under an MLP-64 predictor (Table \ref{tab:app-hc-capacity} and Table \ref{tab:app-md-capacity}). The handlers separate more at limited predictor capacity. For example, in the multiple resource allocation, Table \ref{tab:app-md-capacity} shows that with the MLP-16 predictor, FDFL-PCGrad attains a superior balance over the other FDFL methods, yielding low regrets, MAD and MSE simultaneously.

Finally, our findings are robust to several design choices. The imbalance-driven prediction fairness trend in Section~\ref{exp-subsec:imbalance} is not specific to $\alpha = 2$: Table \ref{tab:app-md-imb-a05} demonstrates similar trends from varying imbalance levels at $\alpha = 0.5$, with prediction disparity worsening with $\ell$ increase in methods without the prediction fairness objective and effectively reduced from including prediction fairness. In single resource allocation, Table \ref{tab:app-hc-ntrain} validates that all results remain stable as the number of training instances grows from $N = 10$ to $50$. In the multiple resource allocation, Table \ref{tab:app-md-groups} shows that increasing the number of groups from $K=2$ to $K=4$ leaves the regret comparison unchanged and the prediction fairness objective continues to reduce disparity within each setting, though the disparity magnitudes are not directly comparable across a different number of groups.

\input{Tables/tab_app_hc_ntrain}
\input{Tables/tab_app_md_groups}
\input{Tables/tab_app_md_imb_a05}

\section{Proof of Generalization Bound} \label{app:proofs-gen}
This appendix contains the proof of Theorem \ref{thm:gen-main} stated in Section~\ref{sec:generalization}, together with verification remarks for the Lipschitzness assumptions in our experiment settings. 
Recall the notation of Section~\ref{sec:generalization}: $z=(X,\mathbf{a},\mathbf{r},D_\mathcal{S})$ collects the instance features, group memberships, true impacts, and feasible-set data (we suppress $\mathbf{a}$ and $D_\mathcal{S}$ where they play no active role); 
$\mathcal{D}=\{z_s\}_{s=1}^N$ is the set of i.i.d.\ training samples, $\Theta\subset\mathbb{R}^q$ is the parameter space, the per-instance
composite loss $\ell^{\mathrm{full}}(\theta; z)$ from Section~\ref{sec:generalization}, and we abbreviate the instance-dependent 
oracle $\mathbf{d}^*_z$ as $\mathbf{d}^*$ when the instance is clear from context. Because our predictor feeds \emph{point predictions} to the decision oracle, instead of 
a parametric distribution family as in 
\citet{elmachtoub2025dissecting}, 
we control the loss classes by a different set of standard tools, including Lipschitz transfer from the parameter space, metric-entropy bounds, and Dudley chaining. 
\subsection{Supporting Lemmas}

\begin{lemma} [The regret class is Lipschitz in $\theta$] 
\label{lem:gen-gtheta-lip}
Under Assumption~\ref{assp:gen-oracle-lip}, the map $\theta\mapsto \ell^{\mathrm{reg}}(\theta;z)$ is $L_W L_{\mathrm{or}}$-Lipschitz uniformly in $z$ with $L_W$ and $L_{\mathrm{or}}$ the Lipschitz constants of Assumption~\ref{assp:gen-oracle-lip}.:
$$|\ell^{\mathrm{reg}}(\theta_1;z)-\ell^{\mathrm{reg}}(\theta_2;z)| \leq L_W\,L_{\mathrm{or}}\,\|\theta_1-\theta_2\|,$$
\end{lemma}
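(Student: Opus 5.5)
The plan is to unpack the definition of the regret loss and observe that only one term depends on $\theta$. By \eqref{eq:regret}, for a fixed instance $z$,
\[
\ell^{\mathrm{reg}}(\theta;z) \;=\; W\!\left(\mathbf{d}^*_z(\mathbf{r});\mathbf{r}\right) - W\!\left(\mathbf{d}^*_z(f_\theta(X));\mathbf{r}\right).
\]
The first term is the full-information optimal value. It is determined by $z$ alone, so it cancels in the difference $\ell^{\mathrm{reg}}(\theta_1;z)-\ell^{\mathrm{reg}}(\theta_2;z)$. What remains is
\[
\bigl|\ell^{\mathrm{reg}}(\theta_1;z)-\ell^{\mathrm{reg}}(\theta_2;z)\bigr| \;=\; \bigl|W(\mathbf{d}^*_z(f_{\theta_2}(X));\mathbf{r}) - W(\mathbf{d}^*_z(f_{\theta_1}(X));\mathbf{r})\bigr|.
\]

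Next, apply the two parts of Assumption~\ref{assp:gen-oracle-lip} in sequence, giving a composition of Lipschitz maps. Set $\mathbf{d}_1=\mathbf{d}^*_z(f_{\theta_1}(X))$ and $\mathbf{d}_2=\mathbf{d}^*_z(f_{\theta_2}(X))$. Both are oracle-reachable decisions, so they lie in the convex set on which $W(\cdot;\mathbf{r})$ is $L_W$-Lipschitz. The first part of the assumption therefore bounds the display by $L_W\|\mathbf{d}_1-\mathbf{d}_2\|$. The second part (oracle Lipschitzness in $\theta$) then gives $\|\mathbf{d}_1-\mathbf{d}_2\|\le L_{\mathrm{or}}\|\theta_1-\theta_2\|$. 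Chaining the two yields the claimed constant $L_W L_{\mathrm{or}}$.

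The constants $L_W$ and $L_{\mathrm{or}}$ are, by assumption, independent of $z$. Hence the bound holds uniformly over instances, which is the "uniformly in $z$" clause of the lemma.

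There is essentially no obstacle. The only point needing care is that $W(\cdot;\mathbf{r})$ is being evaluated at points where the Lipschitz hypothesis actually applies. This is relevant because for $\alpha\ge 1$ the measure $W$ blows up near zero utilities, so it is not globally Lipschitz. The assumption is deliberately phrased on "a convex set containing the oracle-reachable decisions," and both $\mathbf{d}_1$ and $\mathbf{d}_2$ are images of the oracle, so they lie in it. Verifying that such a set exists with a finite $L_W$ (for instance, via a uniform positive lower bound on the allocated utilities when predictions lie in a compact positive range) belongs to the separate verification of the assumption, not to this lemma.
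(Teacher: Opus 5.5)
Your proposal is correct and follows essentially the same argument as the paper: the full-information term $W(\mathbf{d}^*(\mathbf{r});\mathbf{r})$ cancels, and the two Lipschitz conditions of Assumption~\ref{assp:gen-oracle-lip} are chained to give the constant $L_W L_{\mathrm{or}}$. Your explicit check that both $\mathbf{d}^*_z(f_{\theta_1}(X))$ and $\mathbf{d}^*_z(f_{\theta_2}(X))$ are oracle-reachable, so the $L_W$ bound applies to them, is a point of care that the paper's proof leaves implicit.
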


\emph{Proof.}
By definition $\ell^{\mathrm{reg}}(\theta;z)=W(\mathbf{d}^*(\mathbf{r});\mathbf{r})-W(\mathbf{d}^*(f_\theta(X));\mathbf{r})$. The first term is independent of $\theta$, so the $\theta$-dependence enters only through $W(\mathbf{d}^*(f_\theta(X));\mathbf{r})$. Applying the 
two Lipschitz properties of Assumption~\ref{assp:gen-oracle-lip} in turn,
$|\ell^{\mathrm{reg}}(\theta_1;z)-\ell^{\mathrm{reg}}(\theta_2;z)| \leq L_W\,\|\mathbf{d}^*(f_{\theta_1}(X))-\mathbf{d}^*(f_{\theta_2}(X))\| \leq L_W\,L_{\mathrm{or}}\,\|\theta_1-\theta_2\|$ follows.
$\square$

\begin{lemma}[Parametric Rademacher bound for uniformly Lipschitz classes] \label{lem:gen-param-rad}
Let $\mathcal{F}:=\{\varphi_\theta:\theta\in\Theta\}$ be a class of real-valued functions on $\mathcal{Z}$ indexed by $\Theta$ such that, for some $L>0$,
$\sup_{z\in\mathcal{Z}}|\varphi_{\theta_1}(z)-\varphi_{\theta_2}(z)|\leq L\,\|\theta_1-\theta_2\|$ for all $\theta_1,\theta_2\in\Theta$. Under Assumption~\ref{assp:gen-geometry}, there exists an absolute constant $C_{\mathrm{abs}}>0$, independent of $L$, $\Theta$, and $N$, such that
$$\mathfrak{R}_N(\mathcal{F}) \leq C_{\mathrm{abs}}\,L\,E_\Theta\,\sqrt{q/N}.$$
\end{lemma}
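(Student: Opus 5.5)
The plan is to bound the empirical Rademacher complexity conditionally on the sample $z_1,\dots,z_N$ by Dudley's entropy integral, and then take expectations; the bound will not depend on the sample. Fix the sample and view the Rademacher process $X_\theta:=\frac{1}{N}\sum_{s=1}^N\sigma_s\varphi_\theta(z_s)$. By Hoeffding's lemma, for $\theta_1,\theta_2\in\Theta$ the increment $X_{\theta_1}-X_{\theta_2}$ is sub-Gaussian with variance proxy $\frac{1}{N^2}\sum_s(\varphi_{\theta_1}(z_s)-\varphi_{\theta_2}(z_s))^2\le \frac{L^2}{N}\|\theta_1-\theta_2\|^2$, using the uniform Lipschitz hypothesis. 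Hence the process is sub-Gaussian with respect to the metric $\rho(\theta_1,\theta_2):=\frac{L}{\sqrt N}\|\theta_1-\theta_2\|$. Since $\mathbb{E}X_{\theta_0'}=0$ for any fixed $\theta_0'\in\Theta$, we have $\mathbb{E}\sup_\theta X_\theta=\mathbb{E}\sup_\theta(X_\theta-X_{\theta_0'})$. Dudley's inequality then gives $\mathbb{E}_\sigma\sup_\theta X_\theta\le C_D\int_0^{\mathrm{diam}_\rho(\Theta)}\sqrt{\log \mathcal{N}(\Theta,\rho,\epsilon)}\,d\epsilon$ with an absolute constant $C_D$.

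Next I would bound the covering numbers using Assumption~\ref{assp:gen-geometry}. Since $\Theta$ lies in a Euclidean ball of radius $E_\Theta$ (attained or approximated; the infimum costs at most a factor I can absorb by taking radius $2E_\Theta$ if needed), a $\rho$-cover at scale $\epsilon$ is a Euclidean cover at scale $\epsilon\sqrt N/L$. The standard volumetric bound for covering a subset of a ball (passing through a cover of the ball at half scale to keep centers inside $\Theta$) gives $\mathcal{N}(\Theta,\rho,\epsilon)\le (1+4LE_\Theta/(\sqrt N\epsilon))^q$. The $\rho$-diameter is at most $2LE_\Theta/\sqrt N=:D$.

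Then I would evaluate the integral: $\int_0^{D}\sqrt{q\log(1+2D/\epsilon)}\,d\epsilon = D\sqrt q\int_0^1\sqrt{\log(1+2/t)}\,dt$ after substituting $\epsilon=Dt$, and the last integral is a finite absolute constant (the integrand behaves like $\sqrt{\log(1/t)}$ near $0$). This yields $\hat{\mathfrak R}_N(\mathcal F)\le C_{\mathrm{abs}}LE_\Theta\sqrt{q/N}$ uniformly over samples, and taking expectation over the sample gives the claim for $\mathfrak{R}_N$. The constant collects $C_D$, the factor $2$ from the diameter, and the integral constant, so it is independent of $L$, $\Theta$, and $N$.

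The main obstacle is justifying the chaining step cleanly. This needs separability/measurability of the supremum, which follows from continuity in $\theta$ and compactness of $\Theta$ (restrict to a countable dense subset). It also requires checking that the covering bound holds with centers inside $\Theta$ rather than in the ambient ball. If the intrinsic-cover detail becomes delicate, I would instead use the external covering number, which Dudley's bound tolerates up to a factor of $2$ in scale.
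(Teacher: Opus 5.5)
Your proposal is correct and takes essentially the same approach as the paper: conditional on the sample, you use Hoeffding's lemma to show the Rademacher process has sub-Gaussian increments with respect to the rescaled metric $\frac{L}{\sqrt{N}}\|\theta_1-\theta_2\|$, bound the metric entropy volumetrically through the enclosing ball of radius $E_\Theta$, apply Dudley's entropy integral, and rescale the integral to get a sample-independent bound of order $L E_\Theta \sqrt{q/N}$ before averaging over the sample. The remaining differences are cosmetic: you use covering numbers with the bound $(1+4LE_\Theta/(\sqrt{N}\epsilon))^q$ where the paper uses packing numbers with $(6E_\Theta/\eta)^q$, and both yield an absolute constant.
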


\emph{Proof.}
Fix a sample $\mathcal{D}=\{z_s\}_{s=1}^N$, let $\sigma_1,\ldots,\sigma_N$ be i.i.d.\ Rademacher variables independent of $\mathcal{D}$, and define on $\Theta$ the scaled Euclidean metric $d_N(\theta_1,\theta_2) := \frac{L}{\sqrt{N}}\,\|\theta_1-\theta_2\|$. For the centered process $X_\theta:=\frac{1}{N}\sum_{s=1}^N\sigma_s\, \varphi_\theta(z_s)$, each increment $X_{\theta_1}-X_{\theta_2}=\sum_{s=1}^N \sigma_s a_s$ is a weighted Rademacher sum with weights $a_s:=(\varphi_{\theta_1}(z_s)-\varphi_{\theta_2}(z_s))/N$, whose norm the assumed Lipschitzness controls by
$\|a\|^2 = \frac{1}{N^2}\sum_{s=1}^N \bigl(\varphi_{\theta_1}(z_s)-\varphi_{\theta_2}(z_s)\bigr)^2 \leq \frac{L^2\|\theta_1-\theta_2\|^2}{N} = d_N(\theta_1,\theta_2)^2$,
so Hoeffding's lemma \citep[Lemma~D.1]{mohri2018foundations} gives, for all $s\in\mathbb{R}$,
\[
\mathbb{E}_\sigma\bigl[e^{s(X_{\theta_1}-X_{\theta_2})}\bigr]\leq e^{s^2\|a\|^2/2}\leq e^{s^2 d_N(\theta_1,\theta_2)^2/2},
\]
and hence, by a Chernoff bound, $\Pr\bigl[|X_{\theta_1}-X_{\theta_2}|\geq u\bigr]\leq 2\exp\bigl(-u^2/(2\,d_N(\theta_1,\theta_2)^2)\bigr)$ for all $u\geq 0$: the increments of $(X_\theta)_{\theta\in\Theta}$ are sub-Gaussian with respect to $d_N$. The process is also separable in the sense of \citet[Definition~4.4]{sen2018gentle}, since its sample paths are Lipschitz in $\theta$ and $\Theta\subseteq\mathbb{R}^q$ is separable.

Let $D(\epsilon,T,d)$ denote the $\epsilon$-packing number of a metric space $(T,d)$, the maximum number of points of $T$ that are pairwise more than $\epsilon$ apart \citep[Definitions~2.4--2.5]{sen2018gentle}. Covering the ball of radius $E_\Theta$ that encloses $\Theta$ (Assumption~\ref{assp:gen-geometry}) by at most $(6E_\Theta/\eta)^q$ balls of radius $\eta/2$ \citep[Lemma~6.27]{mohri2018foundations}, each of which contains at most one point of any $\eta$-packing, yields $\log D(\eta,\Theta,\|\cdot\|) \leq q\log(6 E_\Theta/\eta)$ for $\eta\in(0,2E_\Theta]$; since $d_N$ rescales the Euclidean metric by $L/\sqrt{N}$, for $\epsilon\in(0,2L E_\Theta/\sqrt{N}]$,
\[
\log D(\epsilon,\Theta,d_N) = \log D\Bigl(\frac{\epsilon\sqrt{N}}{L},\Theta,\|\cdot\|\Bigr) \leq q\log\frac{6 L E_\Theta}{\epsilon\sqrt{N}},
\]
and the $d_N$-diameter of $\Theta$ is at most $\bar{D}:=2LE_\Theta/\sqrt{N}$. Dudley's entropy-integral bound for separable sub-Gaussian processes \citep[Theorem~4.5]{sen2018gentle} then provides an absolute constant $c_0>0$ such that, for any fixed $\theta_0\in\Theta$ (using $\mathbb{E}_\sigma[X_{\theta_0}]=0$),
\begin{align*}
\widehat{\mathfrak{R}}_{\mathcal{D}}(\mathcal{F}) =  \mathbb{E}_\sigma\Bigl[\sup_{\theta\in\Theta}X_\theta\Bigr]
&\leq \mathbb{E}_\sigma\Bigl[\sup_{\theta\in\Theta}\bigl|X_\theta - X_{\theta_0}\bigr|\Bigr] 
\leq c_0\int_0^{\bar{D}/4}\sqrt{\log D(\epsilon,\Theta,d_N)}\,d\epsilon \\
&\leq c_0\sqrt{q}\int_0^{2L E_\Theta/\sqrt{N}}\!\sqrt{\log\frac{6 L E_\Theta}{\epsilon\sqrt{N}}}\,d\epsilon
= c_0\sqrt{q}\,\frac{L E_\Theta}{\sqrt{N}}\int_0^2\sqrt{\log\frac{6}{u}} \\
& = c_0\sqrt{q}\,\frac{L E_\Theta}{\sqrt{N}}\cdot 6\int_{\log 3}^{\infty}\sqrt{v}\,e^{-v}\,dv 
\leq 6\,\Gamma(3/2)\,c_0\, L\, E_\Theta\sqrt{\frac{q}{N}}
= C_{\mathrm{abs}}\,L\,E_\Theta\sqrt{\frac{q}{N}},
\end{align*}
where the two equalities use the changes of variables $u=\epsilon\sqrt{N}/(L E_\Theta)$ and $v=\log(6/u)$, and $C_{\mathrm{abs}}:=3\sqrt{\pi}\,c_0$. The right-hand side is deterministic in $\mathcal{D}$, so taking $\mathbb{E}_{\mathcal{D}}$ yields the claimed bound.
$\square$

\begin{lemma}[Rademacher decomposition] \label{lem:gen-decomp}
For any $\lambda,\mu\geq 0$,
$\mathfrak{R}_N(\mathcal{F}_{\mathrm{full}}) \leq \mathfrak{R}_N(\mathcal{F}_{\mathrm{reg}}) + \lambda\,\mathfrak{R}_N(\mathcal{F}_{\mathrm{fair}}) + \mu\,\mathfrak{R}_N(\mathcal{F}_{\mathrm{pred}}).$
\end{lemma}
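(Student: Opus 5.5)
The plan is to argue directly from the definition of Rademacher complexity, using only subadditivity of the supremum and positive homogeneity. Here $\mathcal{F}_{\mathrm{full}}=\{z\mapsto \ell^{\mathrm{full}}(\theta;z):\theta\in\Theta\}$, and $\mathcal{F}_{\mathrm{reg}},\mathcal{F}_{\mathrm{fair}},\mathcal{F}_{\mathrm{pred}}$ are the analogous classes built from $\ell^{\mathrm{reg}},\ell^{\mathrm{fair}},\ell^{\mathrm{pred}}$. All four are indexed by the same $\theta$.

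First fix a sample $\mathcal{D}=\{z_s\}_{s=1}^N$ and Rademacher signs $\sigma$. For each $\theta$, linearity of the per-instance loss gives
\[
\frac{1}{N}\sum_{s=1}^N\sigma_s\ell^{\mathrm{full}}(\theta;z_s)=\frac{1}{N}\sum_{s}\sigma_s\ell^{\mathrm{reg}}(\theta;z_s)+\lambda\frac{1}{N}\sum_{s}\sigma_s\ell^{\mathrm{fair}}(\theta;z_s)+\mu\frac{1}{N}\sum_{s}\sigma_s\ell^{\mathrm{pred}}(\theta;z_s).
\]
Taking $\sup_{\theta\in\Theta}$ of the left side, I bound it by the sum of three separate suprema, each over $\theta\in\Theta$. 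This decouples the shared index; it is valid because $\sup(a+b+c)\le\sup a+\sup b+\sup c$.

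Because $\lambda,\mu\ge 0$, the constants factor out of the suprema: $\sup_\theta \lambda A_\theta=\lambda\sup_\theta A_\theta$. This is the only place nonnegativity is used. For a negative weight one would need the symmetry of $\sigma$ instead, and that symmetry still yields the same bound.

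Next I take $\mathbb{E}_\sigma$. Linearity of expectation gives the empirical inequality $\widehat{\mathfrak{R}}_{\mathcal{D}}(\mathcal{F}_{\mathrm{full}})\le\widehat{\mathfrak{R}}_{\mathcal{D}}(\mathcal{F}_{\mathrm{reg}})+\lambda\widehat{\mathfrak{R}}_{\mathcal{D}}(\mathcal{F}_{\mathrm{fair}})+\mu\widehat{\mathfrak{R}}_{\mathcal{D}}(\mathcal{F}_{\mathrm{pred}})$. I then take $\mathbb{E}_{\mathcal{D}}$ to obtain the claim for $\mathfrak{R}_N$.

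There is no real obstacle here. The one point worth checking is measurability and finiteness of the suprema, so that the expectations and the splitting are legitimate. Assumption~\ref{assp:gen-bounded} makes every supremum finite. By Lemma~\ref{lem:gen-gtheta-lip} and Assumption~\ref{assp:gen-predictor-lip}, each process is Lipschitz in $\theta$, so each supremum can be taken over a countable dense subset of $\Theta$, which settles measurability.
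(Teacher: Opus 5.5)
Your proof is correct and follows the paper's argument: fix the sample, split the Rademacher average using the linear form of $\ell^{\mathrm{full}}$, apply subadditivity of the supremum and pull out $\lambda,\mu\geq 0$, then average over $\sigma$ and $\mathcal{D}$. Your measurability and finiteness check is a useful addition, since the paper leaves it implicit in this lemma. One correction to your aside: for a negative weight, the symmetry of $\sigma$ gives the bound with $|\lambda|$ in place of $\lambda$, not the same bound, and the inequality as stated can fail when $\lambda<0$; this lies outside the lemma's hypotheses and does not affect your proof.
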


\emph{Proof.}
For any fixed sample $\mathcal D$,
$\widehat{\mathfrak R}_{\mathcal D}(\mathcal F_{\mathrm{full}})
= \mathbb E_\sigma\!\left[
\sup_{\theta\in\Theta}
\frac{1}{N}\sum_{s=1}^N \sigma_s
\bigl(
\ell^{\mathrm{reg}}(\theta;z_s)
+\lambda\ell^{\mathrm{fair}}(\theta;z_s)
+\mu\ell^{\mathrm{pred}}(\theta;z_s)
\bigr)
\right] 
\leq
\widehat{\mathfrak R}_{\mathcal D}(\mathcal F_{\mathrm{reg}})
+\lambda\widehat{\mathfrak R}_{\mathcal D}(\mathcal F_{\mathrm{fair}})
+\mu\widehat{\mathfrak R}_{\mathcal D}(\mathcal F_{\mathrm{pred}})$,
where the inequality uses subadditivity of the supremum and
$\lambda,\mu\geq0$. Taking expectation over $\mathcal D$ proves the claim.
\hfill$\square$
\begin{lemma}[Rademacher bound for $\mathcal{F}_{\mathrm{fair}}$ and $\mathcal{F}_{\mathrm{pred}}$] \label{lem:gen-h-rad}
Under Assumptions~\ref{assp:gen-geometry} and~\ref{assp:gen-predictor-lip},
$\mathfrak{R}_N(\mathcal{F}_{\mathrm{fair}})\leq C_{\mathrm{abs}}\,L_F\,E_\Theta\sqrt{q/N}$ and 
$\mathfrak{R}_N(\mathcal{F}_{\mathrm{pred}})\leq C_{\mathrm{abs}}\,L_L\,E_\Theta\sqrt{q/N}$,
with the same absolute constant $C_{\mathrm{abs}}$ as in Lemma~\ref{lem:gen-param-rad}.
\end{lemma}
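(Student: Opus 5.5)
The plan is to reduce both bounds to Lemma~\ref{lem:gen-param-rad}, which already packages the Dudley-chaining argument for any class indexed by $\Theta$ whose members are uniformly Lipschitz in $\theta$. Take $\mathcal{F}_{\mathrm{fair}}=\{z\mapsto \ell^{\mathrm{fair}}(\theta;z):\theta\in\Theta\}$ and $\mathcal{F}_{\mathrm{pred}}=\{z\mapsto \ell^{\mathrm{pred}}(\theta;z):\theta\in\Theta\}$. The only thing to verify is the hypothesis of that lemma, namely
\[
\sup_{z\in\mathcal{Z}}|\varphi_{\theta_1}(z)-\varphi_{\theta_2}(z)|\le L\|\theta_1-\theta_2\|
\]
for the appropriate constant $L$.

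For $\mathcal{F}_{\mathrm{fair}}$, Assumption~\ref{assp:gen-predictor-lip} states directly that $|\ell^{\mathrm{fair}}(\theta_1;z)-\ell^{\mathrm{fair}}(\theta_2;z)|\le L_F\|\theta_1-\theta_2\|$ for every instance $z$. The constant $L_F$ does not depend on $z$, so taking the supremum over $z$ preserves the inequality. Applying Lemma~\ref{lem:gen-param-rad} with $L=L_F$, under Assumption~\ref{assp:gen-geometry}, gives $\mathfrak{R}_N(\mathcal{F}_{\mathrm{fair}})\le C_{\mathrm{abs}}L_FE_\Theta\sqrt{q/N}$. The identical argument with $L=L_L$ gives the bound for $\mathcal{F}_{\mathrm{pred}}$.

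The constant is the same because $C_{\mathrm{abs}}=3\sqrt{\pi}\,c_0$ in Lemma~\ref{lem:gen-param-rad} is explicitly independent of $L$, $\Theta$ and $N$. It therefore coincides for all classes.

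There is no real obstacle here. The one point to check is that the Lipschitz property in Assumption~\ref{assp:gen-predictor-lip} is uniform in $z$, which the phrase ``for every instance $z$'' with a fixed constant guarantees. One could also remark that $L_L$ and $L_F$ could be derived from $L_f$ together with local Lipschitzness of MSE and MAD on bounded prediction sets. That derivation is not needed, since the lemma takes $L_L$ and $L_F$ as given.
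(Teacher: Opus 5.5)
Your proposal is correct and follows the paper's proof essentially verbatim: both verify that Assumption~\ref{assp:gen-predictor-lip} supplies the $z$-uniform Lipschitz hypothesis of Lemma~\ref{lem:gen-param-rad} with $L=L_F$ and $L=L_L$, and then invoke that lemma. As you note, $C_{\mathrm{abs}}=3\sqrt{\pi}\,c_0$ there is independent of $L$, $\Theta$, and $N$, so the same constant serves both classes.
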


\emph{Proof.}
By Assumption~\ref{assp:gen-predictor-lip}, $|\ell^{\mathrm{fair}}(\theta_1;z)-\ell^{\mathrm{fair}}(\theta_2;z)|\leq L_F\|\theta_1-\theta_2\|$ and $|\ell^{\mathrm{pred}}(\theta_1;z)-\ell^{\mathrm{pred}}(\theta_2;z)|\leq L_L\|\theta_1-\theta_2\|$ for every $z\in\mathcal{Z}$ and $\theta_1,\theta_2\in\Theta$. The classes $\mathcal{F}_{\mathrm{fair}}$ and $\mathcal{F}_{\mathrm{pred}}$ therefore satisfy the required conditions of Lemma~\ref{lem:gen-param-rad} with $L=L_F$ and $L=L_L$, respectively, and the claimed bounds follow.
$\square$

\begin{lemma}[Uniform deviation for the composite training objective] \label{lem:gen-uniform}
Under Assumptions~\ref{assp:gen-geometry}--\ref{assp:gen-predictor-lip}, for any $\delta\in(0,1)$, with probability at least $1-\delta$,
\[
\sup_{\theta\in\Theta}\bigl(v_0^{\mathrm{full}}(\theta)-\hat v_0^{\mathrm{full}}(\theta)\bigr)
\leq 2\,\mathfrak{R}_N(\mathcal{F}_{\mathrm{reg}}) + 2\lambda\,\mathfrak{R}_N(\mathcal{F}_{\mathrm{fair}}) + 2\mu\,\mathfrak{R}_N(\mathcal{F}_{\mathrm{pred}}) + (B_{\mathrm{reg}}+\lambda B_F+\mu B_L)\sqrt{\tfrac{2\log(1/\delta)}{N}}.
\]
\end{lemma}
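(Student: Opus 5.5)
The plan is the standard symmetrization-plus-McDiarmid argument, applied to the composite class $\mathcal{F}_{\mathrm{full}}:=\{z\mapsto \ell^{\mathrm{full}}(\theta;z):\theta\in\Theta\}$. Define the random variable $\Phi(\mathcal{D}):=\sup_{\theta\in\Theta}\bigl(v_0^{\mathrm{full}}(\theta)-\hat v_0^{\mathrm{full}}(\theta)\bigr)$. By Assumption~\ref{assp:gen-bounded}, each per-instance composite loss satisfies $|\ell^{\mathrm{full}}(\theta;z)|\leq B:=B_{\mathrm{reg}}+\lambda B_F+\mu B_L$, using $\lambda,\mu\geq 0$.

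First, I control measurability and concentration. Each $\ell^{\mathrm{full}}(\cdot;z)$ is Lipschitz in $\theta$ with constant $L_WL_{\mathrm{or}}+\lambda L_F+\mu L_L$, by Lemma~\ref{lem:gen-gtheta-lip} and Assumption~\ref{assp:gen-predictor-lip}. The supremum over the compact $\Theta$ therefore equals a supremum over a countable dense subset, so $\Phi$ is a well-defined random variable. Replacing a single sample $z_s$ by $z_s'$ changes $\hat v_0^{\mathrm{full}}(\theta)$ by at most $2B/N$ uniformly in $\theta$. Hence $\Phi$ has bounded differences $c_s=2B/N$, and McDiarmid's inequality gives, with probability at least $1-\delta$,
\[
\Phi\leq \mathbb{E}[\Phi]+\sqrt{\tfrac{\sum_s c_s^2\,\log(1/\delta)}{2}}=\mathbb{E}[\Phi]+B\sqrt{\tfrac{2\log(1/\delta)}{N}}.
\]
This matches the deviation term in the claim.

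Second, I bound the expectation by symmetrization. Introduce a ghost sample $\mathcal{D}'$ and write $v_0^{\mathrm{full}}(\theta)=\mathbb{E}_{\mathcal{D}'}\hat v_0'(\theta)$. Jensen's inequality gives $\mathbb{E}[\Phi]\leq \mathbb{E}\sup_\theta\frac1N\sum_s(\ell^{\mathrm{full}}(\theta;z_s')-\ell^{\mathrm{full}}(\theta;z_s))$. Inserting Rademacher signs, which leaves the distribution unchanged by exchangeability, and splitting the supremum yields $\mathbb{E}[\Phi]\leq 2\mathfrak{R}_N(\mathcal{F}_{\mathrm{full}})$. Lemma~\ref{lem:gen-decomp} then splits this into $2\mathfrak{R}_N(\mathcal{F}_{\mathrm{reg}})+2\lambda\mathfrak{R}_N(\mathcal{F}_{\mathrm{fair}})+2\mu\mathfrak{R}_N(\mathcal{F}_{\mathrm{pred}})$, which is the stated bound.

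The only delicate point is the measurability of the supremum, which is needed for both McDiarmid and symmetrization. Lipschitz continuity in $\theta$ together with compactness of $\Theta$ (Assumption~\ref{assp:gen-geometry}) resolves it via the countable-dense-subset reduction above. Beyond that, the argument is routine. Lemmas~\ref{lem:gen-param-rad} and~\ref{lem:gen-h-rad} are not needed for this lemma itself; they enter later, when the Rademacher terms are made explicit in Theorem~\ref{thm:gen-main}.
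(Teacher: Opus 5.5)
Your proposal is correct and follows the paper's proof essentially step for step. Both use ghost-sample symmetrization with Jensen and a Rademacher swap to get $\mathbb{E}[\Phi]\le 2\,\mathfrak{R}_N(\mathcal{F}_{\mathrm{full}})$, then the split via Lemma~\ref{lem:gen-decomp}, then McDiarmid with bounded differences $2B/N$ to produce the $B\sqrt{2\log(1/\delta)/N}$ term; your measurability argument (Lipschitz dependence on $\theta$, reduction to a countable dense subset) is the same separability remark the paper borrows from the proof of Lemma~\ref{lem:gen-param-rad}.
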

\emph{Proof.}
Write $B:=B_{\mathrm{reg}}+\lambda B_F+\mu B_L$, so $|\ell^{\mathrm{full}}(\theta;z)| \leq |\ell^{\mathrm{reg}}(\theta;z)| + \lambda|\ell^{\mathrm{fair}}(\theta;z)| + \mu|\ell^{\mathrm{pred}}(\theta;z)| \leq B$ by Assumption~\ref{assp:gen-bounded}, and define $\Phi(\mathcal{D}) := \sup_{\theta\in\Theta}\bigl(v_0^{\mathrm{full}}(\theta)-\hat v_0^{\mathrm{full}}(\theta)\bigr)$, measurable by the same separability argument as in the proof of Lemma~\ref{lem:gen-param-rad}. Let $\mathcal{D}'=\{z_s'\}_{s=1}^N\sim\mathcal{P}^N$ be a ghost sample independent of $\mathcal{D}$, with empirical objective $\hat v_0^{\mathrm{full},\mathcal{D}'}(\theta):=\frac{1}{N}\sum_{s=1}^N \ell^{\mathrm{full}}(\theta;z_s')$ satisfying $v_0^{\mathrm{full}}(\theta) = \mathbb{E}_{\mathcal{D}'}[\hat v_0^{\mathrm{full},\mathcal{D}'}(\theta)]$, and let $\sigma_1,\ldots,\sigma_N$ be i.i.d.\ Rademacher variables independent of $(\mathcal{D},\mathcal{D}')$. Then
\begin{align*}
\mathbb{E}_{\mathcal{D}}[\Phi(\mathcal{D})]
&= \mathbb{E}_{\mathcal{D}}\Bigl[\sup_{\theta\in\Theta}\,\mathbb{E}_{\mathcal{D}'}\bigl[\hat v_0^{\mathrm{full},\mathcal{D}'}(\theta) - \hat v_0^{\mathrm{full}}(\theta)\bigr]\Bigr] \\
&\leq \mathbb{E}_{\mathcal{D},\mathcal{D}'}\Bigl[\sup_{\theta\in\Theta}\frac{1}{N}\sum_{s=1}^N \bigl(\ell^{\mathrm{full}}(\theta;z_s')-\ell^{\mathrm{full}}(\theta;z_s)\bigr)\Bigr]
&& \text{(Jensen; Fubini)} \\
&= \mathbb{E}_{\mathcal{D},\mathcal{D}',\sigma}\Bigl[\sup_{\theta\in\Theta}\frac{1}{N}\sum_{s=1}^N \sigma_s\bigl(\ell^{\mathrm{full}}(\theta;z_s')-\ell^{\mathrm{full}}(\theta;z_s)\bigr)\Bigr]
&& \text{(swap $z_s\leftrightarrow z_s'$)} \\
&\leq \mathbb{E}_{\mathcal{D}',\sigma}\Bigl[\sup_{\theta\in\Theta}\frac{1}{N}\sum_{s=1}^N \sigma_s\,\ell^{\mathrm{full}}(\theta;z_s')\Bigr]
+ \mathbb{E}_{\mathcal{D},\sigma}\Bigl[\sup_{\theta\in\Theta}\frac{1}{N}\sum_{s=1}^N (-\sigma_s)\,\ell^{\mathrm{full}}(\theta;z_s)\Bigr] \\
&= 2\,\mathfrak{R}_N(\mathcal{F}_{\mathrm{full}})
&& (-\boldsymbol{\sigma}\overset{d}{=}\boldsymbol{\sigma},\ \mathcal{D}'\overset{d}{=}\mathcal{D}) \\
&\leq 2\,\mathfrak{R}_N(\mathcal{F}_{\mathrm{reg}}) + 2\lambda\,\mathfrak{R}_N(\mathcal{F}_{\mathrm{fair}}) + 2\mu\,\mathfrak{R}_N(\mathcal{F}_{\mathrm{pred}}).
&& \text{(Lemma~\ref{lem:gen-decomp})}
\end{align*}
Moreover, replacing the $s$-th coordinate $z_s$ of $\mathcal{D}$ with an arbitrary $\tilde z\in\mathcal{Z}$ yields a sample $\mathcal{D}^{(s)}$ with
\[
|\Phi(\mathcal{D}) - \Phi(\mathcal{D}^{(s)})| \leq \sup_{\theta\in\Theta}\bigl|\hat v_0^{\mathrm{full},\mathcal{D}}(\theta) - \hat v_0^{\mathrm{full},\mathcal{D}^{(s)}}(\theta)\bigr| = \sup_{\theta\in\Theta}\frac{1}{N}\bigl|\ell^{\mathrm{full}}(\theta;z_s)-\ell^{\mathrm{full}}(\theta;\tilde z)\bigr| \leq \frac{2B}{N},
\]
so McDiarmid's bounded-difference inequality \citep[Theorem~D.8]{mohri2018foundations} gives $\Pr\bigl[\Phi(\mathcal{D})\geq\mathbb{E}_{\mathcal{D}}[\Phi(\mathcal{D})] + t\bigr] \leq \exp(-Nt^2/(2B^2))$ for all $t>0$; setting the right-hand side to $\delta$, i.e., $t = B\sqrt{2\log(1/\delta)/N}$, and combining with the display above proves the claim.
$\square$

\subsection{Proof of Theorem~\ref{thm:gen-main}}
Write $B:=B_{\mathrm{reg}}+\lambda B_F+\mu B_L$. Adding and subtracting $\hat v_0^{\mathrm{full}}(\hat\theta)$ and $\hat v_0^{\mathrm{full}}(\theta^\ast)$,
\[
R^{\mathrm{full}}(\hat\theta) := v_0^{\mathrm{full}}(\hat\theta) - v_0^{\mathrm{full}}(\theta^\ast)
= \underbrace{\bigl(v_0^{\mathrm{full}}(\hat\theta) - \hat v_0^{\mathrm{full}}(\hat\theta)\bigr)}_{(A)}
+ \underbrace{\bigl(\hat v_0^{\mathrm{full}}(\hat\theta) - \hat v_0^{\mathrm{full}}(\theta^\ast)\bigr)}_{(M)}
+ \underbrace{\bigl(\hat v_0^{\mathrm{full}}(\theta^\ast) - v_0^{\mathrm{full}}(\theta^\ast)\bigr)}_{(B)},
\]
where $(M)\leq 0$ deterministically since $\hat\theta\in\arg\min_{\theta\in\Theta}\hat v_0^{\mathrm{full}}(\theta)$, and $(A) \leq \sup_{\theta\in\Theta}\bigl(v_0^{\mathrm{full}}(\theta) - \hat v_0^{\mathrm{full}}(\theta)\bigr)$. Each of the following two bounds holds with probability at least $1-\delta/2$ over $\mathcal{D}\sim\mathcal{P}^N$:
\begin{align}
(A) &\leq 2\,\mathfrak{R}_N(\mathcal{F}_{\mathrm{reg}}) + 2\lambda\,\mathfrak{R}_N(\mathcal{F}_{\mathrm{fair}}) + 2\mu\,\mathfrak{R}_N(\mathcal{F}_{\mathrm{pred}}) + B\sqrt{\tfrac{2\log(2/\delta)}{N}}, \label{eq:proof-thm-A} \\
(B) &= \frac{1}{N}\sum_{s=1}^N \ell^{\mathrm{full}}(\theta^\ast;z_s) - \mathbb{E}\bigl[\ell^{\mathrm{full}}(\theta^\ast;z)\bigr]
\;\leq\; B\sqrt{\tfrac{2\log(2/\delta)}{N}}, \label{eq:proof-thm-B}
\end{align}
where \eqref{eq:proof-thm-A} applies Lemma~\ref{lem:gen-uniform} at confidence level $\delta/2$, and \eqref{eq:proof-thm-B} applies Hoeffding's inequality \citep[Theorem~D.2]{mohri2018foundations} to the i.i.d.\ variables $\ell^{\mathrm{full}}(\theta^\ast;z_s)\in[-B,B]$ (Assumption~\ref{assp:gen-bounded}; $\theta^\ast$ is non-random). On the intersection of the two events, whose probability is at least $1-\delta$ by the union bound,
\begin{align*}
R^{\mathrm{full}}(\hat\theta)
&\leq 2\,\mathfrak{R}_N(\mathcal{F}_{\mathrm{reg}}) + 2\lambda\,\mathfrak{R}_N(\mathcal{F}_{\mathrm{fair}}) + 2\mu\,\mathfrak{R}_N(\mathcal{F}_{\mathrm{pred}}) + 2B\sqrt{\tfrac{2\log(2/\delta)}{N}} \\
&\leq 2 C_{\mathrm{abs}}\bigl(L_W L_{\mathrm{or}} + \lambda\, L_F + \mu\, L_L\bigr) E_\Theta\sqrt{\tfrac{q}{N}} + 2B\sqrt{\tfrac{2\log(2/\delta)}{N}},
\end{align*}
where the last inequality substitutes the parametric Rademacher bounds of Lemma~\ref{lem:gen-param-rad}, applied to $\mathcal{F}_{\mathrm{reg}}$ with $L=L_WL_{\mathrm{or}}$ (Lemma~\ref{lem:gen-gtheta-lip}) and to $\mathcal{F}_{\mathrm{fair}}$ and $\mathcal{F}_{\mathrm{pred}}$ with $L=L_F$ and $L=L_L$ (Lemma~\ref{lem:gen-h-rad}). This is exactly the bound stated in~\eqref{eq:gen-main-bound}.
$\square$

\subsection{Verification remarks} \label{app:verify}
This subsection verifies Assumptions~\ref{assp:gen-oracle-lip} and~\ref{assp:gen-predictor-lip} for the settings of Section~\ref{sec:exp-setup}. 

\begin{remark}[Verification of Assumption~\ref{assp:gen-oracle-lip}] \label{rem:gen-oracle-verification}

We verify Assumption~\ref{assp:gen-oracle-lip} for the two allocation tasks via the following two lemmas.

\begin{lemma}[Concavity of $\alpha$-fairness]\label{lem:alpha-fair-concavity}
Suppose utilities are all positive. (i) For every $\alpha>0$, the group-based measure $W_\alpha^g$ in \eqref{eq:alpha-group-W} is strictly concave in $\mathbf{u}$.
(ii) If the utility map $\mathbf{d}\mapsto\mathbf{u}(\mathbf{d})$ is affine, $\mathbf{d}\mapsto W_\alpha^g(\mathbf{u}(\mathbf{d}))$ is concave in $\mathbf{d}$; if the map is injective on the feasible set, $W_\alpha^g(\mathbf{u}(\mathbf{d}))$ is strictly concave in $\mathbf{d}$ with a negative-definite Hessian.
\end{lemma}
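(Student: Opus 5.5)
The plan is to prove (i) one group block at a time and then pass to the sum over groups. The groups $G_1,\dots,G_K$ partition $[m]$, so $W^g_\alpha(\mathbf u)=\sum_k \phi\bigl(h_k(\mathbf u_{G_k})\bigr)$, where $\phi(t)=t^{1-\alpha}/(1-\alpha)$ (or $\log t$ at $\alpha=1$) and each summand depends only on its own block $\mathbf u_{G_k}$. Suppose each block term $\psi_k:=\phi\circ h_k$ is strictly concave on $\mathbb R^{m_k}_{>0}$. Then the sum is strictly concave. Indeed, for $\mathbf u\neq\mathbf v$ some block differs, that block gives a strict midpoint inequality, and the remaining blocks give weak ones. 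The same block-diagonal reasoning applied to the Hessian shows that negative definiteness of each block Hessian $\nabla^2\psi_k$ gives a negative-definite $\nabla^2 W^g_\alpha$. I will aim to prove the Hessian statement directly, because part (ii) needs it.

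The regimes of $\alpha$ are handled separately.
\begin{itemize}
\item[(a)] For $0<\alpha<1$: $h_k=\sum_{i\in G_k}u_i^{1-\alpha}/(1-\alpha)$ is positive and separable, with strictly negative diagonal Hessian. The outer map $\phi$ is increasing and concave on $(0,\infty)$. The chain rule gives $\nabla^2\psi_k=\phi'(h_k)\nabla^2 h_k+\phi''(h_k)\nabla h_k\nabla h_k^\top$. The first term is negative definite because $\phi'>0$, and the second is negative semidefinite.
\item[(b)] For $\alpha=1$: $\psi_k=\sum_{i\in G_k}\log u_i$ (Remark~\ref{rem:alpha-coincide}), whose Hessian is $-\mathrm{diag}(u_i^{-2})$.
\item[(c)] For $\alpha>1$, set $p=\alpha-1>0$ and introduce the power mean $M_k(\mathbf u)=\bigl(\sum_{i\in G_k}u_i^{-p}\bigr)^{-1/p}$. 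Then $h_k=(\alpha-1)M_k^{p}$, and a direct computation gives $\psi_k=-c\,M_k^{-p^2}$ with $c=(\alpha-1)^{-\alpha}>0$. Here $M_k$ is concave and positively homogeneous of degree one, since power means with exponent below one are concave. The map $\chi(t)=-c\,t^{-p^2}$ is strictly increasing and strictly concave. The chain rule gives $\nabla^2\psi_k=\chi'(M_k)\nabla^2M_k+\chi''(M_k)\nabla M_k\nabla M_k^\top$, where both terms are negative semidefinite.
\end{itemize}

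The main obstacle is the definiteness in case (c), because $M_k$ is linear along rays and so is not strictly concave. I will split an arbitrary nonzero direction $\mathbf w$ according to $\ker\nabla^2 M_k$.
\begin{itemize}
\item The key fact I expect to need is that $\ker\nabla^2 M_k(\mathbf u)=\mathrm{span}\{\mathbf u\}$. This is the standard strict concavity of power means transverse to rays, and I would verify it from the explicit Hessian of $(\sum u_i^{-p})^{-1/p}$ using the equality case of Cauchy--Schwarz.
\item If $\mathbf w$ is not parallel to $\mathbf u$, the first term of $\nabla^2\psi_k$ is strictly negative in direction $\mathbf w$.
\item If $\mathbf w=t\mathbf u$ with $t\neq0$, Euler's identity gives $\nabla M_k\cdot\mathbf w=tM_k\neq0$. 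Since $\chi''<0$, the second term is strictly negative in direction $\mathbf w$.
\item For a general $\mathbf w$ with a nonzero transverse component, the first term is already strictly negative and the second is $\le 0$, so the quadratic form is negative.
\end{itemize}
Hence $\nabla^2\psi_k\prec0$, and strict concavity follows from a negative-definite Hessian on the convex open domain.

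For (ii), write $\mathbf u(\mathbf d)=A\mathbf d+\mathbf b$. Concavity of $W^g_\alpha\circ\mathbf u$ follows because concavity is preserved under affine precomposition. The Hessian in $\mathbf d$ is $A^\top\nabla^2W^g_\alpha A$. If the map is injective, then $A$ has trivial kernel, so $A\mathbf w\neq0$ for every $\mathbf w\neq0$. Therefore $\mathbf w^\top A^\top\nabla^2W^g_\alpha A\mathbf w<0$ by part (i), and the composition is strictly concave with negative-definite Hessian. This covers the single-resource map $u_i=\hat r_i d_i$ with all $\hat r_i>0$.
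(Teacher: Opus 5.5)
Your proof is correct. Outside the case $\alpha>1$ it coincides with the paper's argument: the block-diagonal assembly, the $\alpha=1$ case, and part (ii) are argued the same way. Your chain-rule split $\phi'(h_k)\nabla^2 h_k+\phi''(h_k)\nabla h_k\nabla h_k^\top$ for $0<\alpha<1$ is the same diagonal-plus-rank-one decomposition the paper computes for $T_k^{\gamma}$.

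For $\alpha>1$ the two routes genuinely differ.

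\textbf{The paper's route.} It sets $\beta=\alpha-1$ and $S_k=\sum_{i\in G_k}u_i^{-\beta}$, and writes the group term as $-\beta^{-\beta-1}S_k^{\beta}$. It then computes the Hessian of $S_k^\beta$ explicitly: a rank-one term with coefficient $\beta^3(\beta-1)S_k^{\beta-2}$ plus a positive diagonal term. This forces a split at $\alpha=2$. For $\beta\ge 1$ the rank-one term is positive semidefinite. For $0<\beta<1$ it is negative, and the paper absorbs it with the Cauchy--Schwarz bound $\bigl(\sum_i u_i^{-\beta-1}v_i\bigr)^2\le S_k T$, where $T=\sum_i u_i^{-\beta-2}v_i^2$. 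This gives $v^\top\nabla^2 S_k^{\beta}v\ge\beta^2(\beta^2+1)S_k^{\beta-1}T$.

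\textbf{Your route.} You factor the same function as $\chi(M_k)=-c\,M_k^{-p^2}$, passing through the concave, degree-one homogeneous aggregator $M_k$. You then neutralize its only flat direction, the ray through $\mathbf u$, using the strict concavity of $\chi$ and Euler's identity.

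\textbf{What each buys.} Your argument treats all $\alpha>1$ uniformly, with no case split. It also explains why definiteness holds: homogeneity is the sole source of degeneracy, and the strictly concave outer power removes it. The cost is two outside facts, concavity of power means of order below one and $\ker\nabla^2 M_k(\mathbf u)=\mathrm{span}\{\mathbf u\}$. The latter rests on the equality case of the same Cauchy--Schwarz inequality the paper uses.

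The paper's direct computation, in exchange, gives an explicit curvature lower bound in terms of $T$. That bound would directly quantify a strong-concavity modulus on compact sets of positive utilities. Your argument yields negative definiteness only pointwise. Combined with continuity, as the paper itself uses in part (ii), that is still enough for the downstream oracle-Lipschitz argument.
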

\emph{Proof.}
\emph{Part (i), case $\alpha>1$.} Write $\beta:=\alpha-1>0$ and $S_k:=\sum_{i\in G_k}u_i^{-\beta}$. The intra-group score is $h_k=\beta/S_k$, so the $k$th group term of \eqref{eq:alpha-group-W} equals $h_k^{1-\alpha}/(1-\alpha)=-\beta^{-\beta-1}S_k^{\beta}$; therefore, it suffices to show that $S_k^{\beta}$ is strictly convex in the group's coordinates. With $w_i:=u_i^{-\beta-1}$, the Hessian of $S_k^{\beta}$ is:
\[
\nabla^2 S_k^{\beta}=\beta^3(\beta-1)S_k^{\beta-2}\,\mathbf{w}\mathbf{w}^\top+\beta^2(\beta+1)S_k^{\beta-1}\operatorname{diag}\!\bigl(u_i^{-\beta-2}\bigr).
\]
Fix a direction $v\neq 0$ and set $T:=\sum_{i\in G_k}u_i^{-\beta-2}v_i^2>0$. If $\beta\geq1$, the rank-one term $\beta^3(\beta-1)S_k^{\beta-2}\,\mathbf{w}\mathbf{w}^\top$ is positive semidefinite, so $v^\top\nabla^2 S_k^{\beta}\,v\geq\beta^2(\beta+1)S_k^{\beta-1}T>0$. If $0<\beta<1$, the rank-one term is negative; the Cauchy--Schwarz bound $\bigl(\sum_{i\in G_k}u_i^{-\beta-1}v_i\bigr)^2\leq S_k\,T$ then gives $v^\top\nabla^2 S_k^{\beta}\,v \geq \beta^2(\beta^2+1)S_k^{\beta-1}T>0$. In both cases $S_k^{\beta}$ is strictly convex, so each group term is strictly concave in its group's coordinates.

\emph{Part (i), case $0<\alpha<1$.} Write $\gamma:=1-\alpha\in(0,1)$ and $T_k:=\sum_{i\in G_k}u_i^{\gamma}$. The intra-group score is $h_k=T_k/\gamma$, so the $k$th group term of \eqref{eq:alpha-group-W} equals $h_k^{\gamma}/\gamma=\gamma^{-\gamma-1}T_k^{\gamma}$, and it suffices to show that $T_k^{\gamma}$ is strictly concave in the group's coordinates. Its Hessian is
\[
\nabla^2 T_k^{\gamma}=\gamma^3(\gamma-1)T_k^{\gamma-2}\,\mathbf{w}\mathbf{w}^\top+\gamma^2(\gamma-1)T_k^{\gamma-1}\operatorname{diag}\!\bigl(u_i^{\gamma-2}\bigr),
\qquad w_i:=u_i^{\gamma-1}.
\]
Since $\gamma\in(0,1)$, both coefficients are negative, so the Hessian is negative definite and the group term is strictly concave in the group's coordinates.

In either case, the groups partition $[m]$, so the Hessian of $W_\alpha^g$ is block-diagonal with negative-definite blocks and $W_\alpha^g$ is strictly concave in $\mathbf{u}$. (At $\alpha=1$, $h_k=\prod_{i\in G_k}u_i$ gives $W_1^g(\mathbf{u})=\sum_{i=1}^m\log u_i$ by Remark~\ref{rem:alpha-coincide}, whose Hessian $-\operatorname{diag}(u_i^{-2})$ is negative definite, so the claim holds there as well.)

\emph{Part (ii).} Concavity is preserved by composition with an affine map. If the map is injective on the feasible set, its linear part $A$ has trivial kernel, so the composition's Hessian $A^\top(\nabla^2_{\mathbf{u}}W_\alpha^g)A$ is negative definite and the composition is strictly concave; on a compact set of decisions with strictly positive utilities, the Hessian's eigenvalues are bounded away from zero by continuity, so $-W_\alpha^g(\mathbf{u}(\cdot))$ is strongly convex there.
$\square$

\begin{lemma}[Decision oracle Lipschitzness from strong convexity]\label{lem:gen-oracle-from-sc}
Suppose, in all instances (i)~$-W(\cdot;\hat{\mathbf{r}})$ is $\rho_c$-strongly convex in $\mathbf{d}$ on a convex set containing the oracle-reachable allocations;
(ii)~$\nabla_{\mathbf{d}}W(\mathbf{d};\hat{\mathbf{r}})$ is $L_{W,r}$-Lipschitz in $\hat{\mathbf{r}}$; and
(iii)~the prediction map $\theta\mapsto f_\theta(X)\in\mathbb{R}^m$ is $L_f$-Lipschitz in $\theta$.
Then the decision map $\theta\mapsto\mathbf{d}^*(f_\theta(X))$ satisfies Assumption~\ref{assp:gen-oracle-lip} with $L_{\mathrm{or}}\leq (L_{W,r}/\rho_c)\,L_f$.
\end{lemma}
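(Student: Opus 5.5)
The plan is the standard perturbation argument for strongly concave maximization over a fixed convex set, combined with the variational inequality characterization of optimality. Fix an instance $z$ and two parameters $\theta_1,\theta_2\in\Theta$. Write $\hat{\mathbf r}_j:=f_{\theta_j}(X)$ and $\mathbf d_j:=\mathbf d^*(\hat{\mathbf r}_j)$ for $j=1,2$. By (i), each $\mathbf d_j$ is the unique maximizer of $W(\cdot;\hat{\mathbf r}_j)$ over the convex set $\mathcal S$, which does not depend on $\hat{\mathbf r}$. First-order optimality for a concave objective over a convex set then gives the two variational inequalities
\[
\langle \nabla_{\mathbf d}W(\mathbf d_1;\hat{\mathbf r}_1),\,\mathbf d_2-\mathbf d_1\rangle\le 0,\qquad \langle \nabla_{\mathbf d}W(\mathbf d_2;\hat{\mathbf r}_2),\,\mathbf d_1-\mathbf d_2\rangle\le 0 .
\]

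Next, strong convexity of $-W(\cdot;\hat{\mathbf r}_1)$ gives strong monotonicity of the negated gradient:
\[
\langle \nabla_{\mathbf d}W(\mathbf d_1;\hat{\mathbf r}_1)-\nabla_{\mathbf d}W(\mathbf d_2;\hat{\mathbf r}_1),\,\mathbf d_2-\mathbf d_1\rangle\ge \rho_c\|\mathbf d_1-\mathbf d_2\|^2 .
\]
Add the two variational inequalities and insert $\pm\nabla_{\mathbf d}W(\mathbf d_2;\hat{\mathbf r}_1)$. This yields
\[
\rho_c\|\mathbf d_1-\mathbf d_2\|^2\le \langle \nabla_{\mathbf d}W(\mathbf d_2;\hat{\mathbf r}_2)-\nabla_{\mathbf d}W(\mathbf d_2;\hat{\mathbf r}_1),\,\mathbf d_2-\mathbf d_1\rangle .
\]
Cauchy--Schwarz and (ii) then bound the right-hand side by $L_{W,r}\|\hat{\mathbf r}_1-\hat{\mathbf r}_2\|\,\|\mathbf d_1-\mathbf d_2\|$. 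If $\mathbf d_1\ne\mathbf d_2$, divide by $\|\mathbf d_1-\mathbf d_2\|$ to get $\|\mathbf d_1-\mathbf d_2\|\le (L_{W,r}/\rho_c)\|\hat{\mathbf r}_1-\hat{\mathbf r}_2\|$; if they are equal, the bound holds trivially. Finally, (iii) gives $\|\hat{\mathbf r}_1-\hat{\mathbf r}_2\|\le L_f\|\theta_1-\theta_2\|$, so $L_{\mathrm{or}}\le (L_{W,r}/\rho_c)L_f$ uniformly in $z$.

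The main point needing care is where the hypotheses are applied. The strong convexity in (i) and the gradient Lipschitzness in (ii) must hold at the cross point $(\mathbf d_2,\hat{\mathbf r}_1)$, not only at the two optima $(\mathbf d_j,\hat{\mathbf r}_j)$. The hypothesis states (i) on a convex set containing all oracle-reachable allocations, uniformly over $\hat{\mathbf r}$, so $\mathbf d_2$ lies in that set and the cross evaluation is admissible. For the $\alpha$-fair objective, this also requires utilities at these points to stay positive so the gradient is defined; this holds because the optimizers are interior, as shown in the proof of Proposition~\ref{prop:closed_form_solution_group-1}. The variational inequalities themselves need only that $\mathcal S$ is convex and that $W(\cdot;\hat{\mathbf r})$ is differentiable at the optimizer, with the domain boundary avoided for the same reason.
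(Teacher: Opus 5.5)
Your proof is correct and follows the paper's argument essentially step for step. Both use the two first-order variational inequalities at $\mathbf d_1$ and $\mathbf d_2$, strong monotonicity of $-\nabla_{\mathbf d}W(\cdot;\hat{\mathbf r}_1)$, and insertion of the cross term $\nabla_{\mathbf d}W(\mathbf d_2;\hat{\mathbf r}_1)$, followed by Cauchy--Schwarz with (ii) and composition with (iii); your remark that (i) and (ii) are invoked at the cross point $(\mathbf d_2,\hat{\mathbf r}_1)$ makes explicit a point the paper leaves implicit.
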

\emph{Proof.}
Denote $\psi(\cdot;\hat{\mathbf{r}}):=-W(\cdot;\hat{\mathbf{r}})$ and $\mathbf{d}_j:=\mathbf{d}^*(\hat{\mathbf{r}}_j)$ for $j=1,2$.
The first-order optimality conditions on the convex feasible set give $\langle\nabla\psi(\mathbf{d}_1;\hat{\mathbf{r}}_1),\mathbf{d}_2-\mathbf{d}_1\rangle\geq 0$ and $\langle\nabla\psi(\mathbf{d}_2;\hat{\mathbf{r}}_2),\mathbf{d}_2-\mathbf{d}_1\rangle\leq 0$. Combining them with the strong monotonicity of $\nabla\psi(\cdot;\hat{\mathbf{r}}_1)$ and the Cauchy--Schwarz inequality,
$\rho_c\|\mathbf{d}_2-\mathbf{d}_1\|^2
\leq \langle\nabla\psi(\mathbf{d}_2;\hat{\mathbf{r}}_1)-\nabla\psi(\mathbf{d}_1;\hat{\mathbf{r}}_1),\,\mathbf{d}_2-\mathbf{d}_1\rangle 
\leq \langle\nabla\psi(\mathbf{d}_2;\hat{\mathbf{r}}_1)-\nabla\psi(\mathbf{d}_2;\hat{\mathbf{r}}_2),\,\mathbf{d}_2-\mathbf{d}_1\rangle
\leq L_{W,r}\,\|\hat{\mathbf{r}}_1-\hat{\mathbf{r}}_2\|\,\|\mathbf{d}_2-\mathbf{d}_1\|$,
so $\|\mathbf{d}^*(\hat{\mathbf{r}}_1)-\mathbf{d}^*(\hat{\mathbf{r}}_2)\|\leq(L_{W,r}/\rho_c)\,\|\hat{\mathbf{r}}_1-\hat{\mathbf{r}}_2\|$; composing with the $L_f$-Lipschitz map $\theta\mapsto f_\theta(X)$ gives the bound.
$\square$

In the single resource allocation task, the predicted benefits are positive and bounded, with $\hat{r}_i \in [2,101]$ for all $i$. By Proposition~\ref{prop:closed_form_solution_group-1}, $\mathbf{d}^*(\hat{\mathbf{r}})$ is a continuous and strictly positive function of $\hat{\mathbf{r}}$, so the oracle-reachable allocations admit uniform bounds $0<d_{\min}\le d_i^*\le d_{\max}=Q/c_{\min}$. In addition, $W$, $\nabla_{\mathbf{d}}W$, and $\nabla^2_{\mathbf{d}}W$ are continuous with negative-definite $\nabla^2_{\mathbf{d}}W$ (Lemma~\ref{lem:alpha-fair-concavity}); hence finite $B_{\mathrm{reg}}$, $L_W$, $L_{W,r}$ and a strong-concavity modulus $\rho_c>0$ exist, and Lemma~\ref{lem:gen-oracle-from-sc} gives a uniform $L_{\mathrm{or}}\le (L_{W,r}/\rho_c)L_f$ on $\Theta$.

Next, we examine the multiple resource allocation setting. The utility function $u_i=\sum_{j=1}^D\hat r_{ij}d_{ij}$ has Jacobian $A=\partial\mathbf{u}/\partial\mathbf{d}\in\mathbb{R}^{m\times mD}$ of rank $m$, so $\nabla_{\mathbf{d}}^2 W=A^\top(\nabla_{\mathbf{u}}^2 W)A$ is rank-deficient by $m(D-1)$: the $\alpha$-fairness objective function is strictly concave in $\mathbf{u}$ but only concave in $\mathbf{d}$. This has two consequences. First, Lemma~\ref{lem:gen-oracle-from-sc} does not apply, since the required strong-convexity does not hold. Second, because $W$ is constant along $\ker A$ within the feasible set, the optimal utility $\mathbf{u}^*$ is unique but the optimal allocation $\mathbf{d}^*$ is not guaranteed to be unique. To obtain decision-oracle Lipschitzness, we require an additional nondegeneracy assumption as follows.

\begin{assumption}[Uniform conic nondegeneracy for non-injective utility maps]\label{assp:gen-conic}
At every $\hat{\mathbf{r}}$, the conic reformulation of \eqref{eq:opt_prob} admits a unique primal--dual optimal solution satisfying strict complementarity; by \citet{agrawal2019cone,busseti2019solution}, this solution is a regular point at which the normalized residual map $\mathcal{N}$ is differentiable with invertible derivative $D\mathcal{N}$. We assume in addition that the smallest singular value of $D\mathcal{N}$ at these solutions is bounded below by a uniform constant $\sigma_{\min}>0$.
\end{assumption}

Decision-oracle Lipschitzness then follows from the differentiable cone-program framework of \citet{agrawal2019differentiable} underlying our \texttt{cvxpylayers} backend: the $\alpha$-fair allocation admits a power-cone representation,
and under Assumption~\ref{assp:gen-conic} the implicit function theorem applied to the residual map of the homogeneous self-dual embedding \citep[\S 2]{agrawal2019cone,busseti2019solution} gives $\|D_{\hat{\mathbf{r}}}\mathbf{d}^*\|\le C_{\mathrm{data}}/\sigma_{\min}$ throughout the prediction range, where $C_{\mathrm{data}}$ bounds the canonicalization and extraction maps under the bounded problem data. The mean-value inequality over the convex, compact prediction range lifts this pointwise bound to a Lipschitz constant for $\hat{\mathbf{r}}\mapsto\mathbf{d}^*$, which composes with the $L_f$-Lipschitz predictor to yield $L_{\mathrm{or}}$ on $\Theta$.
\end{remark}

\begin{remark} [Verification of $L_F$ for the MAD-based prediction disparity] \label{rem:gen-mad-cohort}
Let $\mathcal{I}$ index the predicted coordinates of an instance ($\mathcal{I}=[m]$ in the
single-resource task; $\mathcal{I}=[m]\times[R]$ in the multiple-resource task), and let
$\mathcal{I}=\bigcup_{k=1}^{K}\mathcal{I}_k$ be the partition induced by the group memberships $\mathbf{a}$. The MAD-based prediction disparity measure is a function of the squared prediction errors
$\mathbf{e}(\theta;z)\in\mathbb{R}^{\mathcal{I}}$ with $e_i(\theta;z):=\bigl(r_i-f_\theta(\mathbf{x}_i)\bigr)^2$.

Since true impacts $\mathbf{r}$ and predictions lie in a common bounded interval, the prediction
errors obey $|r_i-f_\theta(\mathbf{x}_i)|\le B_e$ for all $i\in\mathcal{I}$, $\theta\in\Theta$, and $z\in\mathcal{Z}$, so $\mathbf{e}(\theta;z)\in[0,B_e^2]^{\mathcal{I}}$.
The prediction map $\theta \mapsto f_\theta(X)$ is $L_f$-Lipschitz with respect to $\|\cdot\|_\infty$ (coordinate-wise projection onto a fixed interval is $1$-Lipschitz, so the projected predictor retains $L_f$). Let $\hat y_i^{\,j}:=f_{\theta_j}(\mathbf{x}_i)$, $\bigl|e_i(\theta_1;z)-e_i(\theta_2;z)\bigr|
=\bigl|\hat y_i^{\,2}-\hat y_i^{\,1}\bigr|\cdot
 \bigl|(r_i-\hat y_i^{\,1})+(r_i-\hat y_i^{\,2})\bigr|
\le 2B_e\bigl|\hat y_i^{\,1}-\hat y_i^{\,2}\bigr|$.
Taking the maximum over $i\in\mathcal{I}$ gives
\begin{equation}\label{eq:gen-error-stability}
\bigl\|\mathbf{e}(\theta_1;z)-\mathbf{e}(\theta_2;z)\bigr\|_\infty
\;\le\; 2B_e L_f\,\|\theta_1-\theta_2\|
\qquad\text{for all }\theta_1,\theta_2\in\Theta.
\end{equation}

The group-based disparity passes this vector through a group average and then a
MAD function,
$\ell^{\mathrm{fair}}(\theta;z)=p\bigl(\mathbf{m}(\mathbf{e}(\theta;z))\bigr)$, where
$m_k(\mathbf{e}):=\frac{1}{|\mathcal{I}_k|}\sum_{i\in\mathcal{I}_k}e_i$, $p(\mathbf{m}):=\frac{1}{K}\sum_{k=1}^{K}\bigl|m_k-\bar m\bigr|$, 
and $\bar m$ is the mean of $\mathbf{m}$; the loss is thus a single scalar per instance and is
per-instance decomposable. Each of the two maps is $1$-Lipschitz with respect to
$\|\cdot\|_\infty$. For the group average,
$|m_k(\mathbf{e})-m_k(\mathbf{e}')|\le|\mathcal{I}_k|^{-1}\sum_{i\in\mathcal{I}_k}|e_i-e_i'|
\le\|\mathbf{e}-\mathbf{e}'\|_\infty$ for every $k$. For the MAD map, linearity of
$\mathbf{m}\mapsto\bar m$ makes $p$ a seminorm, so the reverse triangle inequality gives
$|p(\mathbf{m})-p(\mathbf{m}')|\le p(\mathbf{m}-\mathbf{m}')
\le\bigl[\sup_{\|\mathbf{x}\|_\infty\le1}p(\mathbf{x})\bigr]\|\mathbf{m}-\mathbf{m}'\|_\infty$;
by convexity the supremum is attained at a vertex of the cube, which yields
$\max_{K_1+K_2=K}4K_1K_2/K^2\le1$, so $p$ is $1$-Lipschitz in $\|\cdot\|_\infty$. The same
vertex argument over $[0,B_e^2]^{K}$ bounds the range of $p$ by $B_e^2/2$. Combining with
\eqref{eq:gen-error-stability} gives $B_F\le B_e^2/2$ and $L_F\le 2B_eL_f$.
\end{remark}

\begin{remark}[Verification of $L_L$ for the MSE prediction loss]\label{rem:gen-mse-cohort}
The prediction loss shares the structure analyzed in Remark~\ref{rem:gen-mad-cohort}, applying
a simpler outer map to the same vector of squared prediction errors: it is the plain average
$\ell^{\mathrm{pred}}(\theta;z)=|\mathcal{I}|^{-1}\sum_{i\in\mathcal{I}}e_i(\theta;z)$, without
the group-averaging and centering steps, and is likewise per-instance decomposable. Averaging
is $1$-Lipschitz with respect to $\|\cdot\|_\infty$ by the same estimate used there for the
group average, and it maps $[0,B_e^2]^{\mathcal{I}}$ into $[0,B_e^2]$. Combining with
\eqref{eq:gen-error-stability} gives $B_L\le B_e^2$ and $L_L\le 2B_eL_f$.
\end{remark}

Both prediction-side losses therefore satisfy Assumptions~\ref{assp:gen-bounded}
and~\ref{assp:gen-predictor-lip} with the same Lipschitz constant $2B_eL_f$, differing only in
their range bounds. 

\end{document}

%% file: Tables/tab_sec53_hc_main.tex
\begin{table}[htbp]
\centering
\caption{Single resource allocation results across predictor capacity ($\alpha{=}2$, $N{=}50$). Mean $\pm$ standard deviation over 5 seeds; bold marks the lowest mean in each column. 
}
\label{tab:sec53-hc-main}
\footnotesize
\setlength{\tabcolsep}{3pt}
\resizebox{\textwidth}{!}{%
\begin{tabular}{l cc ccc ccc ccc}
\toprule
 & & & \multicolumn{3}{c}{Linear} & \multicolumn{3}{c}{MLP-16} & \multicolumn{3}{c}{MLP-64} \\
\cmidrule(lr){4-6}\cmidrule(lr){7-9}\cmidrule(lr){10-12}
Method & $\lambda$ & $\mu$ & Regret & MAD & MSE & Regret & MAD & MSE & Regret & MAD & MSE \\
\midrule
PTO & 0 & -- & $0.193 \pm 0.005$ & $29.5 \pm 5.1$ & $136.5 \pm 1.7$ & $0.155 \pm 0.035$ & $40.1 \pm 41.3$ & $167.8 \pm 109.3$ & $0.139 \pm 0.003$ & $21.2 \pm 3.0$ & $\mathbf{118.9 \pm 1.9}$ \\
SAA & -- & -- & $0.279 \pm 0.003$ & $59.2 \pm 10.8$ & $249.3 \pm 4.5$ & $0.279 \pm 0.003$ & $59.2 \pm 10.8$ & $249.3 \pm 4.5$ & $0.279 \pm 0.003$ & $59.2 \pm 10.8$ & $249.3 \pm 4.5$ \\
DFL & 0 & 0 & $0.229 \pm 0.007$ & $101.3 \pm 11.8$ & $346.5 \pm 7.3$ & $0.129 \pm 0.003$ & $96.3 \pm 14.7$ & $342.9 \pm 25.2$ & $0.129 \pm 0.003$ & $71.0 \pm 11.0$ & $270.3 \pm 18.5$ \\
\midrule
FPTO & 1 & -- & $0.179 \pm 0.016$ & $\mathbf{27.8 \pm 5.6}$ & $\mathbf{135.2 \pm 2.8}$ & $0.160 \pm 0.034$ & $39.9 \pm 42.3$ & $171.2 \pm 108.7$ & $0.142 \pm 0.002$ & $\mathbf{20.6 \pm 3.5}$ & $122.6 \pm 1.9$ \\
Regret-and-MAD & 1 & 0 & $0.132 \pm 0.002$ & $37.1 \pm 3.0$ & $199.7 \pm 15.6$ & $0.129 \pm 0.002$ & $25.3 \pm 5.5$ & $136.4 \pm 4.4$ & $0.129 \pm 0.003$ & $27.2 \pm 5.5$ & $144.5 \pm 11.7$ \\
Regret-and-MSE & 0 & 0.5 & $\mathbf{0.130 \pm 0.002}$ & $41.7 \pm 6.1$ & $191.7 \pm 13.5$ & $\mathbf{0.129 \pm 0.003}$ & $28.2 \pm 2.6$ & $125.8 \pm 2.2$ & $\mathbf{0.129 \pm 0.003}$ & $27.3 \pm 5.1$ & $129.5 \pm 2.7$ \\
\midrule
FDFL-Scal & 1 & 1 & $0.131 \pm 0.002$ & $34.1 \pm 2.7$ & $175.5 \pm 4.4$ & $0.131 \pm 0.002$ & $24.3 \pm 3.0$ & $\mathbf{123.9 \pm 1.8}$ & $0.130 \pm 0.003$ & $25.3 \pm 3.4$ & $126.8 \pm 4.6$ \\
FDFL-PCGrad & -- & -- & $0.216 \pm 0.002$ & $63.7 \pm 9.8$ & $306.6 \pm 4.8$ & $0.134 \pm 0.002$ & $\mathbf{22.4 \pm 4.3}$ & $126.0 \pm 6.4$ & $0.130 \pm 0.003$ & $24.8 \pm 4.7$ & $136.0 \pm 10.1$ \\
FDFL-NashMTL & -- & -- & $0.172 \pm 0.001$ & $76.7 \pm 8.4$ & $298.4 \pm 7.5$ & $0.130 \pm 0.002$ & $37.7 \pm 21.3$ & $163.0 \pm 64.6$ & $0.129 \pm 0.003$ & $25.8 \pm 4.4$ & $126.4 \pm 2.8$ \\
\bottomrule
\end{tabular}
}
\end{table}

%% file: Tables/tab_sec53_md_main.tex
\begin{table}[htbp]
\centering
\caption{Multiple resource allocation results across group imbalance $\ell$ ($\alpha{=}2$, $N=50$, MLP-64, $K{=}2$). Mean $\pm$ standard deviation over 5 seeds; bold marks the lowest mean in each column. 
}
\label{tab:sec53-md-main}
\footnotesize
\setlength{\tabcolsep}{3pt}
\resizebox{\textwidth}{!}{%
\begin{tabular}{l cc ccccc ccccc}
\toprule
 & & & \multicolumn{5}{c}{Normalized regret} & \multicolumn{5}{c}{MAD} \\
\cmidrule(lr){4-8}\cmidrule(lr){9-13}
Method & $\lambda$ & $\mu$ & $\ell{=}0$ & $.2$ & $.4$ & $.6$ & $.8$ & $\ell{=}0$ & $.2$ & $.4$ & $.6$ & $.8$ \\
\midrule
PTO & 0 & -- & $0.135 \pm 0.038$ & $0.164 \pm 0.042$ & $0.193 \pm 0.039$ & $0.215 \pm 0.046$ & $0.216 \pm 0.044$ & $0.042 \pm 0.012$ & $0.078 \pm 0.040$ & $0.149 \pm 0.079$ & $0.228 \pm 0.117$ & $0.311 \pm 0.158$ \\
SAA & -- & -- & $0.664 \pm 0.084$ & $0.663 \pm 0.082$ & $0.660 \pm 0.079$ & $0.656 \pm 0.080$ & $0.656 \pm 0.086$ & $0.187 \pm 0.085$ & $0.207 \pm 0.079$ & $0.233 \pm 0.075$ & $0.260 \pm 0.072$ & $0.286 \pm 0.072$ \\
DFL & 0 & 0 & $0.139 \pm 0.039$ & $0.165 \pm 0.036$ & $0.196 \pm 0.045$ & $0.216 \pm 0.058$ & $0.219 \pm 0.056$ & $0.068 \pm 0.037$ & $0.114 \pm 0.035$ & $0.177 \pm 0.042$ & $0.176 \pm 0.035$ & $0.393 \pm 0.153$ \\
\midrule
FPTO & 1 & -- & $0.138 \pm 0.038$ & $0.167 \pm 0.038$ & $0.196 \pm 0.037$ & $0.228 \pm 0.038$ & $0.225 \pm 0.039$ & $0.042 \pm 0.013$ & $0.072 \pm 0.036$ & $0.117 \pm 0.064$ & $0.142 \pm 0.082$ & $0.174 \pm 0.104$ \\
Regret-and-MAD & 1 & 0 & $0.137 \pm 0.038$ & $0.164 \pm 0.036$ & $0.198 \pm 0.045$ & $0.217 \pm 0.053$ & $0.218 \pm 0.055$ & $0.058 \pm 0.028$ & $0.095 \pm 0.032$ & $0.109 \pm 0.027$ & $\mathbf{0.102 \pm 0.019}$ & $\mathbf{0.132 \pm 0.056}$ \\
Regret-and-MSE & 0 & 0.5 & $0.137 \pm 0.038$ & $0.166 \pm 0.041$ & $0.195 \pm 0.043$ & $0.214 \pm 0.057$ & $0.219 \pm 0.057$ & $0.049 \pm 0.012$ & $0.065 \pm 0.021$ & $0.103 \pm 0.026$ & $0.150 \pm 0.047$ & $0.160 \pm 0.089$ \\
\midrule
FDFL-Scal & 1 & 1 & $0.137 \pm 0.039$ & $0.164 \pm 0.039$ & $0.194 \pm 0.044$ & $0.215 \pm 0.054$ & $0.215 \pm 0.057$ & $0.046 \pm 0.012$ & $\mathbf{0.059 \pm 0.024}$ & $\mathbf{0.084 \pm 0.031}$ & $0.113 \pm 0.057$ & $0.142 \pm 0.083$ \\
FDFL-PCGrad & -- & -- & $0.136 \pm 0.038$ & $0.162 \pm 0.039$ & $\mathbf{0.190 \pm 0.040}$ & $0.211 \pm 0.051$ & $0.215 \pm 0.057$ & $0.042 \pm 0.013$ & $0.072 \pm 0.034$ & $0.105 \pm 0.059$ & $0.126 \pm 0.064$ & $0.144 \pm 0.080$ \\
FDFL-NashMTL & -- & -- & $\mathbf{0.132 \pm 0.036}$ & $\mathbf{0.156 \pm 0.037}$ & $0.193 \pm 0.048$ & $\mathbf{0.209 \pm 0.058}$ & $\mathbf{0.213 \pm 0.055}$ & $\mathbf{0.041 \pm 0.012}$ & $0.074 \pm 0.038$ & $0.150 \pm 0.087$ & $0.230 \pm 0.126$ & $0.302 \pm 0.155$ \\
\bottomrule
\end{tabular}
}
\end{table}

%% file: Tables/tab_app_hc_capacity.tex
\begin{table}[htbp]
\centering
\caption{Single resource allocation, full method pool across predictor capacity ($\alpha{=}2$, $N{=}50$).}
\label{tab:app-hc-capacity}
\scriptsize
\setlength{\tabcolsep}{3pt}
\resizebox{\textwidth}{!}{%
\begin{tabular}{l cc ccc ccc ccc}
\toprule
 & & & \multicolumn{3}{c}{Linear} & \multicolumn{3}{c}{MLP-16} & \multicolumn{3}{c}{MLP-64} \\
\cmidrule(lr){4-6}\cmidrule(lr){7-9}\cmidrule(lr){10-12}
Method & $\lambda$ & $\mu$ & Reg & MAD & MSE & Reg & MAD & MSE & Reg & MAD & MSE \\
\midrule
PTO & 0 & -- & $0.193 \pm 0.005$ & $29.5 \pm 5.1$ & $136.5 \pm 1.7$ & $0.155 \pm 0.035$ & $40.1 \pm 41.3$ & $167.8 \pm 109.3$ & $0.139 \pm 0.003$ & $21.2 \pm 3.0$ & $118.9 \pm 1.9$ \\
SAA & -- & -- & $0.279 \pm 0.003$ & $59.2 \pm 10.8$ & $249.3 \pm 4.5$ & $0.279 \pm 0.003$ & $59.2 \pm 10.8$ & $249.3 \pm 4.5$ & $0.279 \pm 0.003$ & $59.2 \pm 10.8$ & $249.3 \pm 4.5$ \\
WDRO & -- & -- & $0.172 \pm 0.004$ & $29.2 \pm 4.0$ & $138.1 \pm 1.8$ & $0.241 \pm 0.033$ & $105.9 \pm 9.1$ & $376.0 \pm 27.1$ & $0.138 \pm 0.002$ & $22.1 \pm 2.5$ & $\mathbf{118.2 \pm 1.5}$ \\
DFL & 0 & 0 & $0.229 \pm 0.007$ & $101.3 \pm 11.8$ & $346.5 \pm 7.3$ & $0.129 \pm 0.003$ & $96.3 \pm 14.7$ & $342.9 \pm 25.2$ & $0.129 \pm 0.003$ & $71.0 \pm 11.0$ & $270.3 \pm 18.5$ \\
\midrule
FPTO & 1 & -- & $0.179 \pm 0.016$ & $\mathbf{27.8 \pm 5.6}$ & $\mathbf{135.2 \pm 2.8}$ & $0.160 \pm 0.034$ & $39.9 \pm 42.3$ & $171.2 \pm 108.7$ & $0.142 \pm 0.002$ & $\mathbf{20.6 \pm 3.5}$ & $122.6 \pm 1.9$ \\
Regret-and-MAD & 1 & 0 & $0.132 \pm 0.002$ & $37.1 \pm 3.0$ & $199.7 \pm 15.6$ & $0.129 \pm 0.002$ & $25.3 \pm 5.5$ & $136.4 \pm 4.4$ & $0.129 \pm 0.003$ & $27.2 \pm 5.5$ & $144.5 \pm 11.7$ \\
Regret-and-MSE & 0 & 0.1 & $0.171 \pm 0.009$ & $82.2 \pm 9.4$ & $304.5 \pm 7.4$ & $0.129 \pm 0.002$ & $55.9 \pm 27.4$ & $209.0 \pm 102.6$ & $0.129 \pm 0.003$ & $40.4 \pm 16.7$ & $166.7 \pm 52.3$ \\
Regret-and-MSE & 0 & 0.5 & $\mathbf{0.130 \pm 0.002}$ & $41.7 \pm 6.1$ & $191.7 \pm 13.5$ & $\mathbf{0.129 \pm 0.003}$ & $28.2 \pm 2.6$ & $125.8 \pm 2.2$ & $\mathbf{0.129 \pm 0.003}$ & $27.3 \pm 5.1$ & $129.5 \pm 2.7$ \\
Regret-and-MSE & 0 & 1 & $0.131 \pm 0.002$ & $35.7 \pm 2.7$ & $168.9 \pm 3.7$ & $0.129 \pm 0.003$ & $24.5 \pm 3.6$ & $\mathbf{123.1 \pm 2.0}$ & $0.129 \pm 0.003$ & $24.8 \pm 4.4$ & $126.0 \pm 3.0$ \\
FDFL-FPLG & 0 & -- & $0.205 \pm 0.001$ & $94.5 \pm 11.6$ & $332.1 \pm 6.2$ & $0.129 \pm 0.002$ & $47.6 \pm 30.0$ & $194.7 \pm 77.8$ & $0.130 \pm 0.003$ & $26.4 \pm 4.7$ & $127.8 \pm 3.5$ \\
\midrule
FDFL-Scal & 1 & 0.1 & $0.131 \pm 0.002$ & $36.0 \pm 4.5$ & $193.0 \pm 5.8$ & $0.130 \pm 0.001$ & $25.7 \pm 3.5$ & $139.7 \pm 10.2$ & $0.129 \pm 0.002$ & $26.3 \pm 6.9$ & $145.9 \pm 9.4$ \\
FDFL-Scal & 1 & 0.5 & $0.131 \pm 0.002$ & $34.3 \pm 3.7$ & $179.2 \pm 5.0$ & $0.130 \pm 0.002$ & $25.6 \pm 3.5$ & $127.8 \pm 2.9$ & $0.130 \pm 0.003$ & $25.8 \pm 6.1$ & $135.0 \pm 8.1$ \\
FDFL-Scal & 1 & 1 & $0.131 \pm 0.002$ & $34.1 \pm 2.7$ & $175.5 \pm 4.4$ & $0.131 \pm 0.002$ & $24.3 \pm 3.0$ & $123.9 \pm 1.8$ & $0.130 \pm 0.003$ & $25.3 \pm 3.4$ & $126.8 \pm 4.6$ \\
FDFL-FPLG & 1 & -- & $0.188 \pm 0.003$ & $77.3 \pm 10.2$ & $312.6 \pm 6.5$ & $0.129 \pm 0.002$ & $26.2 \pm 5.8$ & $144.5 \pm 14.8$ & $0.130 \pm 0.002$ & $24.6 \pm 4.1$ & $129.3 \pm 4.0$ \\
FDFL-PCGrad & -- & -- & $0.216 \pm 0.002$ & $63.7 \pm 9.8$ & $306.6 \pm 4.8$ & $0.134 \pm 0.002$ & $\mathbf{22.4 \pm 4.3}$ & $126.0 \pm 6.4$ & $0.130 \pm 0.003$ & $24.8 \pm 4.7$ & $136.0 \pm 10.1$ \\
FDFL-MGDA & -- & -- & $0.218 \pm 0.001$ & $67.5 \pm 8.8$ & $313.6 \pm 3.9$ & $0.141 \pm 0.006$ & $32.7 \pm 3.3$ & $181.9 \pm 24.5$ & $0.134 \pm 0.002$ & $28.1 \pm 5.1$ & $134.2 \pm 5.1$ \\
FDFL-NashMTL & -- & -- & $0.172 \pm 0.001$ & $76.7 \pm 8.4$ & $298.4 \pm 7.5$ & $0.130 \pm 0.002$ & $37.7 \pm 21.3$ & $163.0 \pm 64.6$ & $0.129 \pm 0.003$ & $25.8 \pm 4.4$ & $126.4 \pm 2.8$ \\
\bottomrule
\end{tabular}
}
\end{table}

%% file: Tables/tab_app_hc_alpha.tex
\begin{table}[htbp]
\centering
\caption{Single resource allocation, full method pool across decision fairness parameter $\alpha$ (MLP-64, $N{=}50$).}
\label{tab:app-hc-alpha}
\scriptsize
\setlength{\tabcolsep}{3pt}
\resizebox{\textwidth}{!}{%
\begin{tabular}{l cc ccc ccc ccc ccc}
\toprule
 & & & \multicolumn{3}{c}{$\alpha{=}0.5$} & \multicolumn{3}{c}{$\alpha{=}1.5$} & \multicolumn{3}{c}{$\alpha{=}2$} & \multicolumn{3}{c}{$\alpha{=}4$} \\
\cmidrule(lr){4-6}\cmidrule(lr){7-9}\cmidrule(lr){10-12}\cmidrule(lr){13-15}
Method & $\lambda$ & $\mu$ & Reg & MAD & MSE & Reg & MAD & MSE & Reg & MAD & MSE & Reg & MAD & MSE \\
\midrule
PTO & 0 & -- & $0.044 \pm 0.000$ & $21.1 \pm 3.4$ & $119.0 \pm 1.8$ & $0.013 \pm 0.000$ & $21.8 \pm 3.4$ & $120.8 \pm 1.4$ & $0.139 \pm 0.003$ & $21.2 \pm 3.0$ & $118.9 \pm 1.9$ & $40.127 \pm 7.164$ & $23.9 \pm 2.2$ & $137.6 \pm 3.3$ \\
SAA & -- & -- & $0.075 \pm 0.001$ & $59.2 \pm 10.8$ & $249.3 \pm 4.5$ & $0.026 \pm 0.000$ & $59.2 \pm 10.8$ & $249.3 \pm 4.5$ & $0.279 \pm 0.003$ & $59.2 \pm 10.8$ & $249.3 \pm 4.5$ & $153.145 \pm 6.777$ & $59.2 \pm 10.8$ & $249.3 \pm 4.5$ \\
WDRO & -- & -- & $\mathbf{0.044 \pm 0.001}$ & $21.2 \pm 2.5$ & $\mathbf{116.6 \pm 2.9}$ & $0.012 \pm 0.000$ & $22.2 \pm 2.8$ & $\mathbf{118.2 \pm 1.5}$ & $0.138 \pm 0.002$ & $22.1 \pm 2.5$ & $\mathbf{118.2 \pm 1.5}$ & $41.325 \pm 7.755$ & $24.2 \pm 2.1$ & $\mathbf{136.9 \pm 3.2}$ \\
DFL & 0 & 0 & $0.053 \pm 0.001$ & $105.9 \pm 10.9$ & $380.5 \pm 10.6$ & $0.012 \pm 0.000$ & $62.3 \pm 16.2$ & $233.0 \pm 50.6$ & $0.129 \pm 0.003$ & $71.0 \pm 11.0$ & $270.3 \pm 18.5$ & $32.410 \pm 14.715$ & $109.1 \pm 11.7$ & $377.9 \pm 8.1$ \\
\midrule
FPTO & 1 & -- & $0.045 \pm 0.001$ & $20.2 \pm 2.8$ & $120.4 \pm 3.5$ & $0.013 \pm 0.000$ & $21.9 \pm 3.3$ & $129.9 \pm 5.8$ & $0.142 \pm 0.002$ & $\mathbf{20.6 \pm 3.5}$ & $122.6 \pm 1.9$ & $39.491 \pm 6.286$ & $\mathbf{22.0 \pm 1.9}$ & $144.8 \pm 2.8$ \\
Regret-and-MAD & 1 & 0 & $0.061 \pm 0.001$ & $103.7 \pm 10.5$ & $371.6 \pm 8.1$ & $0.017 \pm 0.001$ & $20.8 \pm 4.5$ & $183.4 \pm 28.2$ & $0.129 \pm 0.003$ & $27.2 \pm 5.5$ & $144.5 \pm 11.7$ & $32.410 \pm 14.715$ & $109.1 \pm 11.7$ & $377.9 \pm 8.1$ \\
Regret-and-MSE & 0 & 0.1 & $0.044 \pm 0.001$ & $20.2 \pm 3.6$ & $118.5 \pm 2.8$ & $0.012 \pm 0.000$ & $21.5 \pm 4.0$ & $118.5 \pm 1.7$ & $0.129 \pm 0.003$ & $40.4 \pm 16.7$ & $166.7 \pm 52.3$ & $32.410 \pm 14.715$ & $109.1 \pm 11.7$ & $377.9 \pm 8.1$ \\
Regret-and-MSE & 0 & 0.5 & $0.044 \pm 0.000$ & $20.6 \pm 3.4$ & $119.5 \pm 1.9$ & $0.013 \pm 0.000$ & $21.4 \pm 3.5$ & $119.3 \pm 1.5$ & $\mathbf{0.129 \pm 0.003}$ & $27.3 \pm 5.1$ & $129.5 \pm 2.7$ & $32.410 \pm 14.715$ & $109.1 \pm 11.7$ & $377.9 \pm 8.1$ \\
Regret-and-MSE & 0 & 1 & $0.044 \pm 0.000$ & $21.1 \pm 3.5$ & $119.0 \pm 1.8$ & $0.012 \pm 0.000$ & $21.1 \pm 3.5$ & $119.1 \pm 1.8$ & $0.129 \pm 0.003$ & $24.8 \pm 4.4$ & $126.0 \pm 3.0$ & $32.410 \pm 14.716$ & $109.1 \pm 11.7$ & $377.9 \pm 8.1$ \\
FDFL-FPLG & 0 & -- & $0.053 \pm 0.001$ & $94.1 \pm 9.9$ & $356.7 \pm 17.7$ & $0.012 \pm 0.000$ & $21.6 \pm 3.5$ & $118.7 \pm 2.0$ & $0.130 \pm 0.003$ & $26.4 \pm 4.7$ & $127.8 \pm 3.5$ & $27.557 \pm 4.904$ & $24.7 \pm 2.7$ & $139.3 \pm 4.6$ \\
\midrule
FDFL-Scal & 1 & 0.1 & $0.048 \pm 0.001$ & $20.6 \pm 2.5$ & $132.2 \pm 3.9$ & $0.014 \pm 0.000$ & $24.2 \pm 3.2$ & $136.6 \pm 10.2$ & $0.129 \pm 0.002$ & $26.3 \pm 6.9$ & $145.9 \pm 9.4$ & $32.410 \pm 14.715$ & $109.1 \pm 11.7$ & $377.9 \pm 8.1$ \\
FDFL-Scal & 1 & 0.5 & $0.046 \pm 0.001$ & $21.4 \pm 3.4$ & $124.2 \pm 3.6$ & $0.013 \pm 0.000$ & $21.8 \pm 2.9$ & $124.1 \pm 2.1$ & $0.130 \pm 0.003$ & $25.8 \pm 6.1$ & $135.0 \pm 8.1$ & $32.410 \pm 14.716$ & $109.1 \pm 11.7$ & $377.9 \pm 8.1$ \\
FDFL-Scal & 1 & 1 & $0.045 \pm 0.001$ & $\mathbf{20.1 \pm 2.8}$ & $120.3 \pm 3.4$ & $0.013 \pm 0.000$ & $\mathbf{20.6 \pm 3.9}$ & $122.0 \pm 2.0$ & $0.130 \pm 0.003$ & $25.3 \pm 3.4$ & $126.8 \pm 4.6$ & $32.410 \pm 14.716$ & $109.1 \pm 11.7$ & $377.9 \pm 8.1$ \\
FDFL-FPLG & 1 & -- & $0.061 \pm 0.001$ & $103.7 \pm 10.4$ & $371.5 \pm 8.1$ & $0.012 \pm 0.000$ & $24.5 \pm 4.3$ & $141.1 \pm 2.7$ & $0.130 \pm 0.002$ & $24.6 \pm 4.1$ & $129.3 \pm 4.0$ & $27.452 \pm 4.967$ & $25.0 \pm 2.4$ & $141.7 \pm 6.8$ \\
FDFL-PCGrad & -- & -- & $0.046 \pm 0.000$ & $21.4 \pm 3.6$ & $121.4 \pm 1.8$ & $0.012 \pm 0.000$ & $23.8 \pm 5.1$ & $129.7 \pm 5.7$ & $0.130 \pm 0.003$ & $24.8 \pm 4.7$ & $136.0 \pm 10.1$ & $29.039 \pm 3.943$ & $23.8 \pm 5.4$ & $143.3 \pm 10.0$ \\
FDFL-MGDA & -- & -- & $0.053 \pm 0.001$ & $105.8 \pm 11.0$ & $380.1 \pm 9.9$ & $\mathbf{0.012 \pm 0.000}$ & $34.9 \pm 8.2$ & $147.0 \pm 9.3$ & $0.134 \pm 0.002$ & $28.1 \pm 5.1$ & $134.2 \pm 5.1$ & $33.158 \pm 15.415$ & $106.8 \pm 10.7$ & $373.5 \pm 10.0$ \\
FDFL-NashMTL & -- & -- & $0.045 \pm 0.001$ & $21.7 \pm 6.6$ & $123.1 \pm 2.5$ & $0.012 \pm 0.000$ & $25.4 \pm 6.3$ & $122.9 \pm 4.0$ & $0.129 \pm 0.003$ & $25.8 \pm 4.4$ & $126.4 \pm 2.8$ & $\mathbf{27.406 \pm 4.945}$ & $24.4 \pm 2.2$ & $139.1 \pm 4.6$ \\
\bottomrule
\end{tabular}
}
\end{table}

%% file: Tables/tab_app_md_capacity.tex
\begin{table}[htbp]
\centering
\caption{Multiple resource allocation, full method pool across predictor capacity ($\alpha{=}2$, imbalance $0.6$, $K{=}2$).}
\label{tab:app-md-capacity}
\scriptsize
\setlength{\tabcolsep}{3pt}
\resizebox{\textwidth}{!}{%
\begin{tabular}{l cc ccc ccc ccc}
\toprule
 & & & \multicolumn{3}{c}{Linear} & \multicolumn{3}{c}{MLP-16} & \multicolumn{3}{c}{MLP-64} \\
\cmidrule(lr){4-6}\cmidrule(lr){7-9}\cmidrule(lr){10-12}
Method & $\lambda$ & $\mu$ & Reg & MAD & MSE & Reg & MAD & MSE & Reg & MAD & MSE \\
\midrule
PTO & 0 & -- & $0.345 \pm 0.063$ & $0.189 \pm 0.110$ & $\mathbf{1.707 \pm 0.761}$ & $0.248 \pm 0.057$ & $0.222 \pm 0.128$ & $1.063 \pm 0.428$ & $0.215 \pm 0.046$ & $0.228 \pm 0.117$ & $\mathbf{0.953 \pm 0.368}$ \\
SAA & -- & -- & $0.656 \pm 0.080$ & $0.260 \pm 0.072$ & $2.772 \pm 1.169$ & $0.656 \pm 0.080$ & $0.260 \pm 0.072$ & $2.772 \pm 1.169$ & $0.656 \pm 0.080$ & $0.260 \pm 0.072$ & $2.772 \pm 1.169$ \\
WDRO & -- & -- & $0.343 \pm 0.061$ & $0.197 \pm 0.131$ & $1.731 \pm 0.760$ & $0.255 \pm 0.054$ & $0.179 \pm 0.114$ & $1.090 \pm 0.445$ & $0.215 \pm 0.049$ & $0.194 \pm 0.115$ & $0.961 \pm 0.375$ \\
DFL & 0 & 0 & $0.287 \pm 0.067$ & $0.420 \pm 0.269$ & $2.535 \pm 1.250$ & $0.224 \pm 0.049$ & $0.405 \pm 0.191$ & $2.380 \pm 1.018$ & $0.216 \pm 0.058$ & $0.176 \pm 0.035$ & $1.567 \pm 0.763$ \\
\midrule
FPTO & 1 & -- & $0.351 \pm 0.063$ & $0.181 \pm 0.100$ & $1.711 \pm 0.762$ & $0.257 \pm 0.054$ & $0.144 \pm 0.082$ & $1.111 \pm 0.448$ & $0.228 \pm 0.038$ & $0.142 \pm 0.082$ & $0.998 \pm 0.402$ \\
Regret-and-MAD & 1 & 0 & $0.287 \pm 0.067$ & $0.412 \pm 0.264$ & $2.509 \pm 1.239$ & $0.222 \pm 0.054$ & $0.318 \pm 0.137$ & $2.075 \pm 0.924$ & $0.217 \pm 0.053$ & $0.102 \pm 0.019$ & $1.351 \pm 0.652$ \\
Regret-and-MSE & 0 & 0.1 & $0.287 \pm 0.067$ & $0.417 \pm 0.266$ & $2.524 \pm 1.244$ & $0.224 \pm 0.050$ & $0.367 \pm 0.152$ & $2.246 \pm 0.972$ & $0.215 \pm 0.055$ & $0.202 \pm 0.021$ & $1.722 \pm 0.900$ \\
Regret-and-MSE & 0 & 0.5 & $0.287 \pm 0.067$ & $0.406 \pm 0.257$ & $2.482 \pm 1.220$ & $0.223 \pm 0.052$ & $0.249 \pm 0.086$ & $1.837 \pm 0.784$ & $0.214 \pm 0.057$ & $0.150 \pm 0.047$ & $1.334 \pm 0.638$ \\
Regret-and-MSE & 0 & 1 & $0.288 \pm 0.067$ & $0.391 \pm 0.246$ & $2.431 \pm 1.193$ & $0.225 \pm 0.052$ & $0.188 \pm 0.075$ & $1.547 \pm 0.653$ & $0.215 \pm 0.057$ & $0.143 \pm 0.070$ & $1.122 \pm 0.495$ \\
FDFL-FPLG & 0 & -- & $0.296 \pm 0.065$ & $0.283 \pm 0.203$ & $2.074 \pm 0.937$ & $0.221 \pm 0.054$ & $0.120 \pm 0.067$ & $1.242 \pm 0.497$ & $0.214 \pm 0.051$ & $0.131 \pm 0.086$ & $1.066 \pm 0.435$ \\
\midrule
FDFL-Scal & 1 & 0.1 & $\mathbf{0.287 \pm 0.067}$ & $0.409 \pm 0.262$ & $2.499 \pm 1.233$ & $0.223 \pm 0.051$ & $0.283 \pm 0.119$ & $1.971 \pm 0.888$ & $0.216 \pm 0.057$ & $0.124 \pm 0.038$ & $1.467 \pm 0.787$ \\
FDFL-Scal & 1 & 0.5 & $0.287 \pm 0.067$ & $0.398 \pm 0.253$ & $2.457 \pm 1.210$ & $0.226 \pm 0.050$ & $0.185 \pm 0.080$ & $1.652 \pm 0.762$ & $0.213 \pm 0.058$ & $0.110 \pm 0.029$ & $1.285 \pm 0.591$ \\
FDFL-Scal & 1 & 1 & $0.288 \pm 0.067$ & $0.384 \pm 0.242$ & $2.406 \pm 1.183$ & $0.228 \pm 0.056$ & $0.171 \pm 0.086$ & $1.530 \pm 0.703$ & $0.215 \pm 0.054$ & $0.113 \pm 0.057$ & $1.140 \pm 0.497$ \\
FDFL-FPLG & 1 & -- & $0.295 \pm 0.066$ & $0.268 \pm 0.196$ & $2.040 \pm 0.923$ & $0.219 \pm 0.052$ & $\mathbf{0.108 \pm 0.041}$ & $1.329 \pm 0.582$ & $0.217 \pm 0.051$ & $\mathbf{0.089 \pm 0.036}$ & $1.174 \pm 0.531$ \\
FDFL-PCGrad & -- & -- & $0.305 \pm 0.068$ & $\mathbf{0.173 \pm 0.102}$ & $1.747 \pm 0.771$ & $0.216 \pm 0.050$ & $0.120 \pm 0.056$ & $1.082 \pm 0.465$ & $0.211 \pm 0.051$ & $0.126 \pm 0.064$ & $1.040 \pm 0.437$ \\
FDFL-MGDA & -- & -- & $0.475 \pm 0.077$ & $0.270 \pm 0.156$ & $2.593 \pm 1.076$ & $0.431 \pm 0.079$ & $0.129 \pm 0.061$ & $1.864 \pm 0.775$ & $0.238 \pm 0.045$ & $0.098 \pm 0.048$ & $1.122 \pm 0.504$ \\
FDFL-NashMTL & -- & -- & $0.296 \pm 0.070$ & $0.210 \pm 0.159$ & $1.831 \pm 0.818$ & $\mathbf{0.216 \pm 0.049}$ & $0.225 \pm 0.121$ & $\mathbf{0.999 \pm 0.390}$ & $\mathbf{0.209 \pm 0.058}$ & $0.230 \pm 0.126$ & $0.963 \pm 0.382$ \\
\bottomrule
\end{tabular}
}
\end{table}

%% file: Tables/tab_app_md_alpha.tex
\begin{table}[htbp]
\centering
\caption{Multiple resource allocation, full method pool across decision fairness parameter $\alpha$ (MLP-64, imbalance $0.6$, $K{=}2$).}
\label{tab:app-md-alpha}
\scriptsize
\setlength{\tabcolsep}{3pt}
\resizebox{\textwidth}{!}{%
\begin{tabular}{l cc ccc ccc ccc}
\toprule
 & & & \multicolumn{3}{c}{$\alpha{=}0.5$} & \multicolumn{3}{c}{$\alpha{=}1.5$} & \multicolumn{3}{c}{$\alpha{=}2$} \\
\cmidrule(lr){4-6}\cmidrule(lr){7-9}\cmidrule(lr){10-12}
Method & $\lambda$ & $\mu$ & Reg & MAD & MSE & Reg & MAD & MSE & Reg & MAD & MSE \\
\midrule
PTO & 0 & -- & $\mathbf{0.016 \pm 0.004}$ & $0.222 \pm 0.118$ & $\mathbf{0.960 \pm 0.369}$ & $0.035 \pm 0.005$ & $0.228 \pm 0.117$ & $\mathbf{0.953 \pm 0.368}$ & $0.215 \pm 0.046$ & $0.228 \pm 0.117$ & $\mathbf{0.953 \pm 0.368}$ \\
SAA & -- & -- & $0.053 \pm 0.014$ & $0.260 \pm 0.072$ & $2.772 \pm 1.169$ & $0.150 \pm 0.030$ & $0.260 \pm 0.072$ & $2.772 \pm 1.169$ & $0.656 \pm 0.080$ & $0.260 \pm 0.072$ & $2.772 \pm 1.169$ \\
WDRO & -- & -- & $0.016 \pm 0.004$ & $0.194 \pm 0.115$ & $0.961 \pm 0.375$ & $0.035 \pm 0.006$ & $0.194 \pm 0.115$ & $0.961 \pm 0.375$ & $0.215 \pm 0.049$ & $0.194 \pm 0.115$ & $0.961 \pm 0.375$ \\
DFL & 0 & 0 & $0.021 \pm 0.007$ & $0.562 \pm 0.247$ & $3.088 \pm 1.453$ & $0.038 \pm 0.009$ & $0.406 \pm 0.223$ & $2.424 \pm 1.271$ & $0.216 \pm 0.058$ & $0.176 \pm 0.035$ & $1.567 \pm 0.763$ \\
\midrule
FPTO & 1 & -- & $0.017 \pm 0.004$ & $0.135 \pm 0.076$ & $1.017 \pm 0.405$ & $0.038 \pm 0.004$ & $0.142 \pm 0.082$ & $0.998 \pm 0.402$ & $0.228 \pm 0.038$ & $0.142 \pm 0.082$ & $0.998 \pm 0.402$ \\
Regret-and-MAD & 1 & 0 & $0.021 \pm 0.007$ & $\mathbf{0.090 \pm 0.029}$ & $1.368 \pm 0.593$ & $0.037 \pm 0.008$ & $0.102 \pm 0.035$ & $1.288 \pm 0.603$ & $0.217 \pm 0.053$ & $0.102 \pm 0.019$ & $1.351 \pm 0.652$ \\
Regret-and-MSE & 0 & 0.1 & $0.018 \pm 0.006$ & $0.193 \pm 0.062$ & $1.619 \pm 0.738$ & $0.037 \pm 0.008$ & $0.166 \pm 0.095$ & $1.092 \pm 0.449$ & $0.215 \pm 0.055$ & $0.202 \pm 0.021$ & $1.722 \pm 0.900$ \\
Regret-and-MSE & 0 & 0.5 & $0.019 \pm 0.006$ & $0.147 \pm 0.088$ & $1.183 \pm 0.440$ & $0.036 \pm 0.008$ & $0.200 \pm 0.112$ & $1.012 \pm 0.395$ & $0.214 \pm 0.057$ & $0.150 \pm 0.047$ & $1.334 \pm 0.638$ \\
Regret-and-MSE & 0 & 1 & $0.018 \pm 0.005$ & $0.166 \pm 0.095$ & $1.079 \pm 0.403$ & $0.036 \pm 0.008$ & $0.214 \pm 0.117$ & $0.989 \pm 0.385$ & $0.215 \pm 0.057$ & $0.143 \pm 0.070$ & $1.122 \pm 0.495$ \\
FDFL-FPLG & 0 & -- & $0.020 \pm 0.006$ & $0.308 \pm 0.221$ & $1.972 \pm 0.829$ & $0.036 \pm 0.007$ & $0.140 \pm 0.075$ & $1.055 \pm 0.439$ & $0.214 \pm 0.051$ & $0.131 \pm 0.086$ & $1.066 \pm 0.435$ \\
\midrule
FDFL-Scal & 1 & 0.1 & $0.019 \pm 0.006$ & $0.092 \pm 0.030$ & $1.266 \pm 0.541$ & $0.037 \pm 0.008$ & $0.093 \pm 0.033$ & $1.186 \pm 0.527$ & $0.216 \pm 0.057$ & $0.124 \pm 0.038$ & $1.467 \pm 0.787$ \\
FDFL-Scal & 1 & 0.5 & $0.019 \pm 0.006$ & $0.116 \pm 0.058$ & $1.216 \pm 0.466$ & $0.036 \pm 0.008$ & $0.108 \pm 0.049$ & $1.081 \pm 0.447$ & $0.213 \pm 0.058$ & $0.110 \pm 0.029$ & $1.285 \pm 0.591$ \\
FDFL-Scal & 1 & 1 & $0.017 \pm 0.006$ & $0.119 \pm 0.064$ & $1.106 \pm 0.436$ & $0.036 \pm 0.007$ & $0.135 \pm 0.076$ & $1.022 \pm 0.409$ & $0.215 \pm 0.054$ & $0.113 \pm 0.057$ & $1.140 \pm 0.497$ \\
FDFL-FPLG & 1 & -- & $0.019 \pm 0.006$ & $0.099 \pm 0.034$ & $1.384 \pm 0.646$ & $0.036 \pm 0.007$ & $\mathbf{0.086 \pm 0.035}$ & $1.156 \pm 0.532$ & $0.217 \pm 0.051$ & $\mathbf{0.089 \pm 0.036}$ & $1.174 \pm 0.531$ \\
FDFL-PCGrad & -- & -- & $0.019 \pm 0.005$ & $0.129 \pm 0.072$ & $1.174 \pm 0.491$ & $0.037 \pm 0.008$ & $0.118 \pm 0.056$ & $1.062 \pm 0.462$ & $0.211 \pm 0.051$ & $0.126 \pm 0.064$ & $1.040 \pm 0.437$ \\
FDFL-MGDA & -- & -- & $0.018 \pm 0.006$ & $0.119 \pm 0.072$ & $1.091 \pm 0.452$ & $0.038 \pm 0.006$ & $0.118 \pm 0.066$ & $1.081 \pm 0.476$ & $0.238 \pm 0.045$ & $0.098 \pm 0.048$ & $1.122 \pm 0.504$ \\
FDFL-NashMTL & -- & -- & $0.017 \pm 0.005$ & $0.212 \pm 0.111$ & $0.999 \pm 0.376$ & $\mathbf{0.035 \pm 0.007}$ & $0.223 \pm 0.118$ & $0.955 \pm 0.379$ & $\mathbf{0.209 \pm 0.058}$ & $0.230 \pm 0.126$ & $0.963 \pm 0.382$ \\
\bottomrule
\end{tabular}
}
\end{table}

%% file: Tables/tab_app_md_imbalance.tex
\begin{table}[htbp]
\centering
\caption{Multiple resource allocation, full method pool across group imbalance $\ell$ ($\alpha{=}2$, MLP-64, $K{=}2$).}
\label{tab:app-md-imbalance}
\scriptsize
\setlength{\tabcolsep}{3pt}
\resizebox{\textwidth}{!}{%
\begin{tabular}{l cc ccc ccc ccc ccc ccc}
\toprule
 & & & \multicolumn{3}{c}{$\ell{=}0$} & \multicolumn{3}{c}{$\ell{=}0.2$} & \multicolumn{3}{c}{$\ell{=}0.4$} & \multicolumn{3}{c}{$\ell{=}0.6$} & \multicolumn{3}{c}{$\ell{=}0.8$} \\
\cmidrule(lr){4-6}\cmidrule(lr){7-9}\cmidrule(lr){10-12}\cmidrule(lr){13-15}\cmidrule(lr){16-18}
Method & $\lambda$ & $\mu$ & Reg & MAD & MSE & Reg & MAD & MSE & Reg & MAD & MSE & Reg & MAD & MSE & Reg & MAD & MSE \\
\midrule
PTO & 0 & -- & $0.135 \pm 0.038$ & $0.042 \pm 0.012$ & $\mathbf{0.545 \pm 0.231}$ & $0.164 \pm 0.042$ & $0.078 \pm 0.040$ & $\mathbf{0.660 \pm 0.275}$ & $0.193 \pm 0.039$ & $0.149 \pm 0.079$ & $\mathbf{0.797 \pm 0.321}$ & $0.215 \pm 0.046$ & $0.228 \pm 0.117$ & $\mathbf{0.953 \pm 0.368}$ & $0.216 \pm 0.044$ & $0.311 \pm 0.158$ & $\mathbf{1.130 \pm 0.419}$ \\
SAA & -- & -- & $0.664 \pm 0.084$ & $0.187 \pm 0.085$ & $2.408 \pm 1.055$ & $0.663 \pm 0.082$ & $0.207 \pm 0.079$ & $2.506 \pm 1.088$ & $0.660 \pm 0.079$ & $0.233 \pm 0.075$ & $2.628 \pm 1.127$ & $0.656 \pm 0.080$ & $0.260 \pm 0.072$ & $2.772 \pm 1.169$ & $0.656 \pm 0.086$ & $0.286 \pm 0.072$ & $2.936 \pm 1.215$ \\
WDRO & -- & -- & $0.133 \pm 0.034$ & $0.043 \pm 0.014$ & $0.550 \pm 0.233$ & $0.166 \pm 0.041$ & $0.070 \pm 0.042$ & $0.664 \pm 0.278$ & $0.196 \pm 0.045$ & $0.127 \pm 0.077$ & $0.802 \pm 0.326$ & $0.215 \pm 0.049$ & $0.194 \pm 0.115$ & $0.961 \pm 0.375$ & $0.227 \pm 0.043$ & $0.257 \pm 0.149$ & $1.149 \pm 0.432$ \\
DFL & 0 & 0 & $0.139 \pm 0.039$ & $0.068 \pm 0.037$ & $0.952 \pm 0.538$ & $0.165 \pm 0.036$ & $0.114 \pm 0.035$ & $1.399 \pm 0.849$ & $0.196 \pm 0.045$ & $0.177 \pm 0.042$ & $1.577 \pm 0.863$ & $0.216 \pm 0.058$ & $0.176 \pm 0.035$ & $1.567 \pm 0.763$ & $0.219 \pm 0.056$ & $0.393 \pm 0.153$ & $2.197 \pm 0.988$ \\
\midrule
FPTO & 1 & -- & $0.138 \pm 0.038$ & $0.042 \pm 0.013$ & $0.546 \pm 0.230$ & $0.167 \pm 0.038$ & $0.072 \pm 0.036$ & $0.665 \pm 0.278$ & $0.196 \pm 0.037$ & $0.117 \pm 0.064$ & $0.821 \pm 0.338$ & $0.228 \pm 0.038$ & $0.142 \pm 0.082$ & $0.998 \pm 0.402$ & $0.225 \pm 0.039$ & $0.174 \pm 0.104$ & $1.189 \pm 0.467$ \\
Regret-and-MAD & 1 & 0 & $0.137 \pm 0.038$ & $0.058 \pm 0.028$ & $0.788 \pm 0.404$ & $0.164 \pm 0.036$ & $0.095 \pm 0.032$ & $1.237 \pm 0.750$ & $0.198 \pm 0.045$ & $0.109 \pm 0.027$ & $1.318 \pm 0.756$ & $0.217 \pm 0.053$ & $0.102 \pm 0.019$ & $1.351 \pm 0.652$ & $0.218 \pm 0.055$ & $0.132 \pm 0.056$ & $1.634 \pm 0.774$ \\
Regret-and-MSE & 0 & 0.1 & $0.137 \pm 0.033$ & $0.056 \pm 0.024$ & $0.809 \pm 0.418$ & $0.168 \pm 0.045$ & $0.075 \pm 0.024$ & $0.977 \pm 0.514$ & $0.199 \pm 0.041$ & $0.100 \pm 0.020$ & $1.178 \pm 0.623$ & $0.215 \pm 0.055$ & $0.202 \pm 0.021$ & $1.722 \pm 0.900$ & $0.218 \pm 0.056$ & $0.243 \pm 0.037$ & $1.864 \pm 0.840$ \\
Regret-and-MSE & 0 & 0.5 & $0.137 \pm 0.038$ & $0.049 \pm 0.012$ & $0.674 \pm 0.301$ & $0.166 \pm 0.041$ & $0.065 \pm 0.021$ & $0.836 \pm 0.381$ & $0.195 \pm 0.043$ & $0.103 \pm 0.026$ & $1.109 \pm 0.569$ & $0.214 \pm 0.057$ & $0.150 \pm 0.047$ & $1.334 \pm 0.638$ & $0.219 \pm 0.057$ & $0.160 \pm 0.089$ & $1.327 \pm 0.539$ \\
Regret-and-MSE & 0 & 1 & $0.135 \pm 0.037$ & $0.046 \pm 0.012$ & $0.650 \pm 0.295$ & $0.165 \pm 0.040$ & $0.063 \pm 0.028$ & $0.771 \pm 0.342$ & $0.194 \pm 0.044$ & $0.103 \pm 0.050$ & $0.997 \pm 0.459$ & $0.215 \pm 0.057$ & $0.143 \pm 0.070$ & $1.122 \pm 0.495$ & $0.214 \pm 0.057$ & $0.212 \pm 0.116$ & $1.284 \pm 0.518$ \\
FDFL-FPLG & 0 & -- & $0.136 \pm 0.035$ & $0.047 \pm 0.013$ & $0.606 \pm 0.257$ & $0.160 \pm 0.040$ & $0.061 \pm 0.029$ & $0.726 \pm 0.309$ & $0.197 \pm 0.044$ & $0.093 \pm 0.050$ & $0.891 \pm 0.381$ & $0.214 \pm 0.051$ & $0.131 \pm 0.086$ & $1.066 \pm 0.435$ & $0.217 \pm 0.058$ & $0.158 \pm 0.117$ & $1.272 \pm 0.492$ \\
\midrule
FDFL-Scal & 1 & 0.1 & $0.136 \pm 0.034$ & $0.053 \pm 0.021$ & $0.767 \pm 0.381$ & $0.164 \pm 0.041$ & $0.069 \pm 0.020$ & $0.927 \pm 0.452$ & $0.194 \pm 0.043$ & $0.087 \pm 0.014$ & $1.109 \pm 0.541$ & $0.216 \pm 0.057$ & $0.124 \pm 0.038$ & $1.467 \pm 0.787$ & $0.220 \pm 0.054$ & $0.127 \pm 0.048$ & $1.586 \pm 0.724$ \\
FDFL-Scal & 1 & 0.5 & $0.134 \pm 0.036$ & $0.048 \pm 0.013$ & $0.656 \pm 0.285$ & $0.166 \pm 0.042$ & $0.064 \pm 0.022$ & $0.809 \pm 0.357$ & $0.194 \pm 0.042$ & $0.089 \pm 0.024$ & $1.081 \pm 0.538$ & $0.213 \pm 0.058$ & $0.110 \pm 0.029$ & $1.285 \pm 0.591$ & $0.216 \pm 0.057$ & $0.109 \pm 0.052$ & $1.370 \pm 0.581$ \\
FDFL-Scal & 1 & 1 & $0.137 \pm 0.039$ & $0.046 \pm 0.012$ & $0.630 \pm 0.269$ & $0.164 \pm 0.039$ & $\mathbf{0.059 \pm 0.024}$ & $0.772 \pm 0.339$ & $0.194 \pm 0.044$ & $0.084 \pm 0.031$ & $1.003 \pm 0.468$ & $0.215 \pm 0.054$ & $0.113 \pm 0.057$ & $1.140 \pm 0.497$ & $0.215 \pm 0.057$ & $0.142 \pm 0.083$ & $1.304 \pm 0.535$ \\
FDFL-FPLG & 1 & -- & $0.135 \pm 0.034$ & $0.046 \pm 0.013$ & $0.590 \pm 0.247$ & $0.165 \pm 0.040$ & $0.061 \pm 0.029$ & $0.759 \pm 0.349$ & $0.198 \pm 0.046$ & $\mathbf{0.073 \pm 0.032}$ & $0.960 \pm 0.448$ & $0.217 \pm 0.051$ & $\mathbf{0.089 \pm 0.036}$ & $1.174 \pm 0.531$ & $0.217 \pm 0.055$ & $\mathbf{0.103 \pm 0.041}$ & $1.405 \pm 0.638$ \\
FDFL-PCGrad & -- & -- & $0.136 \pm 0.038$ & $0.042 \pm 0.013$ & $0.555 \pm 0.229$ & $0.162 \pm 0.039$ & $0.072 \pm 0.034$ & $0.686 \pm 0.286$ & $\mathbf{0.190 \pm 0.040}$ & $0.105 \pm 0.059$ & $0.849 \pm 0.370$ & $0.211 \pm 0.051$ & $0.126 \pm 0.064$ & $1.040 \pm 0.437$ & $0.215 \pm 0.057$ & $0.144 \pm 0.080$ & $1.255 \pm 0.528$ \\
FDFL-MGDA & -- & -- & $0.181 \pm 0.066$ & $0.044 \pm 0.016$ & $0.765 \pm 0.366$ & $0.210 \pm 0.081$ & $0.075 \pm 0.029$ & $0.838 \pm 0.410$ & $0.223 \pm 0.049$ & $0.105 \pm 0.043$ & $0.939 \pm 0.443$ & $0.238 \pm 0.045$ & $0.098 \pm 0.048$ & $1.122 \pm 0.504$ & $0.238 \pm 0.060$ & $0.143 \pm 0.064$ & $1.285 \pm 0.564$ \\
FDFL-NashMTL & -- & -- & $\mathbf{0.132 \pm 0.036}$ & $\mathbf{0.041 \pm 0.012}$ & $0.545 \pm 0.233$ & $\mathbf{0.156 \pm 0.037}$ & $0.074 \pm 0.038$ & $0.663 \pm 0.280$ & $0.193 \pm 0.048$ & $0.150 \pm 0.087$ & $0.803 \pm 0.329$ & $\mathbf{0.209 \pm 0.058}$ & $0.230 \pm 0.126$ & $0.963 \pm 0.382$ & $\mathbf{0.213 \pm 0.055}$ & $0.302 \pm 0.155$ & $1.136 \pm 0.427$ \\
\bottomrule
\end{tabular}
}
\end{table}

%% file: Tables/tab_app_hc_ntrain.tex
\begin{table}[htbp]
\centering
\caption{Single resource allocation, full method pool across number of training instances $N$ (MLP-64, $\alpha{=}2$).}
\label{tab:app-hc-ntrain}
\scriptsize
\setlength{\tabcolsep}{3pt}
\resizebox{\textwidth}{!}{%
\begin{tabular}{l cc ccc ccc ccc}
\toprule
 & & & \multicolumn{3}{c}{$N{=}10$} & \multicolumn{3}{c}{$N{=}20$} & \multicolumn{3}{c}{$N{=}50$} \\
\cmidrule(lr){4-6}\cmidrule(lr){7-9}\cmidrule(lr){10-12}
Method & $\lambda$ & $\mu$ & Reg & MAD & MSE & Reg & MAD & MSE & Reg & MAD & MSE \\
\midrule
PTO & 0 & -- & $0.139 \pm 0.003$ & $19.9 \pm 3.3$ & $119.7 \pm 1.3$ & $0.139 \pm 0.003$ & $19.9 \pm 3.0$ & $119.1 \pm 1.2$ & $0.139 \pm 0.003$ & $21.2 \pm 3.0$ & $118.9 \pm 1.9$ \\
SAA & -- & -- & $0.279 \pm 0.002$ & $57.2 \pm 11.2$ & $248.4 \pm 4.7$ & $0.279 \pm 0.002$ & $57.2 \pm 11.4$ & $248.4 \pm 4.8$ & $0.279 \pm 0.003$ & $59.2 \pm 10.8$ & $249.3 \pm 4.5$ \\
WDRO & -- & -- & $0.139 \pm 0.003$ & $20.5 \pm 3.2$ & $\mathbf{118.8 \pm 1.5}$ & $0.138 \pm 0.003$ & $20.6 \pm 2.7$ & $\mathbf{118.2 \pm 1.4}$ & $0.138 \pm 0.002$ & $22.1 \pm 2.5$ & $\mathbf{118.2 \pm 1.5}$ \\
DFL & 0 & 0 & $0.131 \pm 0.003$ & $65.4 \pm 11.0$ & $260.3 \pm 25.5$ & $0.130 \pm 0.004$ & $68.9 \pm 6.3$ & $267.6 \pm 23.1$ & $0.129 \pm 0.003$ & $71.0 \pm 11.0$ & $270.3 \pm 18.5$ \\
\midrule
FPTO & 1 & -- & $0.143 \pm 0.002$ & $\mathbf{18.1 \pm 6.0}$ & $126.2 \pm 7.6$ & $0.142 \pm 0.002$ & $\mathbf{17.9 \pm 6.0}$ & $127.0 \pm 9.6$ & $0.142 \pm 0.002$ & $\mathbf{20.6 \pm 3.5}$ & $122.6 \pm 1.9$ \\
Regret-and-MAD & 1 & 0 & $0.130 \pm 0.004$ & $29.8 \pm 6.8$ & $139.7 \pm 10.7$ & $0.130 \pm 0.003$ & $27.7 \pm 4.4$ & $141.5 \pm 7.4$ & $0.129 \pm 0.003$ & $27.2 \pm 5.5$ & $144.5 \pm 11.7$ \\
Regret-and-MSE & 0 & 0.1 & $0.147 \pm 0.038$ & $41.2 \pm 14.3$ & $179.2 \pm 45.3$ & $0.226 \pm 0.216$ & $44.5 \pm 17.2$ & $189.4 \pm 61.6$ & $0.129 \pm 0.003$ & $40.4 \pm 16.7$ & $166.7 \pm 52.3$ \\
Regret-and-MSE & 0 & 0.5 & $\mathbf{0.130 \pm 0.003}$ & $27.0 \pm 5.7$ & $130.1 \pm 4.7$ & $0.136 \pm 0.016$ & $26.3 \pm 5.5$ & $127.9 \pm 3.3$ & $\mathbf{0.129 \pm 0.003}$ & $27.3 \pm 5.1$ & $129.5 \pm 2.7$ \\
Regret-and-MSE & 0 & 1 & $0.130 \pm 0.003$ & $24.7 \pm 4.4$ & $126.4 \pm 2.9$ & $0.129 \pm 0.003$ & $24.1 \pm 4.0$ & $125.4 \pm 3.4$ & $0.129 \pm 0.003$ & $24.8 \pm 4.4$ & $126.0 \pm 3.0$ \\
FDFL-FPLG & 0 & -- & $0.130 \pm 0.002$ & $24.8 \pm 4.9$ & $127.0 \pm 3.9$ & $\mathbf{0.129 \pm 0.003}$ & $25.8 \pm 5.5$ & $128.3 \pm 2.8$ & $0.130 \pm 0.003$ & $26.4 \pm 4.7$ & $127.8 \pm 3.5$ \\
\midrule
FDFL-Scal & 1 & 0.1 & $0.130 \pm 0.003$ & $26.1 \pm 6.4$ & $137.6 \pm 6.4$ & $0.143 \pm 0.030$ & $25.9 \pm 5.3$ & $138.3 \pm 8.3$ & $0.129 \pm 0.002$ & $26.3 \pm 6.9$ & $145.9 \pm 9.4$ \\
FDFL-Scal & 1 & 0.5 & $0.131 \pm 0.003$ & $23.4 \pm 4.0$ & $129.8 \pm 4.5$ & $0.131 \pm 0.003$ & $23.9 \pm 3.0$ & $129.7 \pm 4.6$ & $0.130 \pm 0.003$ & $25.8 \pm 6.1$ & $135.0 \pm 8.1$ \\
FDFL-Scal & 1 & 1 & $0.131 \pm 0.004$ & $23.6 \pm 5.6$ & $129.4 \pm 7.4$ & $0.130 \pm 0.003$ & $22.4 \pm 3.0$ & $126.1 \pm 2.4$ & $0.130 \pm 0.003$ & $25.3 \pm 3.4$ & $126.8 \pm 4.6$ \\
FDFL-FPLG & 1 & -- & $0.131 \pm 0.003$ & $24.0 \pm 3.4$ & $130.1 \pm 7.5$ & $0.130 \pm 0.003$ & $24.5 \pm 3.6$ & $131.1 \pm 7.3$ & $0.130 \pm 0.002$ & $24.6 \pm 4.1$ & $129.3 \pm 4.0$ \\
FDFL-PCGrad & -- & -- & $0.131 \pm 0.004$ & $24.3 \pm 4.5$ & $129.8 \pm 3.9$ & $0.131 \pm 0.003$ & $22.5 \pm 4.6$ & $132.1 \pm 3.9$ & $0.130 \pm 0.003$ & $24.8 \pm 4.7$ & $136.0 \pm 10.1$ \\
FDFL-MGDA & -- & -- & $0.133 \pm 0.002$ & $26.1 \pm 6.3$ & $131.9 \pm 6.7$ & $0.134 \pm 0.005$ & $22.8 \pm 3.7$ & $131.9 \pm 9.1$ & $0.134 \pm 0.002$ & $28.1 \pm 5.1$ & $134.2 \pm 5.1$ \\
FDFL-NashMTL & -- & -- & $0.130 \pm 0.003$ & $24.7 \pm 2.6$ & $126.7 \pm 3.0$ & $0.129 \pm 0.003$ & $24.0 \pm 3.5$ & $126.8 \pm 3.9$ & $0.129 \pm 0.003$ & $25.8 \pm 4.4$ & $126.4 \pm 2.8$ \\
\bottomrule
\end{tabular}
}
\end{table}

%% file: Tables/tab_app_md_groups.tex
\begin{table}[htbp]
\centering
\caption{Multiple resource allocation, full method pool across the number of groups $K$ (MLP-64, $\alpha{=}2$, imbalance $0.6$; MAD values are not comparable across $K$).}
\label{tab:app-md-groups}
\scriptsize
\setlength{\tabcolsep}{3pt}
\begin{tabular}{l cc ccc ccc}
\toprule
 & & & \multicolumn{3}{c}{$K{=}2$} & \multicolumn{3}{c}{$K{=}4$} \\
\cmidrule(lr){4-6}\cmidrule(lr){7-9}
Method & $\lambda$ & $\mu$ & Reg & MAD & MSE & Reg & MAD & MSE \\
\midrule
PTO & 0 & -- & $0.215 \pm 0.046$ & $0.228 \pm 0.117$ & $\mathbf{0.953 \pm 0.368}$ & $0.217 \pm 0.045$ & $0.194 \pm 0.083$ & $\mathbf{0.929 \pm 0.364}$ \\
SAA & -- & -- & $0.656 \pm 0.080$ & $0.260 \pm 0.072$ & $2.772 \pm 1.169$ & $0.662 \pm 0.076$ & $0.404 \pm 0.119$ & $2.743 \pm 1.166$ \\
WDRO & -- & -- & $0.215 \pm 0.049$ & $0.194 \pm 0.115$ & $0.961 \pm 0.375$ & $0.222 \pm 0.052$ & $0.171 \pm 0.080$ & $0.935 \pm 0.374$ \\
DFL & 0 & 0 & $0.216 \pm 0.058$ & $0.176 \pm 0.035$ & $1.567 \pm 0.763$ & $0.221 \pm 0.051$ & $0.228 \pm 0.104$ & $1.440 \pm 0.726$ \\
\midrule
FPTO & 1 & -- & $0.228 \pm 0.038$ & $0.142 \pm 0.082$ & $0.998 \pm 0.402$ & $0.222 \pm 0.044$ & $0.177 \pm 0.076$ & $0.942 \pm 0.375$ \\
Regret-and-MAD & 1 & 0 & $0.217 \pm 0.053$ & $0.102 \pm 0.019$ & $1.351 \pm 0.652$ & $0.218 \pm 0.056$ & $0.201 \pm 0.092$ & $1.317 \pm 0.623$ \\
Regret-and-MSE & 0 & 0.1 & $0.215 \pm 0.055$ & $0.202 \pm 0.021$ & $1.722 \pm 0.900$ & -- & -- & -- \\
Regret-and-MSE & 0 & 0.5 & $0.214 \pm 0.057$ & $0.150 \pm 0.047$ & $1.334 \pm 0.638$ & -- & -- & -- \\
Regret-and-MSE & 0 & 1 & $0.215 \pm 0.057$ & $0.143 \pm 0.070$ & $1.122 \pm 0.495$ & $0.219 \pm 0.053$ & $0.191 \pm 0.097$ & $1.124 \pm 0.512$ \\
FDFL-FPLG & 0 & -- & $0.214 \pm 0.051$ & $0.131 \pm 0.086$ & $1.066 \pm 0.435$ & $\mathbf{0.213 \pm 0.048}$ & $0.179 \pm 0.081$ & $1.025 \pm 0.427$ \\
\midrule
FDFL-Scal & 1 & 0.1 & $0.216 \pm 0.057$ & $0.124 \pm 0.038$ & $1.467 \pm 0.787$ & -- & -- & -- \\
FDFL-Scal & 1 & 0.5 & $0.213 \pm 0.058$ & $0.110 \pm 0.029$ & $1.285 \pm 0.591$ & -- & -- & -- \\
FDFL-Scal & 1 & 1 & $0.215 \pm 0.054$ & $0.113 \pm 0.057$ & $1.140 \pm 0.497$ & $0.218 \pm 0.047$ & $0.185 \pm 0.092$ & $1.135 \pm 0.524$ \\
FDFL-FPLG & 1 & -- & $0.217 \pm 0.051$ & $\mathbf{0.089 \pm 0.036}$ & $1.174 \pm 0.531$ & $0.217 \pm 0.050$ & $\mathbf{0.166 \pm 0.081}$ & $1.112 \pm 0.515$ \\
FDFL-PCGrad & -- & -- & $0.211 \pm 0.051$ & $0.126 \pm 0.064$ & $1.040 \pm 0.437$ & $0.220 \pm 0.043$ & $0.174 \pm 0.076$ & $0.981 \pm 0.396$ \\
FDFL-MGDA & -- & -- & $0.238 \pm 0.045$ & $0.098 \pm 0.048$ & $1.122 \pm 0.504$ & $0.249 \pm 0.072$ & $0.177 \pm 0.078$ & $1.112 \pm 0.465$ \\
FDFL-NashMTL & -- & -- & $\mathbf{0.209 \pm 0.058}$ & $0.230 \pm 0.126$ & $0.963 \pm 0.382$ & $0.214 \pm 0.048$ & $0.197 \pm 0.087$ & $0.937 \pm 0.377$ \\
\bottomrule
\end{tabular}
\end{table}

%% file: Tables/tab_app_md_imb_a05.tex
\begin{table}[htbp]
\centering
\caption{Multiple resource allocation, full method pool across group imbalance $\ell$ at $\alpha{=}0.5$ (MLP-64, $K{=}2$).}
\label{tab:app-md-imb-a05}
\scriptsize
\setlength{\tabcolsep}{3pt}
\resizebox{\textwidth}{!}{%
\begin{tabular}{l cc ccc ccc ccc ccc ccc}
\toprule
 & & & \multicolumn{3}{c}{$\ell{=}0$} & \multicolumn{3}{c}{$\ell{=}0.2$} & \multicolumn{3}{c}{$\ell{=}0.4$} & \multicolumn{3}{c}{$\ell{=}0.6$} & \multicolumn{3}{c}{$\ell{=}0.8$} \\
\cmidrule(lr){4-6}\cmidrule(lr){7-9}\cmidrule(lr){10-12}\cmidrule(lr){13-15}\cmidrule(lr){16-18}
Method & $\lambda$ & $\mu$ & Reg & MAD & MSE & Reg & MAD & MSE & Reg & MAD & MSE & Reg & MAD & MSE & Reg & MAD & MSE \\
\midrule
PTO & 0 & -- & $0.013 \pm 0.001$ & $0.043 \pm 0.012$ & $0.567 \pm 0.233$ & $0.014 \pm 0.001$ & $0.075 \pm 0.038$ & $0.677 \pm 0.276$ & $0.011 \pm 0.003$ & $0.144 \pm 0.078$ & $0.810 \pm 0.322$ & $\mathbf{0.016 \pm 0.004}$ & $0.222 \pm 0.118$ & $\mathbf{0.960 \pm 0.369}$ & $0.023 \pm 0.006$ & $0.303 \pm 0.157$ & $\mathbf{1.129 \pm 0.418}$ \\
SAA & -- & -- & $0.098 \pm 0.019$ & $0.187 \pm 0.085$ & $2.408 \pm 1.055$ & $0.089 \pm 0.013$ & $0.207 \pm 0.079$ & $2.506 \pm 1.088$ & $0.047 \pm 0.012$ & $0.233 \pm 0.075$ & $2.628 \pm 1.127$ & $0.053 \pm 0.014$ & $0.260 \pm 0.072$ & $2.772 \pm 1.169$ & $0.065 \pm 0.015$ & $0.286 \pm 0.072$ & $2.936 \pm 1.215$ \\
WDRO & -- & -- & $\mathbf{0.012 \pm 0.001}$ & $0.043 \pm 0.014$ & $0.550 \pm 0.233$ & $0.014 \pm 0.002$ & $0.070 \pm 0.042$ & $0.664 \pm 0.278$ & $\mathbf{0.011 \pm 0.003}$ & $0.127 \pm 0.077$ & $\mathbf{0.802 \pm 0.326}$ & $0.016 \pm 0.004$ & $0.194 \pm 0.115$ & $0.961 \pm 0.375$ & $0.023 \pm 0.006$ & $0.263 \pm 0.151$ & $1.140 \pm 0.430$ \\
DFL & 0 & 0 & $0.013 \pm 0.001$ & $0.077 \pm 0.032$ & $0.945 \pm 0.526$ & $0.014 \pm 0.002$ & $0.106 \pm 0.055$ & $1.094 \pm 0.658$ & $0.011 \pm 0.003$ & $0.116 \pm 0.050$ & $1.410 \pm 0.893$ & $0.021 \pm 0.007$ & $0.562 \pm 0.247$ & $3.088 \pm 1.453$ & $0.030 \pm 0.009$ & $0.904 \pm 0.297$ & $4.004 \pm 1.644$ \\
\midrule
FPTO & 1 & -- & $0.013 \pm 0.001$ & $0.043 \pm 0.013$ & $0.569 \pm 0.232$ & $0.014 \pm 0.001$ & $0.069 \pm 0.035$ & $0.685 \pm 0.282$ & $0.011 \pm 0.003$ & $0.105 \pm 0.056$ & $0.840 \pm 0.342$ & $0.017 \pm 0.004$ & $0.135 \pm 0.076$ & $1.017 \pm 0.405$ & $0.024 \pm 0.006$ & $0.158 \pm 0.092$ & $1.213 \pm 0.473$ \\
Regret-and-MAD & 1 & 0 & $0.013 \pm 0.002$ & $0.043 \pm 0.011$ & $0.597 \pm 0.259$ & $0.014 \pm 0.002$ & $\mathbf{0.059 \pm 0.023}$ & $0.773 \pm 0.353$ & $0.012 \pm 0.003$ & $\mathbf{0.074 \pm 0.029}$ & $1.001 \pm 0.471$ & $0.021 \pm 0.007$ & $\mathbf{0.090 \pm 0.029}$ & $1.368 \pm 0.593$ & $0.027 \pm 0.008$ & $0.121 \pm 0.034$ & $1.668 \pm 0.712$ \\
Regret-and-MSE & 0 & 0.1 & $0.013 \pm 0.001$ & $0.055 \pm 0.016$ & $0.663 \pm 0.299$ & $0.014 \pm 0.002$ & $0.072 \pm 0.035$ & $0.805 \pm 0.387$ & $0.012 \pm 0.004$ & $0.095 \pm 0.056$ & $1.008 \pm 0.469$ & $0.018 \pm 0.006$ & $0.193 \pm 0.062$ & $1.619 \pm 0.738$ & $0.024 \pm 0.006$ & $0.347 \pm 0.119$ & $2.063 \pm 0.922$ \\
Regret-and-MSE & 0 & 0.5 & $0.012 \pm 0.002$ & $0.042 \pm 0.011$ & $0.565 \pm 0.241$ & $0.014 \pm 0.002$ & $0.071 \pm 0.037$ & $0.684 \pm 0.291$ & $0.011 \pm 0.003$ & $0.131 \pm 0.074$ & $0.858 \pm 0.343$ & $0.019 \pm 0.006$ & $0.147 \pm 0.088$ & $1.183 \pm 0.440$ & $0.026 \pm 0.007$ & $0.149 \pm 0.058$ & $1.452 \pm 0.502$ \\
Regret-and-MSE & 0 & 1 & $0.013 \pm 0.002$ & $0.044 \pm 0.013$ & $0.585 \pm 0.245$ & $0.014 \pm 0.002$ & $0.074 \pm 0.039$ & $0.689 \pm 0.285$ & $0.012 \pm 0.004$ & $0.134 \pm 0.081$ & $0.854 \pm 0.343$ & $0.018 \pm 0.005$ & $0.166 \pm 0.095$ & $1.079 \pm 0.403$ & $0.024 \pm 0.006$ & $0.192 \pm 0.121$ & $1.237 \pm 0.448$ \\
FDFL-FPLG & 0 & -- & $0.013 \pm 0.002$ & $0.053 \pm 0.017$ & $0.640 \pm 0.289$ & $0.014 \pm 0.002$ & $0.069 \pm 0.038$ & $0.819 \pm 0.445$ & $0.012 \pm 0.004$ & $0.087 \pm 0.031$ & $1.146 \pm 0.657$ & $0.020 \pm 0.006$ & $0.308 \pm 0.221$ & $1.972 \pm 0.829$ & $0.025 \pm 0.007$ & $0.333 \pm 0.187$ & $1.985 \pm 0.803$ \\
\midrule
FDFL-Scal & 1 & 0.1 & $0.013 \pm 0.002$ & $0.047 \pm 0.012$ & $0.618 \pm 0.263$ & $0.014 \pm 0.002$ & $0.063 \pm 0.026$ & $0.758 \pm 0.339$ & $0.012 \pm 0.004$ & $0.082 \pm 0.034$ & $0.977 \pm 0.438$ & $0.019 \pm 0.006$ & $0.092 \pm 0.030$ & $1.266 \pm 0.541$ & $0.024 \pm 0.006$ & $\mathbf{0.104 \pm 0.025}$ & $1.467 \pm 0.614$ \\
FDFL-Scal & 1 & 0.5 & $0.013 \pm 0.001$ & $0.042 \pm 0.011$ & $0.562 \pm 0.238$ & $0.014 \pm 0.001$ & $0.063 \pm 0.029$ & $0.696 \pm 0.298$ & $0.011 \pm 0.003$ & $0.087 \pm 0.049$ & $0.885 \pm 0.364$ & $0.019 \pm 0.006$ & $0.116 \pm 0.058$ & $1.216 \pm 0.466$ & $0.026 \pm 0.008$ & $0.119 \pm 0.041$ & $1.458 \pm 0.565$ \\
FDFL-Scal & 1 & 1 & $0.013 \pm 0.002$ & $0.044 \pm 0.013$ & $0.582 \pm 0.246$ & $0.014 \pm 0.002$ & $0.070 \pm 0.037$ & $0.694 \pm 0.291$ & $0.012 \pm 0.004$ & $0.103 \pm 0.062$ & $0.871 \pm 0.357$ & $0.017 \pm 0.006$ & $0.119 \pm 0.064$ & $1.106 \pm 0.436$ & $0.024 \pm 0.006$ & $0.126 \pm 0.069$ & $1.283 \pm 0.501$ \\
FDFL-FPLG & 1 & -- & $0.013 \pm 0.001$ & $0.044 \pm 0.011$ & $0.599 \pm 0.255$ & $0.014 \pm 0.002$ & $0.064 \pm 0.031$ & $0.795 \pm 0.387$ & $0.012 \pm 0.004$ & $0.074 \pm 0.030$ & $1.036 \pm 0.518$ & $0.019 \pm 0.006$ & $0.099 \pm 0.034$ & $1.384 \pm 0.646$ & $0.025 \pm 0.006$ & $0.125 \pm 0.045$ & $1.574 \pm 0.718$ \\
FDFL-PCGrad & -- & -- & $0.012 \pm 0.002$ & $0.042 \pm 0.013$ & $0.553 \pm 0.233$ & $0.014 \pm 0.002$ & $0.072 \pm 0.038$ & $0.686 \pm 0.295$ & $0.012 \pm 0.004$ & $0.089 \pm 0.052$ & $0.891 \pm 0.377$ & $0.019 \pm 0.005$ & $0.129 \pm 0.072$ & $1.174 \pm 0.491$ & $0.026 \pm 0.009$ & $0.154 \pm 0.095$ & $1.355 \pm 0.569$ \\
FDFL-MGDA & -- & -- & $0.013 \pm 0.002$ & $0.042 \pm 0.017$ & $0.635 \pm 0.287$ & $0.015 \pm 0.002$ & $0.070 \pm 0.036$ & $0.715 \pm 0.313$ & $0.012 \pm 0.004$ & $0.098 \pm 0.049$ & $0.905 \pm 0.386$ & $0.018 \pm 0.006$ & $0.119 \pm 0.072$ & $1.091 \pm 0.452$ & $0.024 \pm 0.007$ & $0.146 \pm 0.082$ & $1.273 \pm 0.516$ \\
FDFL-NashMTL & -- & -- & $0.012 \pm 0.001$ & $\mathbf{0.041 \pm 0.012}$ & $\mathbf{0.548 \pm 0.235}$ & $\mathbf{0.014 \pm 0.002}$ & $0.075 \pm 0.039$ & $\mathbf{0.661 \pm 0.282}$ & $0.011 \pm 0.003$ & $0.141 \pm 0.075$ & $0.814 \pm 0.335$ & $0.017 \pm 0.005$ & $0.212 \pm 0.111$ & $0.999 \pm 0.376$ & $\mathbf{0.023 \pm 0.006}$ & $0.288 \pm 0.151$ & $1.154 \pm 0.432$ \\
\bottomrule
\end{tabular}
}
\end{table}

%% file: sample-base.bib
@String{Computing = "Computing" }

@String{Computer = "{IEEE} Computer" }

@String{Chelsea = "Chelsea" }

@String{Springer = "Springer-Verlag" }

@inproceedings{wilder2019melding,
  title={Melding the data-decisions pipeline: Decision-focused learning for combinatorial optimization},
  author={Wilder, Bryan and Dilkina, Bistra and Tambe, Milind},
  booktitle={Proceedings of the AAAI Conference on Artificial Intelligence},
  volume={33},
  pages={1658--1665},
  year={2019}
}

@article{donti2017task,
  title={Task-based end-to-end model learning in stochastic optimization},
  author={Donti, Priya and Amos, Brandon and Kolter, J Zico},
  journal={Advances in neural information processing systems},
  volume={30},
  year={2017}
}

@inproceedings{amos2017optnet,
  title={Optnet: Differentiable optimization as a layer in neural networks},
  author={Amos, Brandon and Kolter, J Zico},
  booktitle={International conference on machine learning},
  pages={136--145},
  year={2017},
  organization={PMLR}
}

@article{kuppler2022fair,
  title={From fair predictions to just decisions? Conceptualizing algorithmic fairness and distributive justice in the context of data-driven decision-making},
  author={Kuppler, Matthias and Kern, Christoph and Bach, Ruben L and Kreuter, Frauke},
  journal={Frontiers in sociology},
  volume={7},
  pages={883999},
  year={2022},
  publisher={Frontiers Media SA}
}

@inproceedings{kotary2022end,
  title={End-to-end learning for fair ranking systems},
  author={Kotary, James and Fioretto, Ferdinando and Van Hentenryck, Pascal and Zhu, Ziwei},
  booktitle={Proceedings of the ACM Web Conference 2022},
  pages={3520--3530},
  year={2022}
}

@article{mehrabi2021survey,
  title={A survey on bias and fairness in machine learning},
  author={Mehrabi, Ninareh and Morstatter, Fred and Saxena, Nripsuta and Lerman, Kristina and Galstyan, Aram},
  journal={ACM computing surveys (CSUR)},
  volume={54},
  number={6},
  pages={1--35},
  year={2021},
  publisher={ACM New York, NY, USA}
}

@Article{Chen2023,
  author={Violet Xinying Chen and J. N. Hooker},
  title={{A guide to formulating fairness in an optimization model}},
  journal={Annals of Operations Research},
  year=2023,
  volume={326},
  number={1},
  pages={581-619},
  month={July},
}

@article{spo2020,
  title={Smart “predict, then optimize”},
  author={Elmachtoub, Adam N and Grigas, Paul},
  journal={Management Science},
  volume={68},
  number={1},
  pages={9--26},
  year={2022},
  publisher={INFORMS}
}

@article{mo2000fair,
  title={Fair end-to-end window-based congestion control},
  author={Mo, Jeonghoon and Walrand, Jean},
  journal={IEEE/ACM Transactions on networking},
  volume={8},
  number={5},
  pages={556--567},
  year={2000},
  publisher={IEEE}
}

@article{elmachtoub2023estimate,
  title={Estimate-Then-Optimize versus Integrated-Estimation-Optimization versus Sample Average Approximation: A Stochastic Dominance Perspective},
  author={Elmachtoub, Adam N and Lam, Henry and Zhang, Haofeng and Zhao, Yunfan},
  journal={arXiv preprint arXiv:2304.06833},
  year={2023}
}

@article{huang2024decision,
  title={Decision-focused learning with directional gradients},
  author={Huang, Michael and Gupta, Vishal},
  journal={Advances in Neural Information Processing Systems},
  volume={37},
  pages={79194--79220},
  year={2024}
}

@article{scantamburlo2024prediction,
  title={On prediction-modelers and decision-makers: why fairness requires more than a fair prediction model},
  author={Scantamburlo, Teresa and Baumann, Joachim and Heitz, Christoph},
  journal={AI \& SOCIETY},
  pages={1--17},
  year={2024},
  publisher={Springer}
}

@inproceedings{kotaryfoldopt2023,
author = {Kotary, James and Dinh, My H and Fioretto, Ferdinando},
title = {Backpropagation of unrolled solvers with folded optimization},
year = {2023},
isbn = {978-1-956792-03-4},
booktitle = {Proceedings of the Thirty-Second International Joint Conference on Artificial Intelligence},
articleno = {218},
numpages = {8},
location = {Macao, P.R.China},
series = {IJCAI '23}
}

@article{corbett2023measure,
  title={The measure and mismeasure of fairness},
  author={Corbett-Davies, Sam and Gaebler, Johann D and Nilforoshan, Hamed and Shroff, Ravi and Goel, Sharad},
  journal={The Journal of Machine Learning Research},
  volume={24},
  number={1},
  pages={14730--14846},
  year={2023},
  publisher={JMLRORG}
}

@article{shah2022decision,
  title={Decision-focused learning without decision-making: Learning locally optimized decision losses},
  author={Shah, Sanket and Wang, Kai and Wilder, Bryan and Perrault, Andrew and Tambe, Milind},
  journal={Advances in Neural Information Processing Systems},
  volume={35},
  pages={1320--1332},
  year={2022}
}

@article{agrawal2019differentiable,
  title={Differentiable convex optimization layers},
  author={Agrawal, Akshay and Amos, Brandon and Barratt, Shane and Boyd, Stephen and Diamond, Steven and Kolter, J Zico},
  journal={Advances in neural information processing systems},
  volume={32},
  year={2019}
}

@article{zharmagambetov2023landscape,
  title={Landscape surrogate: Learning decision losses for mathematical optimization under partial information},
  author={Zharmagambetov, Arman and Amos, Brandon and Ferber, Aaron and Huang, Taoan and Dilkina, Bistra and Tian, Yuandong},
  journal={Advances in Neural Information Processing Systems},
  volume={36},
  pages={27332--27350},
  year={2023}
}

@article{
doi:10.1126/science.aax2342,
author = {Ziad Obermeyer  and Brian Powers  and Christine Vogeli  and Sendhil Mullainathan },
title = {Dissecting racial bias in an algorithm used to manage the health of populations},
journal = {Science},
volume = {366},
number = {6464},
pages = {447-453},
year = {2019},
eprint = {https://www.science.org/doi/pdf/10.1126/science.aax2342}}

@article{berk2017convex,
  title={A convex framework for fair regression},
  author={Berk, Richard and Heidari, Hoda and Jabbari, Shahin and Joseph, Matthew and Kearns, Michael and Morgenstern, Jamie and Neel, Seth and Roth, Aaron},
  journal={arXiv preprint arXiv:1706.02409},
  year={2017}
}

@inproceedings{agarwal2019fair,
  title={Fair regression: Quantitative definitions and reduction-based algorithms},
  author={Agarwal, Alekh and Dud{\'\i}k, Miroslav and Wu, Zhiwei Steven},
  booktitle={International Conference on Machine Learning},
  pages={120--129},
  year={2019},
  organization={PMLR}
}

@inproceedings{liu2018delayed,
  title={Delayed impact of fair machine learning},
  author={Liu, Lydia T and Dean, Sarah and Rolf, Esther and Simchowitz, Max and Hardt, Moritz},
  booktitle={International Conference on Machine Learning},
  pages={3150--3158},
  year={2018},
  organization={PMLR}
}

@article{chohlas2024learning,
  title={Learning to Be Fair: A Consequentialist Approach to Equitable Decision Making},
  author={Chohlas-Wood, Alex and Coots, Madison and Zhu, Henry and Brunskill, Emma and Goel, Sharad},
  journal={Management Science},
  year={2024},
  publisher={INFORMS}
}

@inproceedings{dinh2024learning,
  title={Learning Fair Ranking Policies via Differentiable Optimization of Ordered Weighted Averages},
  author={Dinh, My H and Kotary, James and Fioretto, Ferdinando},
  booktitle={The 2024 ACM Conference on Fairness, Accountability, and Transparency},
  pages={2508--2517},
  year={2024}
}

@inproceedings{dinh2024end,
  title={End-to-End Learning for Fair Multiobjective Optimization Under Uncertainty},
  author={Dinh, My H and Kotary, James and Fioretto, Ferdinando},
  booktitle={Uncertainty in Artificial Intelligence},
  pages={1129--1145},
  year={2024},
  organization={PMLR}
}

@article{bertsimas2020predictive,
  title={From predictive to prescriptive analytics},
  author={Bertsimas, Dimitris and Kallus, Nathan},
  journal={Management Science},
  volume={66},
  number={3},
  pages={1025--1044},
  year={2020},
  publisher={INFORMS}
}

@article{qi2023practical,
  title={A practical end-to-end inventory management model with deep learning},
  author={Qi, Meng and Shi, Yuanyuan and Qi, Yongzhi and Ma, Chenxin and Yuan, Rong and Wu, Di and Shen, Zuo-Jun},
  journal={Management Science},
  volume={69},
  number={2},
  pages={759--773},
  year={2023},
  publisher={INFORMS}
}

@article{paulus2020predictably,
  title={Predictably unequal: understanding and addressing concerns that algorithmic clinical prediction may increase health disparities},
  author={Paulus, Jessica K and Kent, David M},
  journal={NPJ digital medicine},
  volume={3},
  number={1},
  pages={99},
  year={2020},
  publisher={Nature Publishing Group UK London}
}

@article{chung2022decision,
  title={Decision-aware learning for optimizing health supply chains},
  author={Chung, Tsai-Hsuan and Rostami, Vahid and Bastani, Hamsa and Bastani, Osbert},
  journal={arXiv preprint arXiv:2211.08507},
  year={2022}
}

@inproceedings{chouldechova2018case,
  title={A case study of algorithm-assisted decision making in child maltreatment hotline screening decisions},
  author={Chouldechova, Alexandra and Benavides-Prado, Diana and Fialko, Oleksandr and Vaithianathan, Rhema},
  booktitle={Conference on fairness, accountability and transparency},
  pages={134--148},
  year={2018},
  organization={PMLR}
}

@article{mandi2024decision,
  title={Decision-focused learning: Foundations, state of the art, benchmark and future opportunities},
  author={Mandi, Jayanta and Kotary, James and Berden, Senne and Mulamba, Maxime and Bucarey, Victor and Guns, Tias and Fioretto, Ferdinando},
  journal={Journal of Artificial Intelligence Research},
  volume={80},
  pages={1623--1701},
  year={2024}
}

@article{sadana2025survey,
  title={A survey of contextual optimization methods for decision-making under uncertainty},
  author={Sadana, Utsav and Chenreddy, Abhilash and Delage, Erick and Forel, Alexandre and Frejinger, Emma and Vidal, Thibaut},
  journal={European Journal of Operational Research},
  volume={320},
  number={2},
  pages={271--289},
  year={2025},
  publisher={Elsevier}
}

@inproceedings{yu2020gradient,
  title={Gradient surgery for multi-task learning},
  author={Yu, Tianhe and Kumar, Saurabh and Gupta, Abhishek and Levine, Sergey and Hausman, Karol and Finn, Chelsea},
  booktitle={Advances in Neural Information Processing Systems},
  volume={33},
  pages={5824--5836},
  year={2020}
}

@inproceedings{sener2018multi,
  title={Multi-task learning as multi-objective optimization},
  author={Sener, Ozan and Koltun, Vladlen},
  booktitle={Advances in Neural Information Processing Systems},
  volume={31},
  year={2018}
}

@inproceedings{navon2022multi,
  title={Multi-task learning as a bargaining game},
  author={Navon, Aviv and Shamsian, Aviv and Achituve, Idan and Maron, Haggai and Kawaguchi, Kenji and Chechik, Gal and Fetaya, Ethan},
  booktitle={International Conference on Machine Learning},
  pages={16428--16446},
  year={2022},
  organization={PMLR}
}

@article{gao2024wasserstein,
  title={Wasserstein distributionally robust optimization and variation regularization},
  author={Gao, Rui and Chen, Xi and Kleywegt, Anton J.},
  journal={Operations Research},
  volume={72},
  number={3},
  pages={1177--1191},
  year={2024},
  publisher={INFORMS}
}

@inproceedings{sinha2018certifying,
  title={Certifying some distributional robustness with principled adversarial training},
  author={Sinha, Aman and Namkoong, Hongseok and Duchi, John},
  booktitle={International Conference on Learning Representations},
  year={2018}
}

@article{jeon2025plg,
  title={Prediction loss guided decision-focused learning},
  author={Jeon, Haeun and Bae, Hyunglip and Kim, Chanyeong and Lee, Yongjae and Kim, Woo Chang},
  journal={arXiv preprint arXiv:2509.08359},
  year={2025}
}

@book{mohri2018foundations,
  author    = {Mohri, Mehryar and Rostamizadeh, Afshin and Talwalkar, Ameet},
  title     = {Foundations of Machine Learning},
  edition   = {2nd},
  publisher = {MIT Press},
  year      = {2018}
}

@article{kannan2024residuals,
  title={Residuals-based distributionally robust optimization with covariate information},
  author={Kannan, Rohit and Bayraksan, G{\"u}zin and Luedtke, James R},
  journal={Mathematical Programming},
  volume={207},
  number={1-2},
  pages={369--425},
  year={2024},
  publisher={Springer Berlin Heidelberg Berlin/Heidelberg}
}

@inproceedings{elmachtoub2025dissecting,
  title={Dissecting the Impact of Model Misspecification in Data-Driven Optimization},
  author={Elmachtoub, Adam N and Lam, Henry and Lan, Haixiang and Zhang, Haofeng},
  booktitle={International Conference on Artificial Intelligence and Statistics},
  pages={1594--1602},
  year={2025},
  organization={PMLR}
}

@inproceedings{kendall2018multi,
  title={Multi-task learning using uncertainty to weigh losses for scene geometry and semantics},
  author={Kendall, Alex and Gal, Yarin and Cipolla, Roberto},
  booktitle={Proceedings of the IEEE conference on computer vision and pattern recognition},
  pages={7482--7491},
  year={2018}
}

@article{hu2023revisiting,
  title={Revisiting scalarization in multi-task learning: A theoretical perspective},
  author={Hu, Yuzheng and Xian, Ruicheng and Wu, Qilong and Fan, Qiuling and Yin, Lang and Zhao, Han},
  journal={Advances in Neural Information Processing Systems},
  volume={36},
  pages={48510--48533},
  year={2023}
}

@article{desideri2012multiple,
  title={Multiple-gradient descent algorithm (MGDA) for multiobjective optimization},
  author={D{\'e}sid{\'e}ri, Jean-Antoine},
  journal={Comptes Rendus. Math{\'e}matique},
  volume={350},
  number={5-6},
  pages={313--318},
  year={2012}
}

@article{kurin2022defense,
  title={In defense of the unitary scalarization for deep multi-task learning},
  author={Kurin, Vitaly and De Palma, Alessandro and Kostrikov, Ilya and Whiteson, Shimon and Mudigonda, Pawan K},
  journal={Advances in Neural Information Processing Systems},
  volume={35},
  pages={12169--12183},
  year={2022}
}

@article{xin2022current,
  title={Do current multi-task optimization methods in deep learning even help?},
  author={Xin, Derrick and Ghorbani, Behrooz and Gilmer, Justin and Garg, Ankush and Firat, Orhan},
  journal={Advances in neural information processing systems},
  volume={35},
  pages={13597--13609},
  year={2022}
}

@article{chen2024three,
  title={Three-way trade-off in multi-objective learning: Optimization, generalization and conflict-avoidance},
  author={Chen, Lisha and Fernando, Heshan and Ying, Yiming and Chen, Tianyi},
  journal={Journal of Machine Learning Research},
  volume={25},
  number={193},
  pages={1--53},
  year={2024}
}

@article{agrawal2019cone,
  author  = {Agrawal, Akshay and Barratt, Shane and Boyd, Stephen and Busseti, Enzo and Moursi, Walaa M.},
  title   = {Differentiating through a cone program},
  journal = {Journal of Applied and Numerical Optimization},
  volume  = {1},
  number  = {2},
  pages   = {107--115},
  year    = {2019}
}

@article{busseti2019solution,
  author  = {Busseti, Enzo and Moursi, Walaa M. and Boyd, Stephen},
  title   = {Solution refinement at regular points of conic problems},
  journal = {Computational Optimization and Applications},
  volume  = {74},
  pages   = {627--643},
  year    = {2019}
}

@misc{sen2018gentle,
  author = {Sen, Bodhisattva},
  title  = {A Gentle Introduction to Empirical Process Theory and Applications},
  year   = {2018},
  note   = {Lecture notes, Columbia University.}
}
